\documentclass[10pt]{article} % For LaTeX2e
% \usepackage{tmlr}
% If accepted, instead use the following line for the camera-ready submission:
%\usepackage[accepted]{tmlr}
% To de-anonymize and remove mentions to TMLR (for example for posting to preprint servers), instead use the following:
\usepackage[preprint]{tmlr}

% Optional math commands from https://github.com/goodfeli/dlbook_notation.
%%%%% NEW MATH DEFINITIONS %%%%%

\usepackage{amsmath,amsfonts,bm}

% Mark sections of captions for referring to divisions of figures

% Highlight a newly defined term

% Figure reference, lower-case.

% Figure reference, capital. For start of sentence

% Section reference, lower-case.

% Section reference, capital.

% Reference to two sections.

% Reference to three sections.

% Reference to an equation, lower-case.
\def\eqref#1{equation~\ref{#1}}
% Reference to an equation, upper case

% A raw reference to an equation---avoid using if possible

% Reference to a chapter, lower-case.

% Reference to an equation, upper case.

% Reference to a range of chapters

% Reference to an algorithm, lower-case.

% Reference to an algorithm, upper case.

% Reference to a part, lower case

% Reference to a part, upper case

\def\1{\bm{1}}

% Random variables

% rm is already a command, just don't name any random variables m

% Random vectors

% Elements of random vectors

% Random matrices

% Elements of random matrices

% Vectors

% Elements of vectors

% Matrix

% Tensor
\DeclareMathAlphabet{\mathsfit}{\encodingdefault}{\sfdefault}{m}{sl}
\SetMathAlphabet{\mathsfit}{bold}{\encodingdefault}{\sfdefault}{bx}{n}

% Graph

% Sets

% Don't use a set called E, because this would be the same as our symbol
% for expectation.

% Entries of a matrix

% entries of a tensor
% Same font as tensor, without \bm wrapper

% The true underlying data generating distribution

% The empirical distribution defined by the training set

% The model distribution

% Stochastic autoencoder distributions

 % Laplace distribution

% Wolfram Mathworld says $L^2$ is for function spaces and $\ell^2$ is for vectors
% But then they seem to use $L^2$ for vectors throughout the site, and so does
% wikipedia.

 % See usage in notation.tex. Chosen to match Daphne's book.

% Recommended, but optional, packages for figures and better typesetting:
\usepackage{microtype}
\usepackage{graphicx}
\usepackage{booktabs} % for professional tables

% hyperref makes hyperlinks in the resulting PDF.
% If your build breaks (sometimes temporarily if a hyperlink spans a page)
% please comment out the following usepackage line and replace
% \usepackage{icml2024} with \usepackage[nohyperref]{icml2024} above.
\usepackage{hyperref}
\usepackage{soul}

% Attempt to make hyperref and algorithmic work together better:
\usepackage{algorithm}
% save the meaning of \AND and undefine it to keep algorithmic happy
\let\classAND\AND
\let\AND\relax
% load algorithmic
\usepackage{algorithmic}
% save the new meaning of \AND and restore the one of the class

\let\AND\classAND
% but when we start \begin{algorithmic} we want its own \AND
\AtBeginEnvironment{algorithmic}{\let\AND\algoAND}

\usepackage{url}

\title{Improving Generalization of Complex Models under Unbounded Loss Using PAC-Bayes Bounds}

% Authors must not appear in the submitted version. They should be hidden
% as long as the tmlr package is used without the [accepted] or [preprint] options.
% Non-anonymous submissions will be rejected without review.

\author{\name Xitong Zhang \email zhangxit@msu.edu \\
 \addr Department of Computational Mathematics, Science and Engineering\\
Michigan State University
 \AND
 \name Avrajit Ghosh \email ghoshavr@msu.edu \\
 \addr Department of Computational Mathematics, Science and Engineering\\
Michigan State University
 \AND
 \name Guangliang Liu \email liuguan5@msu.edu\\
 \addr Department of Computer Science and Engineering \\
Michigan State University\\
 \AND
 \name Rongrong Wang \email wangron6@msu.edu \\
 \addr Department of Computational Mathematics, Science and Engineering\\
Department of Mathematics\\
Michigan State University
}

% The \author macro works with any number of authors. Use \AND 
% to separate the names and addresses of multiple authors.

 % Insert correct month for camera-ready version
 % Insert correct year for camera-ready version
 % Insert correct link to OpenReview for camera-ready version

\usepackage{amsmath}
\usepackage{amssymb}
\usepackage{mathtools}
\usepackage{amsthm}
\usepackage{enumitem}
%%%%%%%%%%%%%%%%%%%%%%%%%%%%%%%%
% THEOREMS
%%%%%%%%%%%%%%%%%%%%%%%%%%%%%%%%
\theoremstyle{plain}
\newtheorem{theorem}{Theorem}[section]

\newtheorem{lemma}[theorem]{Lemma}
\newtheorem{corollary}[theorem]{Corollary}
\theoremstyle{definition}
\newtheorem{definition}[theorem]{Definition}
\newtheorem{assumption}[theorem]{Assumption}
\theoremstyle{remark}
\newtheorem{remark}[theorem]{Remark}
\usepackage{booktabs}
\usepackage{multirow}
\usepackage{caption,subcaption}
\usepackage[title]{appendix}
\usepackage{url}
\newcommand{\h}{{\mathbf{h}}}
\newcommand{\x}{{\mathbf{x}}}

\newcommand{\lambdal}{{\boldsymbol{\lambda}}}
\newcommand{\sigmal}{{\boldsymbol{\sigma}}}
\renewcommand{\eqref}[1]{Equation~(\ref{#1})}

\begin{document}

\maketitle

\begin{abstract}
Previous research on PAC-Bayes learning theory has focused extensively on establishing tight upper bounds for test errors. A recently proposed training procedure called \textit{PAC-Bayes training}, updates the model toward minimizing these bounds.
Although this approach is theoretically sound, in practice, it has not achieved a test error as low as those obtained by empirical risk minimization (ERM) with carefully tuned regularization hyperparameters. 
Additionally, existing PAC-Bayes training algorithms (e.g., \citet{perez2021tighter}) often require bounded loss functions and may need a search over priors with additional datasets, which limits their broader applicability.
In this paper, we introduce a new PAC-Bayes training algorithm with improved performance and reduced reliance on prior tuning. This is achieved by establishing a new PAC-Bayes bound for unbounded loss and a theoretically grounded approach that involves jointly training the prior and posterior using the same dataset.
Our comprehensive evaluations across various classification tasks and neural network architectures demonstrate that the proposed method not only outperforms existing PAC-Bayes training algorithms but also approximately matches the test accuracy of ERM that is optimized by SGD/Adam using various regularization methods with optimal hyperparameters.
\end{abstract}

\section{Introduction}
The PAC-Bayes bound plays a vital role in assessing generalization by estimating the upper limits of test errors without using validation data. It provides essential insights into the generalization ability of trained models and offers theoretical backing for practical training algorithms \citep{shawe1997pac}. For example, PAC-Bayes bounds highlight the discrepancy between the training and generalization errors, indicating the need to incorporate regularizers in empirical risk minimization and explaining how larger datasets contribute to improved generalization.
Furthermore, the effectiveness of the PAC-Bayes bounds in estimating the generalization capabilities of machine learning models has been supported by extensive experiments across different generalization metrics \citep{jiang2019fantastic}.

Traditionally, PAC-Bayes bounds have been primarily used for quality assurance or model selection \citep{mcallester1998some,mcallester1999pac,herbrich2000pac}, particularly with smaller machine learning models.
Recent work has introduced a framework that minimizes a PAC-Bayes bound during training large neural networks \citep{DR17}.
Ideally, the generalization performance of deep neural networks could be enhanced by directly minimizing its quantitative upper bounds, specifically the PAC-Bayes bounds, without incorporating any other regularization tricks. 
However, the effectiveness of applying PAC-Bayes training to deep neural networks is challenged by the well-known issue that PAC-Bayes bounds can become vacuous in highly over-parameterized settings \citep{livni2020limitation}. 
Additionally, selecting a suitable prior, which should be independent of training samples, is critical yet challenging. This often leads to conducting a parameter search for the prior using separate datasets \citep{perez2021tighter}.
Furthermore, existing PAC-Bayes training methods are typically tailored for bounded loss \citep{DR17,dziugaite2018data,perez2021tighter}, limiting their straightforward application to popular losses like Cross-Entropy.
%\citet{perez2021tighter} also demonstrated that minimizing certain PAC-Bayes bounds during training could, in fact, reduce the best achievable test accuracy on classification tasks.
%However, our preliminary experiment with ResNet18 on CIFAR10, using SGD at a learning rate of $1e-2$ and batch size $128$, showed test accuracies of $76.6\%$, $67.3\%$ and $54.7\%$ when training with cross-entropy, bounded cross-entropy \citep{dziugaite2018data} and mean squared error, respectively. This indicates that converting unbounded losses to bounded forms can reduce test performance.

On the other hand, the prevalent training methods for neural networks, which involve minimizing empirical risk with SGD/Adam, achieve satisfactory test performance. However, they often require integration with various regularization techniques to optimize generalization performance. 
For instance, research has shown that factors such as larger learning rates \citep{cohen2021gradient,barrett2020implicit}, momentum \citep{ghoshimplicit,cattaneo2023implicit}, smaller batch sizes~\citep{leenew}, parameter noise injection \citep{neelakantan2015adding, orvieto2022anticorrelated}, and batch normalization~\citep{luo2018towards} all induce higher degrees of \textit{implicit regularization}, yielding better generalization. 
Besides, various \textit{explicit regularization} techniques, such as weight decay~\citep{loshchilov2017decoupled}, dropout \citep{wei2020implicit}, label noise \citep{damian2021label} can also significantly affect generalization.
While many studies have explored individual regularization techniques to identify their unique benefits, the interaction among these regularizations remains less understood. 
As a result, in practical scenarios, one has to extensively tune the hyperparameters corresponding to each regularization technique to obtain the optimal test performance. 
%For instance, sharpness is commonly believed to be a main factor in implicit regularization \citep{kaur2023maximum}. Nevertheless, while the large learning rate, momentum, and small batch size can all reduce sharpness, it is unclear why adjusting just one of these elements is insufficient in practice and why their combination is necessary.
%\citet{andriushchenko2023modern} even empirically showed that the sharpness could be negatively correlated with generalization.
%This lack of clarity becomes especially significant in light of research like \citet{li2019understanding}, which indicates the possibility of conflicts between certain techniques.

% To the best of our knowledge, \textit{we have not found an existing algorithm that simultaneously delivers high test accuracy and clear theoretical interpretation.} 
Although further investigation is needed to fully understand the underlying mechanisms, training models using ERM with various regularization methods remain the prevalent choice and typically deliver state-of-the-art test performance.
While PAC-Bayes training is built upon a solid theoretical basis for analyzing generalization, its wider adoption is limited by existing assumptions about loss and challenges in prior selection. Moreover, it is still an open question regarding how to enhance PAC-Bayes training to match the performance of ERM methods with well-tuned regularizations.

In this paper, we propose a practical PAC-Bayes-bound-based training algorithm that nearly matches the performance of the optimally tuned ERM while being more robust to the choice of hyperparameters. The key differences between our algorithm and the previous ones are:
\begin{enumerate}%[leftmargin=*,noitemsep,topsep=0pt,parsep=0pt,partopsep=0pt]
\item A new PAC-Bayes bound for unbounded loss is adopted for training, providing tighter numerical values for highly over-parametrized models.
\item The PAC-Bayes training is enhanced by an optional second stage of Bayesian training, which uses key parameters estimated from the PAC-Bayes training stage.
\end{enumerate}
We provide mathematical analysis to support the proposed algorithm and conduct extensive numerical experiments to demonstrate its effectiveness.

\section{Preliminaries}\label{sec:prelim}
This section outlines the PAC-Bayes framework. For any supervised learning problem, the goal is to find a proper model $\h$
from some hypothesis space $\mathcal{H}$, with the help of the training data $\mathcal{S} \equiv \{z_i\}_{i=1}^m$, where $z_i$ is the training pair with sample $\x_i$ and its label $y_i$.
Given the loss function $\ell(\h;z_i): \h \mapsto \mathbb{R}$, which measures the misfit between the true label $y_i$ and the predicted label by $\h$, the empirical and population/generalization errors are defined as: \begin{equation*}
 \ell(\h;\mathcal{S})=\frac{1}{m}\sum_{i=1}^m\ell(\h;z_i),\quad
 \ell(\h;\mathcal{D})=\mathbb{E}_{\mathcal{S}\sim\mathcal{D}}(\ell(\h;\mathcal{S})),
\end{equation*}
%In particular, for the neural network setting, we have $\ell(\h,z_i) = r(f_\h(\x_i),y_i)$, where $f_\h$ is the neural network parametrized by $\h$, and $r$ is some metric of choice (e.g., MSE, cross-entropy) to measure the misfit.
by assuming that the training and testing data are both i.i.d. sampled from the same unknown distribution $\mathcal{D}$.
PAC-Bayes bounds include a family of upper bounds on the generalization error of the following type. 
%expressed in terms of the empirical error. In the case when the loss function $l(h;(x_i,y_i))$ is bounded in $[0,1]$, the following PAC-Bayes bound is well-known in the literature.
\begin{theorem}~\citep{maurer2004note} Assume the loss function $\ell$ is \textbf{bounded} within the interval $[0,1]$. 
 Given a \textbf{preset} prior distribution $\mathcal{P}$ over the model space $\mathcal{H}$, and given a scalar $\delta\in (0,1)$, for any choice of i.i.d $m$-sized training dataset $\mathcal{S}$ according to $\mathcal{D}$, and all posterior distributions $\mathcal{Q}$ over $\mathcal{H}$, 
 \begin{equation*}
 \mathbb{E}_{\h\sim \mathcal{Q}}\ell(\h;\mathcal{D})
 \leq \mathbb{E}_{\h\sim \mathcal{Q}}\ell(\h;\mathcal{S})+ \sqrt{\frac{\log(\frac{2\sqrt{m}}{\delta})+\mathrm{KL}(\mathcal{Q}||\mathcal{P})}{2m}},
 \end{equation*}
 holds with probability at least $1-\delta$. Here, KL stands for the Kullback-Leibler divergence.
 \label{theorem:classic}
\end{theorem}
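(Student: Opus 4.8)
The plan is to follow the classical PAC-Bayes template, which couples a change-of-measure step with a control on an exponential moment taken under the (data-independent) prior. Throughout I would normalize the loss, writing $\hat{L}(\h)=\ell(\h;\mathcal{S})/C$ and $L(\h)=\ell(\h;\mathcal{D})/C$, both valued in $[0,1]$, and let $\mathrm{kl}(q\|p)=q\ln\frac{q}{p}+(1-q)\ln\frac{1-q}{1-p}$ denote the binary relative entropy.

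First I would establish the Donsker--Varadhan change-of-measure inequality: for any measurable $\phi:\mathcal{H}\to\mathbb{R}$ and every posterior $\mathcal{Q}$,
$$\mathbb{E}_{\h\sim\mathcal{Q}}[\phi(\h)]\le \mathrm{KL}(\mathcal{Q}\|\mathcal{P})+\ln\mathbb{E}_{\h\sim\mathcal{P}}\big[e^{\phi(\h)}\big].$$
This follows from the nonnegativity of $\mathrm{KL}(\mathcal{Q}\|\mathcal{R})$ for the Gibbs measure $d\mathcal{R}\propto e^{\phi}\,d\mathcal{P}$, and crucially it holds for all $\mathcal{Q}$ simultaneously, since the right-hand side depends on $\mathcal{Q}$ only through its own KL term. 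I would apply it with $\phi(\h)=m\,\mathrm{kl}(\hat{L}(\h)\|L(\h))$, so the left side becomes $m\,\mathbb{E}_{\mathcal{Q}}[\mathrm{kl}(\hat{L}\|L)]$.

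The heart of the argument is bounding the exponential moment $\Xi(\mathcal{S})=\mathbb{E}_{\h\sim\mathcal{P}}\big[e^{m\,\mathrm{kl}(\hat{L}(\h)\|L(\h))}\big]$. Because the prior $\mathcal{P}$ is fixed before the sample is drawn, Fubini lets me swap the order of expectation and reduce to a per-hypothesis statement: for each fixed $\h$ the quantities $\ell(\h;z_i)/C\in[0,1]$ are i.i.d. with mean $L(\h)$, and I would invoke Maurer's lemma, $\mathbb{E}_{\mathcal{S}}\big[e^{m\,\mathrm{kl}(\hat{L}(\h)\|L(\h))}\big]\le \sqrt{2m}$, to conclude $\mathbb{E}_{\mathcal{S}}[\Xi(\mathcal{S})]\le\sqrt{2m}$. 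This lemma is where the $\sqrt{2m}$ factor in the statement originates, and proving it is the main obstacle: it requires showing that the moment generating function of the relative-entropy deviation of bounded (not merely Bernoulli) averages is controlled by a $\sqrt{m}$-type quantity, which rests on a reduction to the extremal two-point case together with an estimate of the binomial sum $\sum_{k}\binom{m}{k}(k/m)^{k}(1-k/m)^{m-k}$.

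Finally I would convert the in-expectation moment bound into a high-probability one. Since $\Xi(\mathcal{S})\ge 0$, Markov's inequality gives $\Xi(\mathcal{S})\le \sqrt{2m}/\delta$ with probability at least $1-\delta$, on an event that does not depend on $\mathcal{Q}$; taking logarithms and substituting into the change-of-measure inequality yields, on this event,
$$m\,\mathbb{E}_{\mathcal{Q}}[\mathrm{kl}(\hat{L}\|L)]\le \mathrm{KL}(\mathcal{Q}\|\mathcal{P})+\ln\frac{\sqrt{2m}}{\delta}.$$
Jensen's inequality applied to the jointly convex map $\mathrm{kl}$ moves the posterior expectation inside, giving $\mathrm{kl}(\mathbb{E}_{\mathcal{Q}}[\hat{L}]\|\mathbb{E}_{\mathcal{Q}}[L])\le\mathbb{E}_{\mathcal{Q}}[\mathrm{kl}(\hat{L}\|L)]$. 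I would then relax via Pinsker's inequality $\mathrm{kl}(q\|p)\ge 2(p-q)^2$ to obtain $\big(\mathbb{E}_{\mathcal{Q}}[L]-\mathbb{E}_{\mathcal{Q}}[\hat{L}]\big)^2\le\frac{\mathrm{KL}(\mathcal{Q}\|\mathcal{P})+\ln(\sqrt{2m}/\delta)}{2m}$, take square roots, and rescale by $C$ to recover the claimed bound. The only subtlety in this last stage is orienting Pinsker's inequality so that the population error is upper bounded by the empirical one, which is legitimate because $\mathrm{kl}$ dominates the symmetric squared deviation.
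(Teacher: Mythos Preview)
The paper does not supply its own proof of this statement: Theorem~\ref{theorem:classic} is quoted from \citet{maurer2004note} as a preliminary, and the appendix proves only Theorems~\ref{theorem:sub-gaussian-sec3}, \ref{thm:main} and Corollary~\ref{corollary:error-bound}. So there is no proof in the paper to compare your proposal against.

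That said, your outline is correct and is essentially Maurer's original argument: Donsker--Varadhan change of measure with $\phi(\h)=m\,\mathrm{kl}(\hat L(\h)\|L(\h))$, Maurer's lemma $\mathbb{E}_{\mathcal{S}}[e^{m\,\mathrm{kl}}]\le\sqrt{2m}$ under the data-independent prior, Markov, Jensen on the jointly convex $\mathrm{kl}$, and finally Pinsker to pass from $\mathrm{kl}$ to the square-root form. You have also correctly isolated the one nontrivial ingredient, namely the combinatorial bound on $\sum_k\binom{m}{k}(k/m)^k(1-k/m)^{m-k}$ underlying Maurer's lemma.

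For context, the paper's proof of its own Theorem~\ref{theorem:sub-gaussian-sec3} (Appendix~\ref{apsec:proof41}) follows a structurally similar template but replaces the $\mathrm{kl}$-based exponential moment by the linear functional $\gamma m(\ell(\h;\mathcal{D})-\ell(\h;\mathcal{S}))$ and controls it via the assumed exponential-moment condition of Definition~\ref{def:subgaussian2} rather than Maurer's lemma. Your route through $\mathrm{kl}$ and Pinsker is what produces the specific $\sqrt{\cdot}$ dependence and the $\sqrt{2m}$ inside the logarithm, whereas the paper's linear-in-$\gamma$ route yields the additive $\frac{1}{\gamma m}\mathrm{KL}+\gamma K$ shape; the two are different relaxations of the same change-of-measure skeleton.
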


A PAC-Bayes bound measures the gap between the expected empirical and generalization errors. % in terms of the KL divergence between the prior $\mathcal{P}$ and the posterior $\mathcal{Q}$. 
It's worth noting that this bound holds for all posterior $\mathcal{Q}$ for any given data-independent prior $\mathcal{P}$ and, which enables optimization of the bound by searching for the best posterior. In practice, the posterior mean corresponds to the trained model, and the prior mean can be set to the initial model. 
In this paper, we will use $||\cdot||$ to denote a generic norm, and $||\cdot||_2$ to denote the $L_2$ norm.
\textbf{The focus on Neural Networks}:
PAC-Bayes training algorithms, including the one proposed here, can be applied to a wide range of supervised learning problems. However, in this paper, we will focus on the training of deep neural networks for the following reasons:

 Over-parameterized deep neural networks present a significant challenge for PAC-Bayes training, as the Kullback-Leibler (KL) term in the PAC-Bayes bounds is believed to grow rapidly with the number of parameters, quickly leading to vacuous bounds. According to the current literature, deep networks are indeed one of the most critical models where existing PAC-Bayes training algorithms encounter difficulties.

 Intuitively, minimizing the generalization bound does not require additional implicit or explicit regularization, thus reducing the need for tuning. This advantage of reduced tuning is particularly significant in the training of neural networks, where extensive tuning is typically required for ERM. However, for classical problems such as Lasso, which involve only one or two hyperparameters, this advantage of PAC-Bayes training may be less pronounced.

\section{Related Work}\label{sec:related}
% The theoretical machine learning field underwent substantial evolution with the introduction of the PAC-Bayesian learning algorithm by~\cite{mcallester1998some}, which built upon~\cite{shawe1997pac}. Numerous works define PAC-Bayes bounds as the one in Theorem~\ref{theorem:classic}, utilize the KL divergence between the posterior and the prior distribution along with the training error to measure generalization capabilities, including~\citep{mcallester2003simplified,mcallester1999pac,catoni2003pac,catoni2004statistical,langford2001bounds}. Although the initial goal of creating PAC-Bayes bounds was to predict the performance of classification learning algorithms~\citep{seeger2002pac,catoni2007pac,germain2009pac}, they can be extended to regression tasks as well~\cite{maurer2004note}.

PAC-Bayes bounds were first used to train neural networks in~\citet{DR17}. Specifically, the bound~\citet{mcallester1999pac} has been employed for training shallow stochastic neural networks on binary MNIST classification with bounded $0$-$1$ loss and has proven to be non-vacuous. %i.e, the final bound exceeds the empirical test error by only a small margin with high probability. 
Following this work, many recent studies \citep{zhou2018non,letarte2019dichotomize,rivasplata2019pac,perez2021tighter,biggs2021differentiable,perez2021learning,viallard2024learning} 
expanded the applicability of PAC-Bayes bounds to a wider range of neural network architectures and datasets.
% For instance, \cite{rivasplata2019pac} introduced PBB (PAC-Bayes with BackProp), which is a self-bound algorithm that utilizes training data to learn a predictor and certify its risk through PAC-Bayes bound minimization. Their method utilizes a tighter generalization bound compared to~\cite{DR17} and treating the prior as data-independent hyperparameters. 
However, most studies are limited to training shallow networks with binary labels using bounded loss, which restricts their broader application to deep network training. 
%Tradition PAC-Bayes bounds were built for bounded losses \citep{mcallester2003simplified,mcallester1999pac,catoni2003pac,catoni2004statistical,langford2001bounds,seeger2002pac,catoni2007pac,germain2009pac}.
Although PAC-Bayes bounds for unbounded loss have been established~\citep{audibert2011robust,alquier2018simpler,holland2019pac,kuzborskij2019efron,haddouche2021pac,rivasplata2020pac,rodriguez2023more,casado2024pac}, it remains unclear whether these bounds can lead to enhanced test performance in training neural networks. This uncertainty arises partly because they usually include assumptions that are difficult to validate or terms that are hard to compute in real applications. For example, \cite{kuzborskij2019efron} derived a PAC-Bayes bound under the second-order moment condition of the unbounded loss. However, as mentioned in the paper, that bound is semi-empirical, in the sense that it contains the \emph{population second order moment of the loss with respect to both the posterior and the data distributions}. Since conditional on the posterior, the samples are no longer i.i.d., this type of bound is difficult to estimate. To the best of our knowledge, existing PAC-Bayes bounds built under the second-order moment condition all suffer from this issue.

Recently, \citet{dziugaite2021role} suggested that a tighter PAC-Bayes bound could be achieved with a data-dependent prior. They divide the data into two sets, using one to train the prior and the other to train the posterior with the optimized prior, thus making the prior independent from the training dataset for the posterior. This, however, reduces the training data available for the posterior.
\citet{dziugaite2018data}~and~\citet{rivasplata2020pac} justified the approach of learning the prior and posterior with the same set of data by utilizing differential privacy. However, the argument only holds for priors provably satisfying the so-called $DP(\epsilon)$-condition in differential privacy, which limits their practical application.
\citet{perez2021tighter} also empirically shows training with \citet{dziugaite2018data} could sacrifice test accuracy if the bound is not tight enough.
%Currently, most PAC-Bayes training algorithms necessitate extensive hyperparameter tuning, sometimes exceeding that required for vanilla SGD training, which reduces their practical feasibility. 
In this work, we advance the PAC-Bayes training approach, enhancing its practicality and showcasing its potential in challenging settings.
\section{Proposed method}

\subsection{Motivation}\label{sec:issue} 
To help readers understand the challenges involved in designing practical  PAC-Bayes training algorithms for deep neural networks, we begin by offering a detailed examination of the limitations inherent in current  PAC-training algorithms and PAC-Bayes bounds. 
Most popular PAC-Bayes training algorithms \citep{DR17,dziugaite2018data,perez2021tighter} are designed for learning problems with bounded loss. When dealing with unbounded loss, it is necessary to clip the loss to a finite range before training, which can result in suboptimal performance, as demonstrated in Table \ref{tab:comparebounds} in the numerical section.% while it is well known that the normal ERM training with SGD or Adam can achieve $88\%$-$89\%$ test accuracy on CIFAR10 without data augmentation using deep CNNs, the PAC-Bayes training with bounded loss only achieved $84\%$-$85\%$ test accuracy, suggesting the need for developing PAC-Bayes bound for unbounded loss.

Some PAC-Bayes bounds for unbounded loss have been established in the literature, extending the requirement from bounded loss to sub-Gaussian loss (Theorem 5.1 of \citet{alquier2021user}), loss that satisfies the so-called hypothesis-dependent range (HYPE) condition \citep{haddouche2021pac}, and loss controlled by some finite cumulant generating function (CGF)  \citep{rodriguez2023more}. 
More specifically, in \citep{haddouche2021pac}, the loss requirement is relaxed to the existence of $K(h)$ such that $sup_{z\sim \mathcal{D}} \ell(h, z) \leq K(h),\forall h \in \mathcal{H}$. This is known as the HYPE condition and is much weaker than the requirement for a bounded loss. However,  commonly used cross-entropy loss still does not satisfy this condition without additional assumptions on the boundedness of both the input and the model.
Furthermore, our numerical experiments indicate that minimizing bounds tailored for sub-Gaussian and CGF losses during training is also largely ineffective. Specifically, on CIFAR10 with CNN9, the bound values for sub-Gaussian and CGF losses are 5.17 and 5.08, respectively. These values almost do not decrease during training and are even higher than the initial test loss (with a value of $2.30$) under random weight initialization, indicating that these bounds can not contribute to improving the training performance.

 %not non-vacuous for deep neural networks, with values significantly larger than the actual test error. To be more specific, when training CIFAR-10 using CNN-9, the PAC-Bayes bound with the sub-Gaussian loss or CGF requirement has values around $10^{6}$, while the actual test error is only  $10^0$ or less (with cross-entropy loss).
 
% \begin{figure}[ht] 
% \begin{center}
% \includegraphics[width=0.4\textwidth]{}
% \caption{Values of the PAC-Bayes bounds for sub-Gaussian and CGF bound when evaluated on CNN9 network and CIFAR10 dataset. They're much larger than the initial test error of randomly initialized model. The setting can be found in Appendix \ref{apsec:pac-baseline}.}
% \label{fig:unboundedpacloss}
% \end{center}
% \end{figure}

% \begin{figure}[!h] 
% \begin{center}
% \includegraphics[width=0.3\textwidth]{figures/CNN_s2/bound0.png}
% \caption{\textcolor{red}{Values of existing PAC-Bayes bounds using \textit{sub-Gaussian} and \textit{CGF} assumptions on the loss when learning CIFAR10 dataset using a deep CNN9 network. The test error of the initial network before training is shown as \textit{initial test error}. We can see that these bounds have grown very large for the deep network, and are obviously vacuous (non-binding of the actual test error), and kept vacuous over the training period. More implementation details can be found in the first experiment of Sec. \ref{sec:exp}.}}\label{fig:compare_s2_v1}
% \end{center} 
% \end{figure}

Lastly, the PAC-Bayes bound established under the (theoretically) weakest assumption is in
\cite{kuzborskij2019efron}, where the loss is only required to have a finite second-order moment. The associated bound holds with probability $1-e^x, x\geq 2$: 
% \begin{equation*}
% \mathbb{E}_{h\sim \mathcal{Q}}\ell(h;\mathcal{D})\leq \mathbb{E}_{h\sim \mathcal{Q}}\ell(h;\mathcal{S})
% + \sqrt{\frac{C_1\mathrm{KL}(\mathcal{Q}||\mathcal{P})+C_2}{m}} +\frac{1}{m}
% \end{equation*}
% \begin{equation*}
% C_1 = \mathbb{E}_{h\sim \mathcal{Q}} \left[\ell^2(h;\mathcal{S}) + {\color{blue}\mathbb{E}_{z'\sim \mathcal{D}}} \ell^2(h;z') \right]
% \end{equation*}
% \RW{Add the definition of $C_2$}.
\begin{equation}\label{eq:previous}
\mathbb{E}_{\h\sim \mathcal{Q}}\ell(\h;\mathcal{D})
\leq \mathbb{E}_{\h\sim \mathcal{Q}}\ell(\h;\mathcal{S})+ \sqrt{2 \left( \frac{1}{n^2} + \beta \right) \left( \mathrm{KL} \left( \mathcal{Q}||\mathcal{P} \right) + x + \frac{x}{2} \ln \left( 1 + m^2 \beta \right) \right)},
\end{equation}
where $\beta = \mathbb{E}_{\mathbf{h}\sim \mathcal{Q}} \left[\ell^2(\mathbf{h};\mathcal{S}) + \mathbb{E}_{z'\sim \mathcal{D}} \ell^2(\mathbf{h};z') \right].$
However, this bound is not easily amendable for training as the term $\mathbb{E}_{\mathbf{h}\sim \mathcal{Q}}\mathbb{E}_{z'\sim \mathcal{D}} \ell^2(\mathbf{h};z')$ in $\beta$ is a \emph{population} second-order moment of the loss with respect to the data and the posterior distribution. Since the data is no longer i.i.d. when conditioned on the posterior, estimating this population's second-order moment becomes challenging.

The above-mentioned limitations in related literature motivate us to design a new PAC-Bayes bound for unbounded loss that is easy to apply in training.

\subsection{A new PAC-Bayes Bound for Unbounded loss}
% Popular PAC-Bayes training algorithms \citep{DR17,dziugaite2018data,perez2021tighter} are limited to bounded loss. When dealing with unbounded Cross-Entropy loss\footnote{The MSE loss could also be unbounded when used in regression tasks}, they require a clipping of the loss to small bounded regions before applying the training, leading to suboptimal performance. On the other hand, PAC-Bayes bounds for unbounded loss were also established in the literature \citep{germain2016pac,rodriguez2023more} where the requirement of bounded loss is replaced by the weaker requirement of the finite second-order moment of the loss or finite CGF (cumulant generating function). However, these bounds are often not non-vacuous when applied to deep neural networks (as shown in Figure~\ref{fig:compare_s2} of Section~\ref{sec:exp}), meaning that the numerical value of the bound is too large for the training to progress. 

We propose a training-friendly PAC-Bayes bound that holds under mild conditions.
%\textcolor{blue}{We first present our assumption over the loss and demonstrate that our condition, which is slightly stronger than the first-order moment, is weaker than the second-order moment condition for some random variable \(X\). We further build the connection between \(X\) and the model \(\mathbf{h}\) by denoting the loss of the model \(\mathbf{h}\) as \(X(\mathbf{h})\). The condition will later lead to an extra term, based on which we provide a detailed definition of this term in the Bayesian learning setting to derive our PAC-Bayes bound.}

\subsubsection{Condition under which the new bound will hold}
The new bound we shall propose is based on the following assumption of the loss function in Definition \ref{def:expmoment}. Please note that the $X$ in Definition \ref{def:expmoment} will later represent the training loss in the PAC-Bayes analysis, and $\mathbb{E}[X]$ will represent the population loss.

\begin{definition}[Exponential moment on finite intervals]\label{def:expmoment}
Let $X$ be a random variable defined on the probability space $( \Omega ,{\mathcal {F}},\mathcal{P} ) $ and $0\leq \gamma_1 \leq \gamma_2\leq \infty$ be two numbers. We call any $K>0$ an exponential moment bound of $X$ over the interval $[\gamma_{1},\gamma_{2}]$, when 
\begin{equation}\label{eq:def1} 
\mathbb{E}
[\exp{(\gamma (\mathbb{E}[X]- X))}]\leq \exp{(\gamma^2K)}
\end{equation}
holds for all $\gamma \in [\gamma_{1},\gamma_{2}].$
\end{definition}

 We provide a few remarks to clarify the position of this definition in the literature.
\begin{remark}[The left-side moment]Similar to certain other PAC-Bayes conditions in the literature \citep{alquier2021user}, this definition only requires the left-side moment to be bounded. Specifically, it requires $\mathbb{E}[\exp{(\gamma (\mathbb{E}[X]- X))}]$ to be bounded for $\gamma>0$, rather than for all $\gamma \in \mathbb{R}$. %, making it less stringent than the sub-Gaussian condition for non-negative loss. 
The need for this left-side condition arises from the goal of establishing an upper bound on the population loss. An upper bound on the population loss $\mathbb{E}[X]
$ in terms of the empirical loss $X$ translates to upper bounding $\mathbb{E}[X]-X$.  For positive $\gamma$, this leads to the left-side condition. 
\end{remark}
\begin{remark}[$\gamma$ within a finite interval bounded away from 0]
In practice, we  set both $\gamma_1$ and  $\gamma_2$ to positive values. We observe that restricting $\gamma$ to a finite interval $[\gamma_1,\gamma_2]$ with positive $\gamma_1, \gamma_2$ often reduces the bound $K$. In contrast, all previous bounds use $\gamma_1=0$.
%Please see Remark \ref{theorem:sub-gaussian-sec3} and Figure \ref{fig:compare_K} for more discussions.
\end{remark}
\begin{remark}[Non-negative loss]Since most loss functions in machine learning (e.g., Cross-Entropy, $L_1$, MSE, Huber loss, hinge loss, Log-cosh loss, quantile loss) are non-negative, it is of great interest to analyze the strength of Definition \ref{def:expmoment} under $X\geq 0$. In this case, we can show that Definition \ref{def:expmoment}  is weaker than the second-order moment condition.
\end{remark}
 
\begin{lemma}[Comparison with the second-order-moment condition]\label{lm:1} For non-negative random variable $X\geq 0$, the existence of $K$ on the interval $\gamma\in [0,\infty)$ in Definition \ref{def:expmoment} can be implied by the existence of the second-order moment $\mathbb{E}X^2<\infty$.
\end{lemma}
The assumption $X\geq 0$ in Lemma \ref{lm:1} can be further relaxed to $X\geq -M$ with $M>0$, as in this case the random variable $X+M$ is non-negative to which  Lemma \ref{lm:1} can be applied.

\begin{remark}[Comparison with the first-order-moment condition]\label{rem:1stmoment}
Still under the assumption $X\geq 0$, when the $\gamma_1$ in Definition \ref{def:expmoment} is finite (bounded away from 0), the existence of $K$ can be implied by the existence of first-order moment. Indeed, by taking $K= \frac{\mathbb{E}[X]}{\gamma_1}$, the inequality $\mathbb{E}[X] - X \leq \mathbb{E}[X]$ (assumed $X\geq 0$) immediately implies \eqref{eq:def1}. However, this argument does not hold when $\gamma_1\rightarrow 0$. Hence we cannot say Definition \ref{def:expmoment} is as weak as the first-order moment condition.
\end{remark}
\begin{proof}[Proof of Lemma \ref{lm:1}]
We show that $\mathbb{E}X^2<\infty$ implies Definition \ref{def:expmoment} holding for any $\gamma\in[0,\infty)$ with some finite $K$. Since $\mathbb{E}X^2<\infty$, we have $(\mathbb{E}X)^2\leq \mathbb{E}X^2<\infty$. 
If $\gamma \geq \frac{1}{\mathbb{E}X}$, then it suffices to take the $K$ in
 \[
 \mathbb{E}e^{\gamma (\mathbb{E}X-X)}\leq e^{\gamma^2 K}
 \]
 to be $K = \frac{\mathbb{E}X}{\gamma} \leq (\mathbb{E}X)^2 \equiv K_1$. 
 If $\gamma < \frac{1}{\mathbb{E}X}$, then using the inequality 
 \[
 e^{x} \leq 1+x+x^2, \quad \forall x<1
 \]
 with $x:=\gamma(\mathbb{E}X-X) \leq \gamma \mathbb{E}X<1$,
 we have 
 \[
 \mathbb{E}e^{\gamma (\mathbb{E}X-X)} \leq \mathbb{E}(1+ \gamma(\mathbb{E}X-X)+\gamma^2(\mathbb{E}X-X)^2) = 1+\gamma^2 \mathrm{Var}(X) \leq e^{\gamma^2\mathrm{Var}(X) }
 \]
 Therefore, it suffices to take $K=\textrm{Var}(X)\equiv K_2$. Collecting the two cases, we see taking $K=\max\{K_1,K_2\}$ would be enough for Definition 4.1 to hold with $\gamma_1=0, \gamma_2=\infty$.
\end{proof}

% Here, we provide some motivation for the above definition. 
% Terms similar as inequality \eqref{eq:def1} arise naturally in various proofs \footnote{Such as the proof of Theorem~\ref{theorem:sub-gaussian-sec3} in Appendix~\ref{apsec:proof41}} of PAC-Bayes bounds, serving 
% as a key property that determines if the PAC-Bayes bound holds. 
% Its resemblance to the sub-Gaussian distribution definition enables the straightforward extension of PAC-Bayes bounds from bounded to unbounded sub-Gaussian loss.

% \textbf{We want to emphasize that the main motivation for proposing Definition 4.1 is from an empirical perspective, where we want to have a bound with a smaller numerical value. Therefore, in practice, we always take $\gamma_1, \gamma_2$ to be positive scalars. \textcolor{red}{ For the same reason, even though we know $K$ can be set to the upper bound $\max\{\textrm{Var}(X),\mathbb{E}X^2\}$, we should still develop an algorithm to obtain smaller feasible values of $K$ than this upper bound.
% }}
%PAC-Bayes bounds for sub-Gaussian loss are often vacuous in deep neural network applications. The key reason is that the sub-Gaussian bound requires the exponential moment inequality to hold across the entire range of $\gamma\in[0,\infty)$. 
%This condition is overly stringent for most practical scenarios, where allowing $\gamma$ to fall within a finite range suffices. Therefore, we relaxed the sub-Gaussian condition in the above new definition of the exponential moment.

Now, we generalize Definition \ref{def:expmoment} to the PAC-Bayes setting where the random variable is parameterized by models in a hypothesis space.

Let us first explain what we mean by random variables parameterized by models in a hypothesis space. In supervised learning, we define $X(\h)$ as $X(\h) \equiv \ell(\h(x),y)$, where $\h$ is the model and $\ell$ is the misfit between the model output $\h(x)$ and the data $y$. For a fixed model $\h$, $X(\h)$ is a random variable whose randomness comes from the input pairs $(x,y)\sim \mathcal{D}$ ($\mathcal{D}$ is the data distribution). Since $X(\h)$ varies with $\h$, we call it a random variable parameterized by the model $\h$.

\begin{definition}[Exponential moment over hypotheses]\label{def:Xh}
Let $X(\h)$ be a random variable parameterized by the hypothesis $\h$ in some hypothesis space $\mathcal{H}$ (i.e., $\h\in \mathcal{H}$), and fix an interval $[\gamma_{1},\gamma_{2}]$ with $0 \leq \gamma_1 < \gamma_2 \leq \infty$. Let $
\{\mathcal{P}_{\boldsymbol{\lambda}},\boldsymbol{\lambda} \in \Lambda \}$ be a family of distribution over $\mathcal{H}$ parameterized by $\boldsymbol{\lambda}\in \Lambda \subseteq \mathbb{R}^k$.
Then, we call any non-negative function $K(\boldsymbol{\lambda})$ an exponential moment bound for $X(\h)$ over the priors $\{\mathcal{P}_{\boldsymbol{\lambda}},\lambdal\in \Lambda\}$ and the interval $[\gamma_{1},\gamma_{2}]$, if the following holds
$$
 \mathbb{E}_{\h\sim \mathcal{P}_{\lambdal}} \mathbb{E}[\exp{(\gamma (\mathbb{E}[X(\h)]- X(\h)))}]\leq \exp{(\gamma^2K(\boldsymbol{\lambda}))},$$ for all $ \gamma \in [\gamma_{1},\gamma_{2}],$ and any $\boldsymbol{\lambda} \in \Lambda \subseteq \mathbb{
R}^k.%\label{eq:subgaussian}
$
\label{def:subgaussian2}
The minimal such $K(\lambdal)$ is 
\begin{equation}\label{eq:kmin}
K_{\min}(\lambdal) = \sup_{\gamma \in [\gamma_1,\gamma_2]}\frac{1}{\gamma^2}\log
(\mathbb{E}_{\h\sim \mathcal{P}_{\lambdal}} \mathbb{E}[\exp{(\gamma (\mathbb{E}[X(\h)]- X(\h)))}]).
\end{equation}
\end{definition}

 Definition \ref{def:Xh} is a direct extension of Definition \ref{def:expmoment} to random variables parametrized by a hypothesis space.
Similar to Definition~\ref{def:expmoment}, when dealing with non-negative loss, the existence of the exponential moment bound $K_{\min}$ is guaranteed, provided that the second-order moment of the loss is bounded, or provided that the first-order moment of the loss is bounded and $\gamma_1$ is bounded away from 0.

\begin{remark}[Dependency of \( K \) on prior parameters]\label{rm:xh} The key difference in Definition \ref{def:Xh} compared to prior work is that we allow the exponential moment bound $K$ to depend on the prior parameter $\lambdal$, rather than requiring a single $K$ to hold for all possible $\lambdal \in \Lambda$. This advantage carries over into the PAC-Bayes bound established in the next section.  
\end{remark}

In practice, there are two options to compute $K(\lambdal)$: one is to use the upper bound derived in the proof of Lemma \ref{lm:1}, and the other is to estimate $K_{\min}$ directly from the data, via \eqref{eq:kmin}. For both options, we need to estimate the expectation of some random variable over the prior distribution $\mathcal{P}_{\lambdal}$ for a given $\lambdal$ from the training data. Fortunately, this is much easier than estimating the expectation with respect to the posterior distribution $\mathcal{Q}$, as is required by previous work \cite{kuzborskij2019efron}, since in our case, after conditioning on the prior, the training data remains i.i.d., allowing a reliable approximation of the population mean by the empirical mean when the data is abundant.

%{\color{red}
%\begin{remark}[$K$-evaluation using training data] The smallest $K$ that makes the condition \ref{eq:def1} satisfied is denoted by $K_{\min}$, and can be computed from
%\begin{equation*}
%K_{\min} = \sup_{\gamma \in [\gamma_1,\gamma_2]}\frac{1}{\gamma^2}\log
%\mathbb{E}[\exp{(\gamma (\mathbb{E}[X]- X)}])
%\end{equation*}
%If training data is available, and we know that a finite 
%$K$ exists (such as under the bounded second-order moment condition by Lemma \ref{lm:1}), then 
%$K_{\min}
% can be estimated using Monte Carlo sampling.
%\end{remark}}

\subsubsection{A new PAC-Bayes bound}
We are ready to present the PAC-Bayes bound for losses that satisfy Definition~\ref{def:subgaussian2}. %Specifically, we will set the $X(\h)$ in Definition~\ref{def:subgaussian2} to be the loss $\ell(\h,z_i)$, which is a random variable parameterized by $\h$ with randomness coming from the input data $z_i$. 
\begin{theorem}[PAC-Bayes bound for unbounded loss with a \textbf{preset} prior distribution]
Given a prior distribution $\mathcal{P}_{\boldsymbol{\lambda}}$ over the hypothesis space $\mathcal{H}$, parametrized by $\boldsymbol{\lambda} \in \Lambda$. Assume the loss $\ell(\h,z_i)$ as a random variable parametrized by $\h$ satisfies Definition \ref{def:Xh}. For any $0<\delta<1$ and $\gamma\in[\gamma_1,\gamma_2]$, we have
 \[
 {\mathrm{Pr}}_{\mathcal{S}}\left(\forall \mathcal{Q} \in \mathbf{Q}, \mathbb{E}_{\boldsymbol{h}\sim\mathcal{Q}}\ell(\boldsymbol{h};\mathcal{D}) \leq \mathbb{E}_{\boldsymbol{h}\sim\mathcal{Q}}\ell(\boldsymbol{h};\mathcal{S})+\frac{1}{\gamma m}(\log{\frac{1}{\delta}+\mathrm{KL}(\mathcal{Q}||\mathcal{P}_{\boldsymbol{\lambda}})})+\gamma K(\boldsymbol{\lambda})\right) \geq 1-\delta 
 \]
 where $\mathbf{Q}$ is the set of all probability distributions. 
\label{theorem:sub-gaussian-sec3}
\end{theorem}
The proof of this theorem is available in Appendix~\ref{apsec:proof41}.
\begin{remark}[Asymptotic convergence rate]
By setting $\gamma,\gamma_1 = O(m^{-1/2})$, we observe that the asymptotic behavior of this bound aligns with the $O(m^{-1/2})$ convergence rate of popular PAC-Bayes bounds in the literature. 
\end{remark}
\begin{remark}[Applicability on CE loss]
This theorem, when combined with Lemma \ref{lm:1}, guarantees that the $O(m^{-1/2})$ convergence rate is achieved for CE loss under a bounded second-order moment condition. %As discussed in Section~\ref{sec:issue}, while bounds under the second-order moment condition were derived in the literature, our bound seems to be the first purely empirical bound (i.e., computable from data) that can be easily used for training. 
\end{remark}

\begin{remark}[The role of $K(\lambdal)$]\label{rm:k} The improvement of the new bound primarily stems from our definition of $K(\lambdal)$. Specifically, Def \ref{def:Xh} allows $K$ to be dependent on both the range of $\gamma$ and the prior parameter $\lambda$. This flexibility reduces the value of $K$. The trade-off is that $\gamma_1$ and $\gamma_2$ need to be selected in advance as hyperparameters, and $K(\lambda)$ must be estimated from data and optimized during training, which 
 slightly increases the difficulty of the optimization.
\label{remark:keyimprovement}
\end{remark}

To support the claim of Remark \ref{rm:k}, we provide numerical evidence of the advantage of our $K$ in Figure \ref{fig:compare_K}. We use \citep{rodriguez2023more} and \citet{alquier2021user} as the baseline PAC-Bayes bounds for unbounded loss for comparison. First, observe that both baseline bounds can be written in the form of
\begin{equation}
    \mathbb{E}_{\h\sim\mathcal{Q}}\ell(\h;\mathcal{D})\lesssim \mathbb{E}_{\h\sim\mathcal{Q}}\ell(\h;\mathcal{S})+ \frac{\mathrm{KL}(\mathcal{Q}||\mathcal{P})}{m\gamma} + \gamma K ,
\label{eq:generalform}
\end{equation}
where $\lesssim$ hides some absolute constant. Namely, both bounds have a $K$ term, although they follow different definitions. When the loss is assumed to be sub-Gassuain, $K$ is defined as the sub-Gaussian norm of the loss  (Theorem 5.1  of \citet{alquier2021user}). Under the CGF condition, $K$ is calculated based on the specific choice of the CGF  \citep{rodriguez2023more} (see Appendix \ref{apsec:pac-baseline} for our choice of CGF in this experiment). For our bound, the $K$ is chosen as our $K_{\min}(\lambdal)$ for various choices of $\gamma_1$ and $\gamma_2$. This experiment is conducted on CIFAR10 using CNN9. We follow standard practice in PAC-Bayes training literature by using a Gaussian prior,  fixing its mean to the random Kaiming initialization, and picking a universal variance for all the weights.  According to Definition \ref{def:Xh}, our $K$ depends on $\gamma_1$ and $\gamma_2$, and the prior parameter $\lambdal$, which we set to the universal standard deviation (std) of the prior weight distribution.  Since we observed that the variation of $K$ with respect to $\gamma_2$ is very minimal,  Figure \ref{fig:compare_K} only plots $K$  as a function of the prior std and $\gamma_1$. The $K$ values in the previous two bounds (i.e., sub-Gaussian and CGF) do not depend on these parameters. Hence, they are represented by two horizontal planes. We observe the following:
\begin{itemize}
\item our $K$ values approach those in the CGF bound as $\gamma_1$ approaches 0 and the prior std becomes large;
\item our $K$ is significantly smaller than those in the other two bounds near the region of the optimal choice of %\footnote{We searched over all possible $\gamma_1$ and prior std and found those that give the small PAC-Bayes bound.}
prior std and $\gamma_1$, which we determined by testing various pairs of \(\gamma_1\) and prior standard deviation for the combination that yields the tightest bound after optimized over the posterior.  This explains the advantage of making $K$ a function of the prior std;
\item since all other terms are the same,  the smaller the \( K \) is, the tighter the bound is after optimizing over \(\gamma\).
\item Figure \ref{fig:compare_K} also indicates that the value of $K_{\min}$ is more stable with respect to $\gamma_1$ than with respect to the prior standard deviation, especially near the optimal parameter region, suggesting that choosing $\gamma_1$ is easier. Therefore, in our algorithm, we fix $\gamma_1$ in advance while optimizing over the prior variance $\lambda$ during training.  It turns out that the same choice of $\gamma_1$ works for a wide variety of architectures, as shown in the numerical section;
 %This forces the posterior to remain close to the prior, making it difficult to fit a given dataset effectively.

\end{itemize}

\begin{figure}[!t] 
 \begin{center}
 \includegraphics[width=0.7\textwidth,clip]{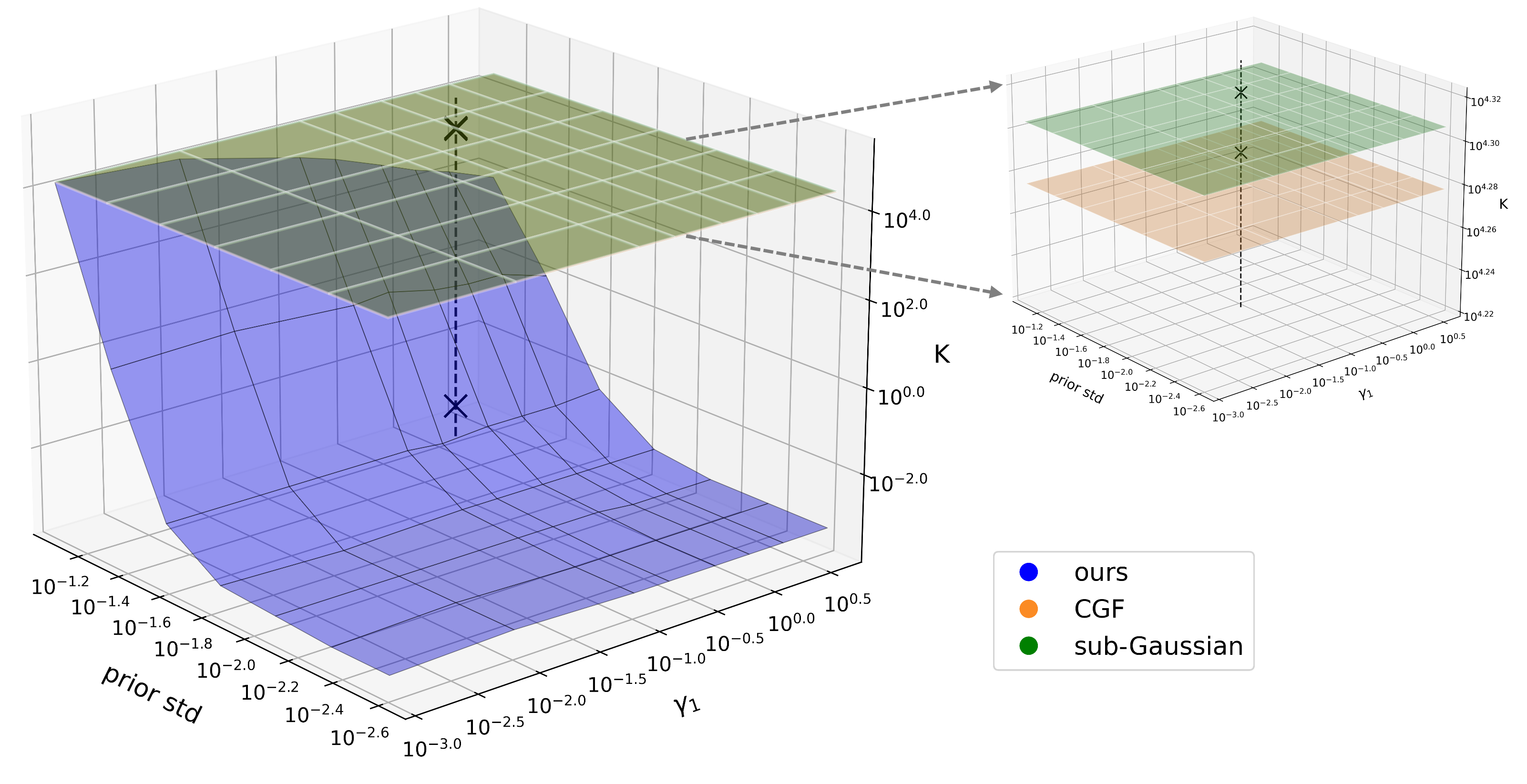}
 \caption{
Our definition of \(K\) demonstrates a notable advantage in terms of the achieved numerical value compared to the \(K\) in the \textit{sub-Gaussian} and \textit{CGF} bounds. Our \( K \) varies with  \(\gamma_1\) and the prior standard deviation, while the previous bounds remain constant as they don't depend on these parameters. The vertical dotted line marks the optimal choice of prior std and $\gamma_1$ during training, which we determined by testing various pairs of \(\gamma_1\) and prior standard deviation for the combination that yields the smallest value of the bound after optimized over the posterior. The figure shows that near this optimal combination, our \( K \) is much smaller than those in the other two bounds.
This experiment is conducted on CIFAR10 using CNN9; see details in Sec.~\ref{sec:exp}.}\label{fig:compare_K}
 \end{center} 
\end{figure}

\subsection{PAC-Bayes training based on the new bound}

With the relaxed requirements on the loss function, the proposed  bound offers a basis for establishing effective optimization over both the posterior and the prior. We will first outline the training process, which focuses on jointly optimizing the prior and posterior to avoid the complex hyper-parameter search over the prior as \citet{perez2021tighter}, followed by a discussion of its theoretical guarantees. The procedure is similar to the one in \citet{DR17}, but has been adapted to align with our newly proposed bound. 

%Theorem~\ref{theorem:sub-gaussian-sec3}.
We begin by parameterizing the posterior distribution as $\mathcal{Q}_{\mathbf{w}}$, where $\mathbf{w}\in\mathbb{R}^d$ represents the parameters of the posterior. Next, we parameterize the prior as $\mathcal{P}_{\lambdal}$, where $\lambdal\in\mathbb{R}^k$. We operate under the assumption that the prior has significantly fewer parameters than the posterior, that is, $k \ll d$; the relevance of this assumption will become apparent upon examining Theorem~\ref{theorem:pac-bound-with-error}. For our PAC-Bayes training, we propose to optimize over all three variables: $\mathbf{w}$, $\gamma$, and $\lambdal$: 
\begin{align}
(\hat{\mathbf{w}},\hat{\gamma}, \hat{\boldsymbol{\lambdal}}) = \arg\min_{\substack{\mathbf{w},\lambdal , \gamma \in [\gamma_1,\gamma_2]}}L_{PAC}(\mathbf{w},\gamma,\lambdal,\delta),\tag{P}\label{eq:l_pac}
\end{align}
where 
\begin{equation}\label{eq:LPAC}
L_{PAC}(\mathbf{w},\gamma,\lambdal,\delta)=
\mathbb{E}_{\mathbf{h}\sim \mathcal{Q}_{\mathbf{w}}}\ell(\mathbf{h};\mathcal{S}) 
+\frac{1}{\gamma m}(\log{\frac{1}{\delta}+\mathrm{KL}(\mathcal{Q}_{\mathbf{w}}||\mathcal{P}_{\lambdal})})+\gamma K(\lambdal). 
\end{equation}

Please note that here $K$ depends on the prior parameter $\lambdal$, and we need to optimize it along with other terms. The probability of failure $\delta$ should be fixed in advance.
\subsection{Theoretical Analysis}

Two important questions arise from the training framework based on \eqref{eq:LPAC}. 
\begin{enumerate}
\item Is there any theoretical guarantee for the performance of training over the prior?
\item Does the exponential moment bound function \( K(\lambda) \) exhibit enough regularity as a function of \( \lambda \) to be accurately estimated from the  data?
\end{enumerate}

This section addresses these two questions. First, we provide a PAC-Bayes bound that allows the prior to be data-dependent. Then, we validate the Lipschitz continuity of  \( K(\lambdal) \) when the prior and posterior are both set to Gaussian distributions.

\subsubsection{PAC-Bayes bound for arbitrary distribution families}
%We first provide an end-to-end theorem that guarantees the performance of this optimization algorithm. %optimizing the parameters in the prior will likely help reduce the bound, this approach might initially appear to conflict with the conventional view that the prior should be independent of the data. However, through theoretical analysis, one can show that permitting the prior with a limited degree of freedom is acceptable. 

% Some previous work, such as \citet{DR17}, already attempted a similar idea with bounded losses. Here, we apply it to unbounded loss with our PAC-Bayes bound, and more importantly, we provide formal proof and justification to validate this approach. 

We first make the following assumptions, which will be shown to automatically satisfy by the special Gaussian prior and posteriors.
\begin{assumption}[Continuity of the KL divergence]\label{ass:1} Let $\mathfrak{Q}$ be a family of posterior distributions, let $\mathfrak{P} = \{P_{\lambdal},\lambdal\in \Lambda \subseteq \mathbb{R}^k\}$ be a family of prior distributions parameterized by $\lambdal$.
We say the KL divergence $\mathrm{KL}(\mathcal{Q}||\mathcal{P}_{\lambdal})$ is continuous with respect to $\lambdal$ over the posterior family, %if there exist $\varepsilon_0>0$ and a non-decreasing function $\eta_1(x): \mathbb{R}_+\rightarrow \mathbb{R}_+$ with $\eta_1(0)=0$ such that for any $0<\varepsilon<\varepsilon_0$ and$\lambdal,\tilde{\lambdal}\in \Lambda$ with $||\tilde{\lambdal}-\lambdal||<\varepsilon$, it holds that
if there exists some non-decreasing function $\eta_1(x): \mathbb{R}_+ \mapsto \mathbb{R}_+$ with $\eta_1(0)=0$, such that
$|\mathrm{KL}(\mathcal{Q}||\mathcal{P}_{\lambdal})-\mathrm{KL}(\mathcal{Q}||\mathcal{P}_{\tilde{\lambdal}})| \leq \eta_1(\|\lambdal-\tilde{\lambdal}\|)$,
for all pairs $\lambdal,\tilde{\lambdal} \in \Lambda$ and for all $\mathcal{Q}\in \mathfrak{Q}$.%\footnote{
%If $\eta_1(x) =x^{\alpha}$ with $0<\alpha\leq 1$, then it concurs with the usual H{\"o}lder continuity of $K(\mathcal{Q}||\mathcal{P}_{\lambdal})$.}
\end{assumption}
\begin{assumption}[Continuity of the $K(\lambdal)$]\label{ass:2}
Let $K_{\min}(\lambdal)$ be as defined in Definition~\ref{def:subgaussian2}. Assume it is Lipschitz continuous with respect to the parameter $\lambdal$ of the prior in the sense that there exists a non-decreasing function $\eta_2(x): \mathbb{R}_+\mapsto \mathbb{R}_+$ with $\eta_2(0)=0$ such that
$|K_{\min}(\lambdal)-K_{\min}(\tilde{\lambdal})| \leq \eta_2( \|\lambdal-\tilde{\lambdal}\|)$, for all $ \lambdal, \tilde{\lambdal}\in \Lambda.$
\end{assumption}
% In actual training, we use the Gaussian family for the posterior and prior distributions. 
% With Gaussian families, it is sufficient to set $\eta_1(x) =C_1x$ and $\eta_2(x)=C_2x$ for the two assumptions to hold, where $C_1, C_2$ are universal constants.
% based on which we present the main theorem that guarantees the performance of the minimizer of~\ref{eq:l_pac}. It can be deemed as a generalized PAC-Bayes bound to data-dependent priors.
% In our training, we use the Gaussian family for the posterior and prior distributions. 

These two assumptions are quite weak and can be satisfied by popular continuous distributions, such as the exponential family. 

We now provide an end-to-end theorem that guarantees the performance of the PAC-Bayes training based on \eqref{eq:LPAC} for general priors and posteriors.
\begin{theorem}[PAC-Bayes bound for unbounded losses and \textbf{trainable} priors]\label{thm:main} Assume the loss $\ell(\h,z_i)$ as a random variable parametrized by model $\h$ satisfies Definition \ref{def:Xh}. Let $\mathfrak{Q}$ be a family of posterior distribution $\mathfrak{Q} = \{Q_{\mathbf{w}},\mathbf{w}\in \mathbb{R}^d\}$ , let $\mathfrak{P} = \{P_{\lambdal},\lambdal\in \Lambda \subseteq \mathbb{R}^k\}$ be a family of prior distributions parameterized by $\lambdal$.
Let $n(\varepsilon):=\mathcal{N}({\Lambda}, \|\cdot\|, \varepsilon)$ be the covering number of the set of the prior parameters. Under Assumption~\ref{ass:1} and Assumption~\ref{ass:2}, the following inequality holds for the minimizer $(\hat{\mathbf{w}},\hat{\gamma}, \hat{\lambdal})$ of \eqref{eq:l_pac} and any $\delta,\varepsilon>0$ with probability as least $1-\epsilon$ with $\epsilon:= (n(\varepsilon)+\frac{\gamma_2-\gamma_1}{\varepsilon})\delta$:
\begin{align}
 \mathbb{E}_{\h\sim \mathcal{Q}_{\hat{\mathbf{w}}}}\ell(\h;\mathcal{D})&\leq 
 %\mathbb{E}_{{\h}\sim \mathcal{Q}_{\hat \sigmal}(\hat \h)}\ell(\h;\mathcal{S})+\frac{1}{\hat{\gamma} m}\left[\log{\frac{n(\varepsilon)}{\epsilon}+\mathrm{KL}(\mathcal{Q}_{\hat \sigmal}(\hat \h)||\mathcal{P}_{\hat{\lambdal}})}\right]+\hat{\gamma} K(\hat{\lambdal}) + \eta \notag\\
 %& = 
 L_{PAC}(\hat{\mathbf{w}},\hat\gamma, \hat\lambdal,\delta)+ \eta,
 \label{eq:contlamba1} 
\end{align}
where $\eta=B\varepsilon+C\cdot (\eta_1(\varepsilon)+\eta_2(\varepsilon))+\frac{\log(n(\varepsilon)+\frac{\gamma_2-\gamma_1}{2\varepsilon})}{ \gamma_1 m}$, and $C$ and $B$ are constants depending on $\gamma_1,\gamma_2$, $m$ and the upper bounds of the parameters in the prior and posterior. % the bounds 
%with $C=\frac{1}{\gamma_1m} +\gamma_2$, and $B=\sup_{\lambdal\in\Lambda}\frac{1}{m\gamma_1^2}(\mathrm{KL}(\mathcal{Q}_{\hat \sigmal}(\hat \h)||\mathcal{P}_{\lambdal})+\log\frac{1}{\delta})+K(\lambdal)$.
\label{theorem:pac-bound-with-error}
\end{theorem}

The proof is available in Appendix~\ref{apsec:proof42}.

The theorem provides a generalization bound on the model learned as the minimizer of \eqref{eq:l_pac} with data-dependent priors. 
This bound contains the PAC-Bayes loss $L_{PAC}$ along with an
additional correction term $\eta$, which is notably absent in the traditional PAC-Bayes bound with fixed priors. Given that $(\hat{\mathbf{w}},\hat{\gamma},\hat{\lambdal})$ minimizes $L_{PAC}(\cdot,\cdot,\cdot,\delta)$, evaluating $L_{PAC}$ at its own minimizer ensures that the first term is small. If the correction term is also small, then the test error remains low. In the next section, we will delve deeper into the condition for this term to be small. Intuitively, selecting a small $\varepsilon$ helps to maintain low values for the first three terms in $\eta$. Although a smaller $\varepsilon$ increases the $n(\varepsilon)$ in the last term, this increase is moderated because it is inside the logarithm and divided by the size of the dataset. % We can further estimate the growth rate by first estimating that of $n(\varepsilon$ with respect to $\varepsilon$. The standard information-theoretical argument implies $\log(n(\varepsilon))$ grows linearly with the dimension $k$ of $\Lambda$. As a consequence, the second correction component will stay small as long as $k$, the number of parameters in the prior, remains small relative to the dataset size $m$.
% {\color{red}
% % As shown in Figure \ref{fig:compare_s2}, our proposed PAC-Bayes bound is much tighter.} 
% \RW{the figure needs to be improved}
% \begin{figure}[ht] 
% \begin{center}
% \includegraphics[width=0.3\textwidth]{figures/CNN_s2/bound1.png}
% \caption{Values of the PAC-Bayes bounds for sub-Gaussian, CGF bound and the new bound, when evaluated on XX network and cifar10 dataset. }
% \end{center}
% \end{figure}
% }
\subsubsection{Restricting to the Gaussian Families}\label{sec:algorithm}
%\subsubsection{Gaussian prior and posterior}\label{sec:gaussian-prior-posterior}
For the $L_{PAC}$ objective to have a closed-form formula, we employ the Gaussian distribution family. 
For ease of illustration, we introduce a new notation for the parametrization. Consider a  model denoted as $f_{\boldsymbol{\theta}}$, where $f$ represents the model architecture (e.g., a VGG net), and $\boldsymbol{\theta}$ is the weight. In this context, $f_{\boldsymbol{\theta}}$ aligns with the $\h$ discussed in earlier sections. Moving forward, we will use $f_{\boldsymbol{\theta}}$ to refer to the model instead of $\h$. 

We define the posterior distribution of the weights as a Gaussian distribution centered around the trainable weight $\boldsymbol{\mu}$, with trainable variance $\sigmal$., i.e., the posterior weight distribution is $\mathcal{N}(\boldsymbol{\mu}, \mathrm{diag}(\sigmal))$, denoted by $\mathcal{Q}_{\boldsymbol{\mu},\sigmal}$\footnote{A clarification of the notation: we have defined several model-related notations $\h$, $\mathbf{w}$, $\boldsymbol{\theta}$, $\boldsymbol{\mu}$, and $\boldsymbol{\sigma}$. To clarify, their relations are $\h = f_{\theta}$, $\boldsymbol{\theta}\sim \mathcal{Q}_{\boldsymbol{\mu},\boldsymbol{\sigma}} \equiv \mathcal{Q}_{\mathbf{w}}$, and $\mathbf{w}=[{\boldsymbol{\mu}},\boldsymbol{\sigma}]$}. The assumption of a diagonal covariance matrix implies the independence of the weights. We consider two types of priors, both centered around the initial weight $\boldsymbol{\mu}_0$ of the model  (as suggested by \citet{DR17}), but with different settings on the variance.

%\begin{itemize}
\textbf{Scalar prior}: we use a universal scalar to encode the variance of all the weights in the prior, i.e., the weight distribution of $\mathcal{P}_{\lambda}$ is $\mathcal{N}(\boldsymbol{\mu}_0, \lambda I_d)$, where $\lambda$ is a scalar. With this prior, the KL divergence $\mathrm{KL}(\mathcal{Q}_{\boldsymbol{\mu},\sigmal}||\mathcal{P}_{\boldsymbol{\mu}_0,\lambda})$ in~\eqref{eq:l_pac} is: 
\begin{align}\label{eq:klscalar}
 \frac{1}{2} \left[-\mathbf{1}_{d}^\top\log(\sigmal) +d(\log(\lambda) - 1) +\frac{(\|\sigmal\|_1+\|\boldsymbol{\mu}-\boldsymbol{\mu}_0\|^2_2)}{\lambda}\right].
\end{align}
\textbf{Layerwise prior}: weights in the $i$th layer share a common variance $\lambdal_i$, but different layers could have different variances. By setting $\lambdal = (\lambdal_1,....,\lambdal_k)$ as the vector containing all the layerwise variances of a $k$-layer neural network, the weight distribution of prior $\mathcal{P}_{\lambdal}$ is $\mathcal{N}(\boldsymbol{\mu}_0, \mathrm{BlockDiag}(\lambdal))$, where $\mathrm{BlockDiag}(\lambdal)$ is obtained by diagonally stacking all $\lambdal_i I_{d_i}$ into a $d\times d$ matrix, where $d_i$ is the number of weights of the $i$th layer. The KL divergence for layerwise prior is in Appendix~\ref{apsec:kldiverg}.
%\end{itemize}
For shallow networks, it is enough to use the scalar prior; for deep neural networks and neural networks constructed from different types of layers, using the layerwise prior is more sensible.

%We select the posterior distribution to be centered around the training model $\mathbf{h}$, with independent anisotropic variances for all weights. Specifically, for a network with $d$ trainable parameters, the posterior is set to $\mathcal{Q}_{\sigmal} (\mathbf{h}):=\mathcal{N}(\mathbf{h}, \textrm{diag}(\sigmal))$, where $\h$ (the current model) is the mean and $\sigmal\in \mathbb{R}^d_+$ is the vector containing the variance for each trainable parameter.
%the set of all posteriors is $\mathfrak{Q}:=\{\mathcal{Q}_{\sigmal}(\mathbf{h}),\sigmal \in \Sigma, \mathbf{h}\in \mathcal{H} \}$. 
%and the KL divergence between such prior and posterior is
% \begin{equation}
% \mathrm{KL}(\mathcal{Q}_{\sigmal}(\h)||\mathcal{P}_{\lambdal} (\h_0)) = \frac{1}{2}\sum_{i=1}^k \left[-\mathbf{1}_{d_i}^\top\log(\sigmal_i) +d_i(\log(\lambdal_i) - 1) +\frac{\|\sigmal_i\|_1+\|(\h-\h_0)_i\|^2)}{\lambdal_i}\right],\label{eq:layer-kl-divegence}
% \end{equation}
% where $\sigmal_i,(\mathbf{h}-\mathbf{h}_0)_i$ are vectors denoting the variances and weights for the $i$-th layer, and $\lambdal_i$ is the scalar variance for the $i$-th layer. $d_i=\mathrm{dim}(\sigmal_i)$, and $\mathbf{1}_{d_i}$ denotes an all-ones vector of length $d_i$.

% The closed-form for $KL(\mathcal{Q}_{\sigmal} (\h)||\mathcal{P}_{\lambdal})$ with the Gaussian family can be derived by standard calculation (Appendix~\ref{apsec:kldiverg}). By plugging it into the PAC-Bayes bound in Theorem~\ref{thm:main}, we have the following corollary that clarifies when the PAC-Bayes bound would be useful on~\ref{eq:l_pac} in practice.

By plugging in the closed-form \eqref{eq:klscalar} for $\mathrm{KL}(\mathcal{Q}_{\boldsymbol{\mu},\sigmal} ||\mathcal{P}_{\boldsymbol{\mu}_0,\lambda})$ into the PAC-Bayes bound in Theorem~\ref{thm:main}, we have the following corollary that justifies the usage of PAC-Bayes bound on large neural networks with the trainable prior.

\begin{corollary}[Validity of trainable Gaussian priors]\label{cor:main} Suppose the posterior and prior are Gaussian distributions as defined above.
Assume all parameters for the prior and posterior are bounded, i.e., we restrict the model parameter $\boldsymbol{\mu}$, the posterior variance $\sigmal$ and the prior variance $\lambdal$, all to be searched over bounded sets, $\Theta:=\{\boldsymbol{\mu}\in \mathbb{R}^d: \|\boldsymbol{\mu}\|_{2} \leq \sqrt d M\}$, $\Sigma:=\{\sigmal\in \mathbb{R}^d_+: \|\sigmal\|_{1} \leq d T \}$, $\Lambda = :\{\lambdal \in [e^{-a},e^b]^k\}$, respectively, with fixed $M,T,a,b>0$. Then, 
\begin{itemize}[leftmargin=*,noitemsep,topsep=0pt,parsep=0pt,partopsep=0pt]
 \item Assumption \ref{ass:1} holds with $\eta_1(x) = L_1 x$, where $L_1 = \frac{1}{2}\max\{d, e^{a}(2\sqrt d M+d T) \} $
 \item Assumption \ref{ass:2} holds with $\eta_2(x) = L_2 x$, where $L_2 =\frac{1}{\gamma_1^2} \left( 2d M^2 e^{2a}+ \frac{d(a+b)}{2} \right) $
 \item With high probability, the PAC-Bayes bound for the minimizer of~\eqref{eq:l_pac} has the form
\[
\mathbb{E}_{\boldsymbol{\theta}\sim \mathcal{Q}_{\hat{\boldsymbol{\mu}},\hat{\sigmal}}}\ell(f_{\boldsymbol{\theta}};\mathcal{D})\\
 \leq L_{PAC}([\hat{\boldsymbol{\mu}},\hat \sigmal],\hat\gamma,\hat\lambdal,\delta)+ \eta,
\]
where $\eta = \frac{k}{\gamma_1 m}\left(1+ \log \frac{2(CL+B)\Delta \gamma_1 m}{k}\right)$, $L=L_1+L_2$, $\Delta:=\max\{b+a, 2(\gamma_2-\gamma_1)\}$,
%$\eta = \max(\zeta_1,\zeta_2), \\
%\zeta_1=\frac{1}{\gamma_1 m}(1+ \log (2(CL+B)(\gamma_2-\gamma_1)\gamma_1 m))$, \\
%$\zeta_2=\frac{k}{\gamma_1 m}\left(1+ \log \frac{2(CL+B)(b+a)\gamma_1 m}{k}\right)$, \\ 
 $C=\frac{1}{\gamma_1m} +\gamma_2 $ \\
$B$ is a constant depending on $\gamma_1,\delta,M,\ d,\ T$,$\ a,\ b,\ m$\footnote{See Appendix~\ref{apsec:proof501} for the explicit form of $B$.}.%=\sup_{\lambdal\in\Lambda}\frac{1}{m\gamma_1^2}(\mathrm{KL}(\mathcal{Q}_{\hat \sigmal}(\hat \h)||\mathcal{P}_{\lambdal})+\log\frac{1}{\delta})+K(\lambdal)$. 
\end{itemize} 
\label{corollary:error-bound}
\end{corollary}
% \begin{remark}\label{rm:501}
In the bound, the term $L_{PAC}([\hat{\boldsymbol{\mu}},\hat \sigmal],\hat\gamma,\hat\lambdal,\delta)$ is inherently minimized as it evaluates the function $ L_{PAC}(\cdot,\cdot,\cdot,\delta)$ at its own minimizer.
The overall bound remains low if the correction term $\eta
$ can be deemed insignificant. The logarithm term in the definition of $\eta$ grows very mildly with the dimension in general, so we can treat it (almost) as a constant. Thus, $\eta \sim \frac{k}{\gamma_1 m}$, from which we see that 1). $\eta$ (and therefore the bound) would be small if prior's degree of freedom 
$k$ is substantially less than the dataset size $m$; 2). This bound still achieves the asymptotic rate of $O(m^{-1/2})$ after optimizing over $\gamma_1$. We note that even if the corollary assumes that the parameters (i.e., mean and variance) of the Gaussian distribution are bounded, the random variable itself is still unbounded, so the loss is still unbounded. The proof and more discussions can be found in Appendix~\ref{apsec:proof501}. 

\begin{algorithm}[!t]
\caption{PAC-Bayes training (scalar prior)}
\begin{algorithmic}
% \algtext*{EndFor}
% \algtext*{EndWhile}
\STATE {\bfseries Input:} initial weight 
$\boldsymbol{\mu}_{0}\in\mathbb{R}^d$, $T_1=500$, $\lambda_1 =e^{-12}, \lambda_2= e^{2}$, $\gamma_1=0.5,\gamma_2=10$. \textit{// $T_1,\lambda_1,\lambda_2,\gamma_1,\gamma_2$ are fixed in all experiments of Sec.\ref{sec:exp}.}
\STATE {\bfseries Output:} trained weight $\hat{\boldsymbol{\mu}}$, posterior noise level $\hat{\sigmal}$
\STATE $\boldsymbol{\mu} \gets \boldsymbol{\mu}_{0}$,
$\mathbf{v} \gets \mathbf{1_d} \cdot \log(\frac{1}{d}\|\boldsymbol{\mu}_{0}\|_1)$, $b \gets \log(\frac{1}{d}\|\boldsymbol{\mu}_{0}\|_1) $ \\
Obtain $\hat{K}(\lambda)$ with $\Lambda=[\lambda_1,\lambda_2]$ using~\eqref{eq:objective_K} (Appendix Algorithm~\ref{alg:sampleK})
\STATE \textit{/*Stage 1*/}
\FOR{epoch = $1:T_1$}
\FOR{sampling one batch $s$ from $\mathcal{S}$}
\STATE \textit{//Ensure non-negative variances}
\STATE $\lambda\gets \exp(b)$, $\sigmal\gets \exp(\mathbf{v})$
\STATE $\mathcal{P}_\lambda\gets\mathcal{N}(\boldsymbol{\mu}_0;\lambda I_d)$, $\mathcal{Q}_{\boldsymbol{\mu},\sigmal}\gets \boldsymbol{\mu}+\mathcal{N}(\mathbf{0};\textrm{diag}(\sigmal))$
\STATE\textit{//Get the stochastic version of $\mathbb{E}_{{\tilde{\boldsymbol{\theta}}}\sim \mathcal{Q}_{\boldsymbol{\mu},\sigmal}} \ell(f_{\tilde{\boldsymbol{\theta}}};\mathcal{S})$}
\STATE 
Draw one $\tilde{\boldsymbol{\theta}}\sim \mathcal{Q}_{\boldsymbol{\mu},\sigmal}$ and evaluate $\ell(f_{\tilde{\boldsymbol{\theta}}};\mathcal{S})$
\STATE Compute the KL divergence as \eqref{eq:klscalar}
\STATE Compute $\gamma$ as \eqref{eq:best_gamma}

\STATE Compute the loss function $\mathcal{L}$ as $L_{PAC}$ in \eqref{eq:l_pac}
\STATE \textit{//Update all parameters}
\STATE $b \gets b +\eta\frac{\partial{\mathcal{L}}}{\partial b}$, $\mathbf{v}\gets \mathbf{v}+\eta\frac{\partial \mathcal{L}}{\partial \mathbf{v}}$, $\boldsymbol{\mu}\gets \boldsymbol{\mu}+\eta\frac{\partial \mathcal{L}}{\partial \boldsymbol{\mu}}$
\ENDFOR
\ENDFOR \\
\STATE \textit{//Fix the noise level from now on}
\STATE $\hat{\sigmal} \gets \exp(\mathbf{v})$
\STATE \textit{/*Stage 2*/} \textit{// Run this stage if the training has not fully converged during Stage 1 }
\WHILE{not converge}
\FOR{sampling one batch $s$ from $\mathcal{S}$}
\STATE \textit{//Noise injection}
\STATE Draw one $\tilde{\boldsymbol{\theta}}\sim \mathcal{Q}_{\hat{\boldsymbol{\mu}},\hat{\sigmal}}$ and evaluate $\ell(f_{\tilde{\boldsymbol{\theta}}};\mathcal{S})$ as $\tilde{\mathcal{L}}$,
\STATE \textit{//Update model parameters}
\STATE $\boldsymbol{\mu}\gets \boldsymbol{\mu}+\eta\frac{\partial \tilde{\mathcal{L}}}{\partial \boldsymbol{\mu}}$
\ENDFOR
\ENDWHILE\\
$\hat{\boldsymbol{\mu}}\gets \boldsymbol{\mu}$
\end{algorithmic}
\label{alg:pac-training}
\end{algorithm}
\subsection{Training algorithm} 
\textbf{Estimating $K_{\min}(\lambdal)$:}\label{sec:kest-main}
In practice, the function $K_{\min}(\lambdal)$ must be estimated first.  Since we showed in Corollary \ref{cor:main} and Remark \ref{rem:1stmoment} that $K_{\min}(\lambdal)$ is Lipschtiz continuous and bounded, we can approximate it using piecewise-linear functions. More explicitly, we use \eqref{eq:kmin} and Monte Carlo Sampling to estimate $K_{\min}$ on some discrete grid of $\lambdal$, then we interpolate $K_{\min}(\lambdal)$ using piecewise linear functions. More details are in Appendix~\ref{apsec:est-K}.

%\textcolor{red}{can stop here and put the rest of this section into appendix} first for a discrete set, $\{\lambdal_1$,....,$\lambdal_s\} \subseteq \Lambda$, we estimate the corresponding $K_1$,....$K_s$ using the empirical version of \eqref{eq:kmin}, that is for any $i=1,...,s$, we solve 
% \begin{align}
% \begin{split}
% K_{\min}(\lambdal_i)&=\arg\min_{ K>0} K \\ & \textrm{s.t.} \exp{(\gamma^2 {K})} \geq \frac{1}{nm}\sum_{l=1}^n \sum_{j=1}^m \exp(\gamma (\ell(\h_l; \mathcal{S})-\ell(\h_l; z_j)))],
% \quad \forall~\gamma\in [\gamma_1,\gamma_2], 
% \end{split}
% \label{eq:objective_K}
% \end{align}
% where $\h_l\sim \mathcal{P}_{\lambdal_i}(\h_0),l = 1,...,n,$ are samples from the prior distribution and are fixed when solving \eqref{eq:objective_K} for $K_{\min}(\lambdal_i)$. This optimization problem can be solved by a bisection search.
% From the pairs $(\lambdal_i,K_{\min}(\lambdal_i))$, we construct $K_{\min}(\lambdal)$ using linear interpolation. Notably, since for each fixed $\lambdal_i$, the prior is independent of the data, this procedure of estimating $K_{\min}(\lambdal)$ can be carried out before training. 
Notably, since for each fixed $\lambdal\in\Lambda$, the prior is independent of the training data, this procedure of estimating $K_{\min}(\lambdal)$ can be carried out before training.  
%We use the Adam optimizer to solve \eqref{eq:l_pac}. 
Recall that after Remark \ref{rm:xh}, we discussed two ways to estimate $K_{\min}$, using \eqref{eq:kmin} or using its theoretical upper bound from Lemma~\ref{lm:1}. Figure~\ref{fig:K_K1K2} illustrates the advantage of the former approach. Besides having a much smaller numerical value, estimating $K_{\min}$ directly from \eqref{eq:kmin}  also involves fewer constraints compared to using the upper bound, which requires a non-negative loss with a bounded second-order moment.

\begin{figure}[!t]
    \centering
    \includegraphics[trim={0 0cm 0 3cm},width=0.6\textwidth,clip]{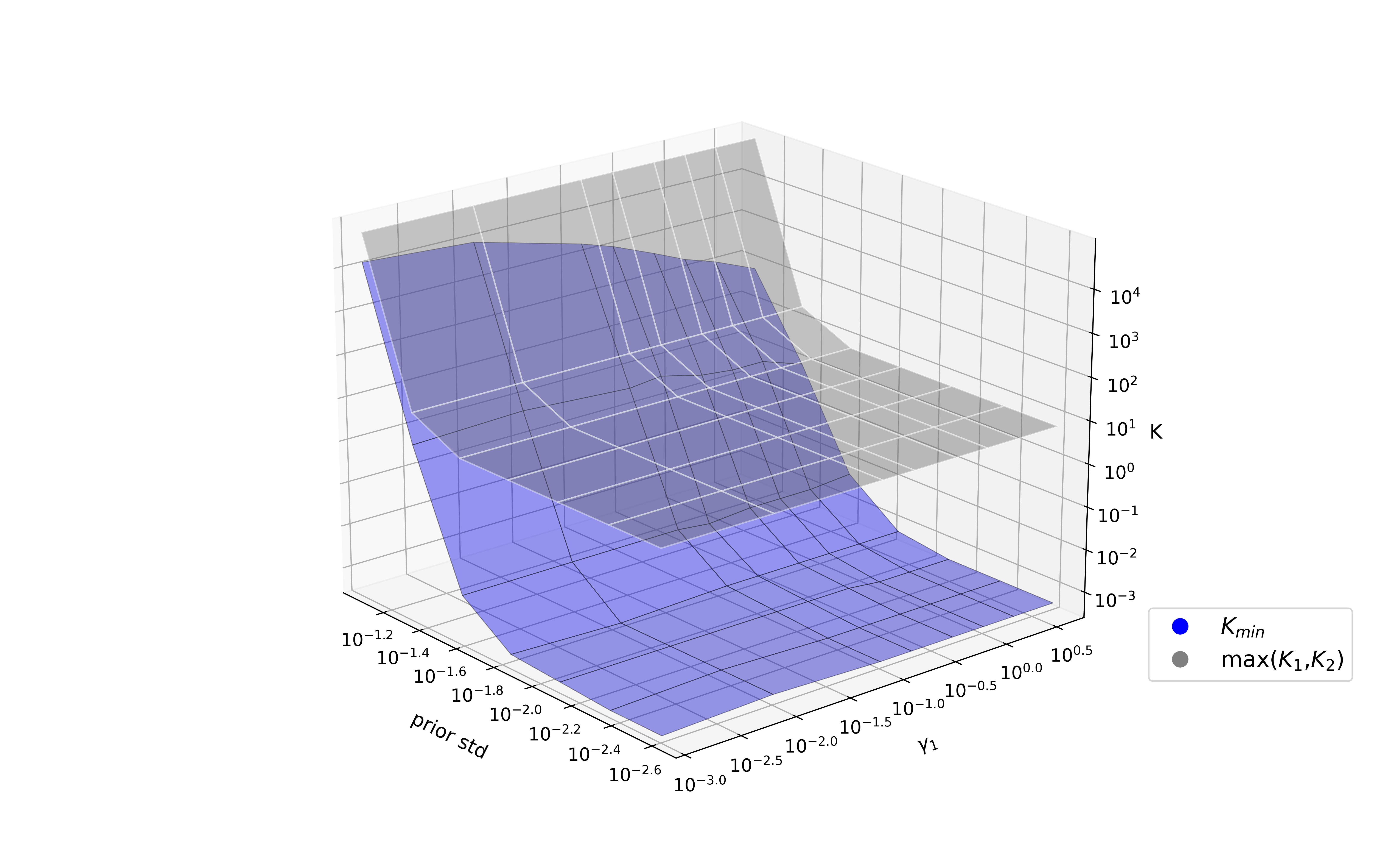}
    \caption{Comparison of the upper bound $\max\{K_1, K_2\}$ for  $K_{\min}$, as derived in Lemma~\ref{lm:1} with the data-driven estimate of $K_{\min}$ obtained via \eqref{eq:kmin} on CNN9 using the CIFAR10 dataset with the prior parameterized as a Gaussian distribution centered at the Kaiming initialization.}\label{fig:K_K1K2}
\end{figure}

\textbf{Two-stage PAC-Bayes training:} Algorithm~\ref{alg:pac-training} outlines the proposed PAC-Bayes training algorithm with scalar prior. The version that uses the layerwise prior is detailed in Appendix~\ref{apsec:alglayer}. Algorithm~\ref{alg:pac-training} contains two stages. Stage 1 performs pure PAC-Bayes training, and Stage 2 is an optional Bayesian refinement stage that is only activated when the first stage does not sufficiently reduce the training loss.  For Stage 1, although there are several input parameters to be specified, one can use the same choice of values across very different network architectures and datasets with minor modifications. Please see Appendix~\ref{apsec:choose-gamma} for more discussions. % We use the default Kaiming initialization~\cite{he2015delving} as in normal training.
%The loss function is defined as~\eqref{eq:l_pac}. 
When everything else in the PAC-Bayes loss is fixed, $\gamma \in [\gamma_1,\gamma_2]$ has a closed-form solution, 
\begin{equation}
\gamma^* = \min\left\{\max\left\{\gamma_1,\sqrt{\frac{\log{\frac{1}{\delta}}+\mathrm{KL}(\mathcal{Q}_{\boldsymbol{\mu},\sigmal} ||\mathcal{P}_{\boldsymbol{\mu}_0,\lambdal}) }{mK_{\min}}}\right\},\gamma_2\right\} 
\label{eq:best_gamma}
\end{equation} 
Therefore, we only need to perform gradient updates on the other three variables, $\boldsymbol{\mu}, \sigmal,\lambdal$.

\textbf{The second stage of training:}
\citet{gastpar2023fantastic,nagarajan2019uniform} showed that achieving high accuracy on certain distributions precludes the possibility of getting a tight generalization bound in overparameterized settings. This implies that it is less possible to use reasonable generalization bound to fully train one overparameterized model on a particular dataset. It is also observed in our PAC-Bayes training experiments that, oftentimes,  minimizing the PAC-Bayes bound only (Stage 1) cannot make the training accuracy reach $100\%$. %However, for many classification tasks, high training accuracy is usually essential for good test accuracy. 
If this happens\footnote{Note that there are cases when the second stage is not necessary, including but not limited to 1) the network is shallow 2) the dataset is simple 3) a good prior is chosen. In these cases, the training accuracy can already reach 100\% in Stage 1.}, we add a second stage to further increase the training accuracy. Specifically, in Stage 2, we continue to update the model by minimizing only $\mathbb{E}_{\boldsymbol{\theta}\sim\mathcal{Q}_{{\boldsymbol{\mu}},\hat{\sigmal}}}\ell(f_{\boldsymbol{\theta}};\mathcal{S})$ over $\boldsymbol{\mu}$, and keep all other variables (i.e., $\lambdal$, $\sigmal$) fixed to the solution found by Stage 1. This is essentially a stochastic gradient descent with noise injection, the level of which has been learned from Stage 1. The two-stage training is similar to the idea of the learning-rate scheduler (LRS). In LRS, the initial large learning rate introduces an implicit bias that guides the solution path towards a flat region \citep{cohen2021gradient,barrett2020implicit}, and the later lower learning rate ensures the convergence to a local minimizer in this region. Without the large learning rate stage, it cannot reach the flat region; without the small learning rate stage, it cannot converge to a local minimizer. For the two-stage PAC-Bayes training, Stage 1 (PAC-Bayes stage) guides the solution to flat regions by minimizing the generalization bound, and Stage 2 is necessary for an actual convergence to a local minimizer. %Having said that
%The practical impacts of each stage are also very similar to those of the LRS. 

%While Stage 2 no longer provides a PAC-Bayes bound, it can still improve generalization in practice.%More discussions are available in the Appendix~\ref{apsec:secondstage}.

\textbf{Regularizations in the PAC-Bayes training:} By plugging the KL divergence \eqref{eq:klscalar} into \ref{eq:l_pac}, we can see that in the case of Gaussian priors and posteriors, the PAC-Bayes loss is nothing but the original training loss augmented by a noise injection and a weight decay, except that strength of both of them are automatically learned during training. %the weight decay term is now centered at $\boldsymbol{\theta}_0$ instead of $\mathbf{0}$, the coefficients of the weight decay change from layer to layer when using layerwise prior, and the noise injection has anisotropic variances. %Since many factors in normal training, such as mini-batch and dropout, enhance generalization by some sort of noise injection, it is unsurprising that they can be substituted by the well-calibrated noise injection in PAC-Bayes training. 
More discussions are available in Appendix~\ref{apsec:regularization}. %Connections with SAM~\cite{foret2020sharpness} are discussed in Appendix~\ref{apsec:sam}.

\textbf{Prediction:} 
After training, we use the mean of the posterior as the trained model and perform deterministic prediction on the test dataset. In Appendix~\ref{apsec:determine}, we provide some mathematical intuition of why the deterministic predictor is expected to perform even better than the Bayesian predictor. 
%from our PAC-Bayes training is a Bayesian predictor, which is constructed by the posterior distribution $\mathcal{Q}_{\hat{\sigmal}}(\h)$. In practice, implementing this Bayesian predictor involves evaluating the new input across multiple models sampled from the posterior distribution, and then employing an average or voting procedure to obtain a final prediction. In this paper, however, we propose using a simple deterministic predictor that relies solely on the final trained model for prediction. Firstly, it eliminates the need for multiple model evaluations. Additionally, this deterministic predictor has the potential to enhance the generalization based on the intuition in Appendix~\ref{apsec:determine}.

%\textcolor{red}{----Rongrong is here------}

\section{Experiments}\label{sec:exp}
In this section, we demonstrate the efficacy of the proposed PAC-Bays training algorithm through extensive numerical experiments. Specifically, we conduct comparisons between our algorithm and existing PAC-Bayes training algorithms, as well as conventional training algorithms based on Empirical Risk Minimization (ERM). Our approach yields competitive test accuracy in all settings and exhibits a high degree of robustness w.r.t. the choice of hyperparameters. 
%compared to the PAC-Bayes training baselines and is close to the best results of ERM optimized by SGD/Adam with common regularizations while maintaining robustness across different batch sizes and learning rates. Furthermore, we show the efficacy of our PAC-Bayes training in graph neural networks, particularly in scenarios involving small training datasets.

%\textbf{\textit{Notably, our approach allows using the same hyperparameters across multiple neural networks and tasks.}} We set $\gamma_1 = 0.5$ and $\gamma_2 = 10$ in all our experiments and use Adam as the optimizer. We use learning rate $10^{-4}$ for all convolutional neural networks and $10^{-2}$ for all graph neural networks. For further details on the hyperparameter settings, please refer to Appendix~\ref{apsec:choose-gamma}.

\begin{figure}[!h] 
 \begin{center}
 \includegraphics[width=0.8\textwidth]{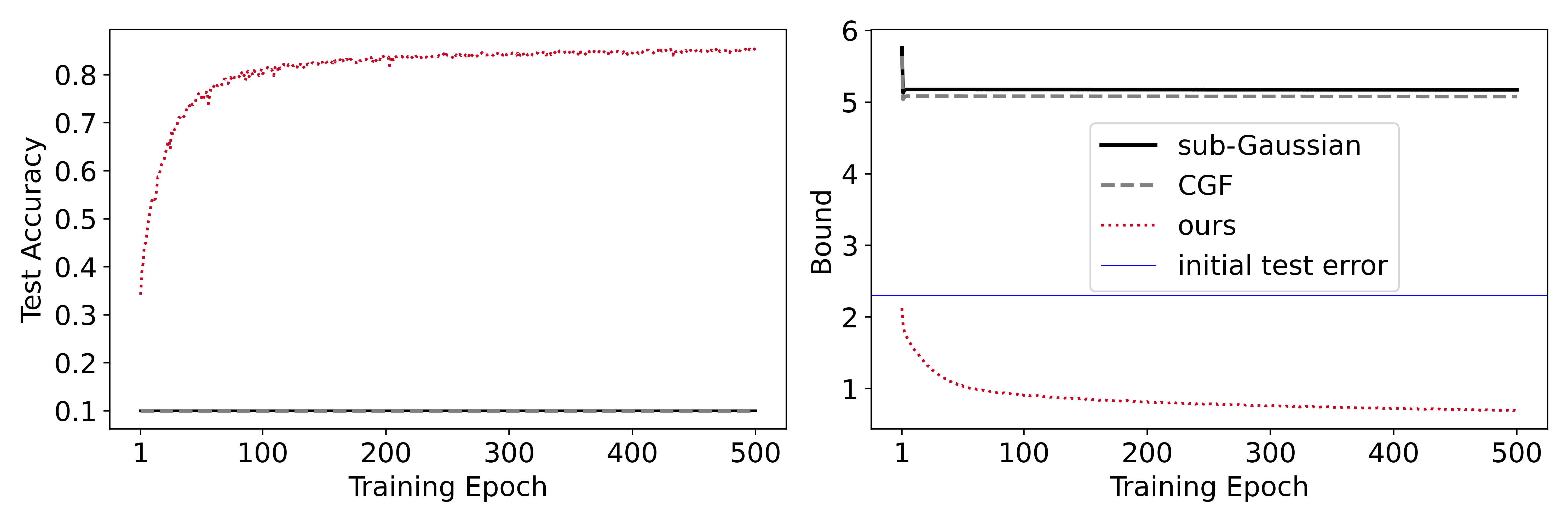}
 \caption{Minimizing PAC-Bayes bounds based on \textit{sub-Gaussian}, \textit{CGF} and our proposed bound on CIFAR10 using CNN9. The test error of a randomly initialized model is shown as \textit{initial test error}. Minimizing our bound (\textit{ours}) achieves a better test accuracy compared with optimizing the other two (\textit{sub-Gaussian} and \textit{CGF}).}\label{fig:compare_s2}
 \end{center} 
\end{figure}

\textbf{Comparison with different PAC-Bayes bounds and existing PAC-Bayes training algorithms:} We compared our PAC-Bayes training algorithm using the layerwise prior with baselines in \citet{perez2021tighter}: \textit{quad}~\citep{rivasplata2019pac}, \textit{lambda}~\citep{thiemann2017strongly}, \textit{classic}~\citep{mcallester1999pac}, and \textit{bbb}~\citep{blundell2015weight} in the context of deep convolutional neural networks. The baseline PAC-Bayes algorithms contain a variety of crucial hyperparameters, including variance of the prior (1e-2 to 5e-6), learning rate (1e-3 to 1e-2), momentum (0.95, 0.99), dropout rate (0 to 0.3) in the training of the prior, and the KL trade-off coefficient (1e-5 to 0.1) for \textit{bbb}. These hyperparameters were chosen by grid search. The batch size is $250$ for all methods. Our findings, as detailed in Table~\ref{tab:comparebounds}, show that our algorithm outperforms the other PAC-Bayes methods regarding test accuracy. It is important to note that all four baselines employed the PAC-Bayes bound for bounded loss. Therefore, they need to convert unbounded loss into bounded loss for training purposes. Various conversion methods were evaluated by \citet{perez2021tighter}, and the most effective one was selected for producing the results presented.

To demonstrate the necessity of our newly proposed PAC-Bayes bound for unbounded loss, we compared this new bound with two existing PAC-Bayes bounds for unbounded loss. One is based on the \textit{sub-Gaussian} assumption (Theorem 5.1 of \citet{alquier2021user}),
%(Corollary 4 of \citet{germain2016pac}), 
while the other (Theorem 9 of \citet{rodriguez2023more}) assumes the loss function is a bounded cumulant generating function (\textit{CGF}). It is important to note that, as of now, no training algorithms specifically leverage these PAC-Bayes bounds for unbounded loss.
%To maintain consistency in the KL divergence and cross-entropy across all bounds, two main terms in PAC-Bayes bounds, we employed our training algorithm to evaluate Corollary 4 from \citet{germain2016pac} (denoted as \textit{sub-Gaussian}) and Theorem 9 from \citet{rodriguez2023more} (denoted as \textit{CGF}) while training ResNet18 on CIFAR10 with batch size 128. Our algorithm incorporates the layerwise prior (indicated by \textit{layer}). 
%Moreover, we need to relax the constraints of the original bounds because the original bounds are proposed for a given prior, requiring the bounds to hold for all possible prior without using extensive tuning of prior parameters.
%As shown in Figure~\ref{fig:compare_s2}, our bound is much tighter than the other two. 
Therefore, for a fair comparison, we conducted an experiment by replacing our PAC-Bayes bound with the other two bounds and using the same two-stage training algorithm with the trainable layerwise prior.

% We found that the two baseline bounds are not non-vacuous on CNN9/13/15; both are larger than 1e4. The sub-Gaussian bound even explodes on CNN13 and CNN15.
% %This is because, to maintain the validity of the baseline bounds throughout the training process, the sub-Gaussian and CGF terms should be substantially larger than the cross-entropy.
% When using these bounds for training a model, it is expected that they deliver worse \footnote{Despite the vacuousness of the bound, the final results are still meaningful due to the use of Stage 2.} performance than the proposed one as shown in Table~\ref{tab:comparebounds}. 
We also visualized the test accuracy when minimizing different PAC-Bayes bounds for unbounded loss in Stage 1. As shown in Figure~\ref{fig:compare_s2}, minimizing our PAC-Bayes bound can achieve better generalization performance. As discussed in Remark \ref{remark:keyimprovement}, the \( K \) terms in the two baseline bounds for unbounded loss are much larger compared to ours. This results in a smaller \( \gamma \), which increases the coefficient of the KL divergence term and forces the posterior to remain close to the prior rather than fitting the data effectively.
The details of the two baseline bounds are in Appendix~\ref{apsec:pac-baseline}.

\begin{table}[!t] 
\centering
\caption{Test accuracy of convolution neural networks on CIFAR10. The test accuracy of baselines for bounded loss is from Table 5 of \citet{perez2021tighter}, calculated as 1-the zero-one error of the deterministic predictor. \textit{subG} represents the sub-Gaussian bound. Our proposed PAC-Bayes training with a layerwise prior (\textit{layer}) achieves the best test accuracy across all models.}
\label{tab:comparebounds}
%\setlength\tabcolsep{3pt}
%\footnotesize
\begin{tabular}{cccccccc}
\toprule
&\multicolumn{4}{c}{bounded}&\multicolumn{3}{c}{unbounded}\\
\cmidrule(r){2-5}\cmidrule(r){6-8}
& quad & lambda & classic & bbb &subG &CGF & layer \\
%\cmidrule(r){2-5}\cmidrule(r){6-8}
\midrule
CNN9 & 78.63 & 79.39 & 78.33 & 83.49 &\iffalse 81.49 \fi 78.53 &\iffalse 80.02 \fi 78.35 & \textbf{85.46} \\
CNN13 & 84.47 & 84.48 & 84.22 & 85.41 &\iffalse 85.84 \fi 84.30 &\iffalse 84.21 \fi 84.42 & \textbf{88.31} \\
CNN15 & 85.31 & 85.51 & 85.20 & 85.95 &\iffalse 85.63 \fi 84.98 &\iffalse 84.36 \fi 85.13 & \textbf{87.55} \\
\bottomrule
\end{tabular}
\end{table}

\textbf{Comparison with ERM optimized by SGD/Adam with various regularizations:} We tested our PAC-Bayes training on CIFAR10 and CIFAR100 datasets with \emph{no data augmentation}\footnote{Result with data augmentation can be found in Appendix \ref{apsec:dataug}} on various popular deep neural networks, VGG13, VGG19~\citep{simonyan2014very}, ResNet18, ResNet34~\citep{he2016deep}, and Dense121~\citep{huang2017densely} by comparing its performance with conventional empirical risk minimization by SGD/Adam enhanced by various regularizations (which we call baselines). The training of baselines involves a grid search for the best hyperparameters, including momentum for SGD (0.3 to 0.9), learning rate (1e-3 to 0.2), weight decay (1e-4 to 1e-2), and noise injection (5e-4 to 1e-2). The batch size was set to be $128$. We reported the highest test accuracy obtained from this search as the baseline results.
For all convolutional neural networks, our method employed Adam with a fixed learning rate of 1e-4. 

%We set batch size as $128$ since it usually gives the best performance from the literature.
%Finally, 38 searches were conducted for each baseline model on one dataset. The test accuracy for the optimally tuned baseline and a single run of the PAC-Bayes training are presented in Table~\ref{tab:cnn_128}.
Since the CIFAR10 and CIFAR100 datasets do not have a published validation dataset, \textbf{we used the test dataset to find the best hyperparameters of baselines during the grid search, which might lead to a slightly inflated performance for baselines.} Nevertheless, as presented in Table~\ref{tab:cnn_128}, the test accuracy of our method is still competitive. Please refer to Appendix~\ref{apsec:modelanalysis} for more details.

\begin{table}[!t] 
\centering
\caption{Test accuracy of CNNs on C10 (CIFAR10) and C100 (CIFAR100) with batch size $128$. Our PAC-Bayes training with scalar and layerwise prior are labeled \textit{scalar} and \textit{layer}. The best and second-best test accuracies are \textbf{highlighted} and \underline{underlined}. Our PAC-Bayes training can approximately match the best performance of the baseline.}
\label{tab:cnn_128}
%\footnotesize
%\setlength\tabcolsep{1.5pt}
\begin{tabular}{ccccccccccc}
\toprule
&\multicolumn{2}{c}{VGG13}&\multicolumn{2}{c}{VGG19}&\multicolumn{2}{c}{ResNet18}&\multicolumn{2}{c}{ResNet34}&\multicolumn{2}{c}{Dense121}\\
\cmidrule(r){2-3}\cmidrule(r){4-5}\cmidrule(r){6-7}\cmidrule(r){8-9}\cmidrule(r){10-11}
 & C10 & C100 & C10 & C100 & C10 & C100 & C10 & C100 & C10 & C100\\
\midrule
SGD&\textbf{90.2}&66.9 &90.2 &\textbf{64.5} &\textbf{89.9} &\underline{64.0}&\underline{90.0} &\textbf{70.3} &\textbf{91.8} &\textbf{74.0} \\
Adam&88.5 &63.7&89.0 &58.8 &87.5 &61.6&87.9 &59.5 &91.2 &70.0 \\
AdamW&88.4 &61.8 &89.0 &\underline{62.3} &87.9 &61.4&88.3 &59.9 &\underline{91.5} &70.1 \\
\midrule
scalar &88.7 &\textbf{67.2} &89.2 &61.3 &88.0 &68.8 &89.6 &69.5 &91.2 &71.4 \\
layer &\underline{89.7} &\underline{67.1} &\textbf{90.5} &\underline{62.3}&\underline{89.3} &\textbf{68.9} &\textbf{90.9} &\underline{69.9}&\underline{91.5} &\underline{72.2} \\
\bottomrule
\end{tabular}
\end{table}

\textbf{Evaluation on graph neural networks:} To demonstrate the broad applicability of the proposed PAC-Bayes training algorithm to different network architectures, we evaluated it on graph neural networks (GNNs).
%The number of training samples for node classification tasks is generally much smaller than CIFAR10/100, allowing us to examine the algorithm's performance in the data scarcity setting. 
Unlike CNNs, optimal GNN performance has been reported using the AdamW optimizer for ERM and enabling dropout. To ensure the best baseline results, we conducted a hyperparameter search over learning rate (1e-3 to 1e-2), weight decay (0 to 1e-2), noise injection (0 to 1e-2), and dropout (0 to 0.8) and reported the highest test accuracy as the baseline result. For our method, we used Adam and fixed the learning rate to be 1e-2 for all graph neural networks. We follow the convention for graph datasets by randomly assigning 20 nodes per class for training, 500 for validation, and the remaining for testing.
%We evaluated the accuracy of the validation nodes to identify the best hyperparameters with early stopping and reported the corresponding test accuracy. 

\begin{table}
%\footnotesize
\centering
\caption{Test accuracy of GNNs trained with AdamW versus our proposed method with scalar prior \textit{scalar}. The best test accuracies are \textbf{highlighted}. The performance of our training can almost match the best results of the baseline obtained after carefully tuning hyperparameters.}
\label{tab:gcn}
\begin{tabular}{ c c c c c c c } 
\toprule
& & CoraML & Citeseer & PubMed & Cora & DBLP \\
\midrule
\multirow{2}{2.8em}{GCN} 
&AdamW&85.7$\pm$0.7 &\textbf{90.3}$\pm$0.4 &\textbf{85.0}$\pm$0.6 &60.7$\pm$0.7 &\textbf{80.6}$\pm$1.4 \\
&scalar&\textbf{86.1}$\pm$0.7 &90.0$\pm$0.4 &84.9$\pm$0.8 &\textbf{62.0}$\pm$0.4 &80.5$\pm$0.6 \\
\midrule
\multirow{2}{2.8em}{GAT}
&AdamW&85.7$\pm$1.0 &\textbf{90.8}$\pm$0.3 &84.0$\pm$0.4 &\textbf{63.5}$\pm$0.4 &\textbf{81.8}$\pm$0.6 \\
&scalar&\textbf{85.9}$\pm$0.8 &90.6$\pm$0.5 &\textbf{84.4}$\pm$0.5 &60.9$\pm$0.6 &81.0$\pm$0.5 \\
\midrule
\multirow{2}{2.8em}{SAGE}
&AdamW&85.7$\pm$0.5 &\textbf{90.5}$\pm$0.5 &83.5$\pm$0.4 &60.6$\pm$0.5 &\textbf{80.7}$\pm$0.6 \\
&scalar&\textbf{86.5}$\pm$0.5 &90.0$\pm$0.5 &\textbf{84.4}$\pm$0.6&\textbf{61.2}$\pm$0.2&79.9$\pm$0.5 \\
\midrule
\multirow{2}{2.8em}{APPNP}
&AdamW&86.6$\pm$0.7 &\textbf{91.0}$\pm$0.4 &85.1$\pm$0.5 &62.5$\pm$0.4 &80.6$\pm$2.8 \\
&scalar&\textbf{87.1}$\pm$0.6 &90.4$\pm$0.5 &\textbf{85.7}$\pm$0.4 &\textbf{63.5}$\pm$0.4&\textbf{81.8}$\pm$0.5 \\
\bottomrule
\end{tabular}
\label{tab:gnns}
\end{table}
We tested four architectures GCN~\citep{kipf2016semi}, GAT~\citep{velivckovic2017graph}, SAGE~\citep{hamilton2017inductive}, and APPNP~\citep{gasteiger2018predict} on 5 benchmark datasets CoraML, Citeseer, PubMed, Cora and DBLP~\citep{bojchevski2017deep}. 
Since there are only two convolution layers for GNNs, applying our algorithm with the scalar prior is sensible. For our PAC-Bayes training, we retained the dropout layer in the GAT as is, since it differs from the conventional dropout and essentially drops the edges of the input graph. Other architectures do not have this type of dropout; hence, our PAC-Bayes training for these architectures does not include dropout. 

Table~\ref{tab:gcn} demonstrates that the performance of our algorithm closely approximates the best outcome of the baseline.
Appendix~\ref{sec:node-class} provides additional details and more results. 
Extra analysis on few-shot text classification with transformers is in Appendix~\ref{apsec:few-shot}.

\begin{table}[!t] 
\centering
\caption{The test accuracy for CNNs on CIFAR10 (C10) and CIFAR100 (C100) using a batch size of $2048$. Values in $(\cdot)$ indicate how much the results differ from using a batch size $(128)$. Our PAC-Bayes training with scalar and layerwise prior are labeled as \textit{scalar} and \textit{layer}. The most robust results w.r.t. the increase of batch size are \textbf{highlighted}, indicating the elevated robustness of our method compared to the baseline regarding batch sizes.} %even when the best choices of ERM hyperparameters are searched on the test dataset.}
%\footnotesize
\label{tab:cnn_large}
%\setlength\tabcolsep{4pt}
%\begin{subtable}{0.23\textwidth}
%\caption{VGG13}
\begin{tabular}{ccccc}
\toprule
&\multicolumn{2}{c}{VGG13}&\multicolumn{2}{c}{ResNet18}\\
\cmidrule(r){2-3}\cmidrule(r){4-5}
& C10 & C100 & C10 & C100\\
%\cmidrule(r){2-3}\cmidrule(r){4-5}
\midrule
SGD&87.7 (-2.5) &60.1 (-6.8) &85.4 (-4.5) &61.5 (-2.6) \\
Adam&90.7 (+2.2) &66.2 (+2.5) &87.7 (+0.2) &65.4 (+3.8)\\
AdamW&87.2 (-1.1) &61.0 (-0.8) &84.9 (-2.9) &58.9 (-2.5)\\
%\cmidrule(r){2-3}\cmidrule(r){4-5}
\midrule
scalar &88.9 (\textbf{+0.2}) &66.0 (-1.2) &88.9 (+0.9) &68.7 (\textbf{-0.1})\\
layer&89.4 (-0.3) &67.1 (\textbf{0.0}) &89.2 (\textbf{-0.1}) &69.3 (+0.3) \\
\bottomrule
\end{tabular}
\end{table}
\begin{table}[!t]
%\setlength\tabcolsep{3pt}
%\footnotesize
 \centering
 \caption{Test accuracy of ResNet18 and VGG13 trained with different learning rates on CIFAR10. The best test accuracies are \textbf{highlighted}. Our method is more robust to learning rate variations.}
 \begin{tabular}{ccccccccc}
 \toprule
 Model&Method&3e-5&5e-5&1e-4&2e-4&3e-4&5e-4&1e-3\\
 \midrule
 \multirow{2}{*}{ResNet18}&layer&\textbf{88.4} &\textbf{88.8} &\textbf{89.3} &\textbf{88.6} &\textbf{88.3} &\textbf{89.2}& 87.3 \\
 &Adam &66.6 &73.9&81.2&85.3&86.4&87.0&\textbf{87.5}\\
 \midrule
 \multirow{ 2}{*}{VGG13}&layer&\textbf{88.6} &\textbf{88.9} &\textbf{89.7} &\textbf{89.6} &\textbf{89.6} &\textbf{89.5}&\textbf{88.7} \\
 &Adam&84.3&84.8&85.8&87.4&87.9&88.3&88.5\\
 \bottomrule
 \end{tabular}
 \label{tab:model_lr}
\end{table}

\begin{figure}[!h]
    \begin{subfigure}[b]{0.33\textwidth}
    \centering
    \includegraphics[width=1\linewidth]{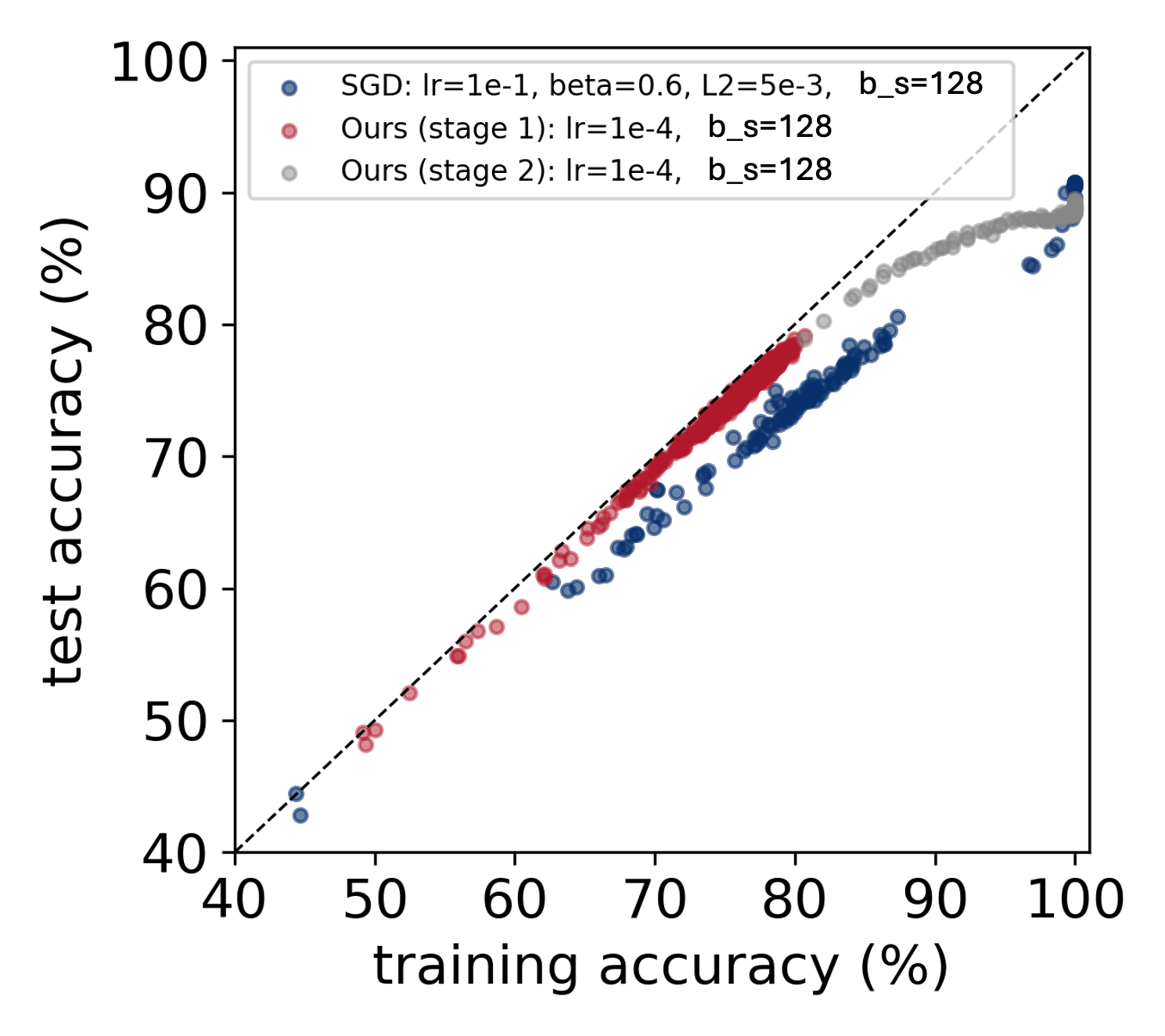}
    \caption{PAC-Bayes and ERM training.}
    \label{fig:gap3}
    \end{subfigure}
    \centering
    \begin{subfigure}[b]{0.33\textwidth}
    \centering
    \includegraphics[width=1\linewidth]{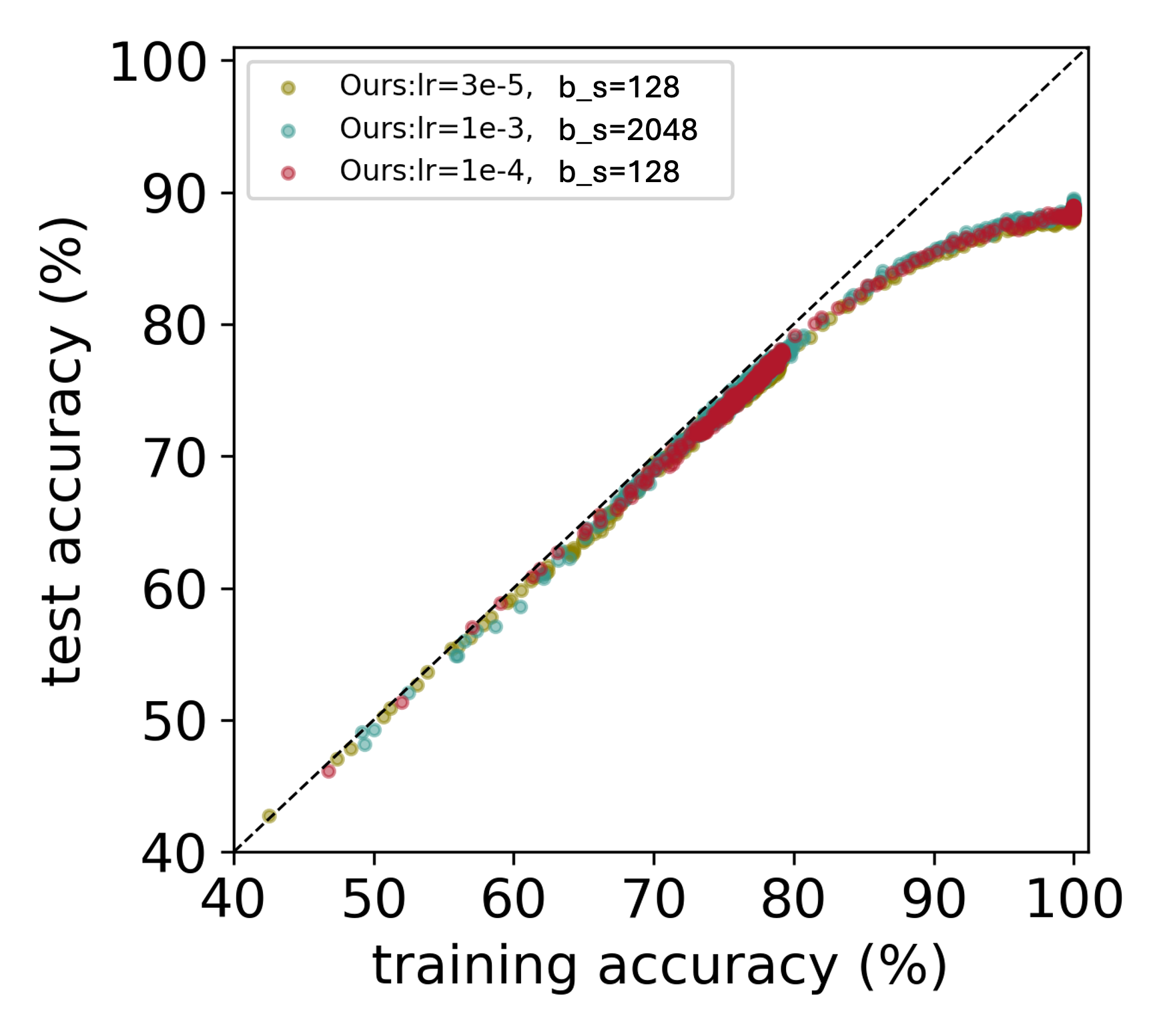}
    \caption{PAC-Bayes training.}
    \label{fig:gap1}
    \end{subfigure}\hfill
    \begin{subfigure}[b]{0.33\textwidth}
    \centering
    \includegraphics[width=1\linewidth]{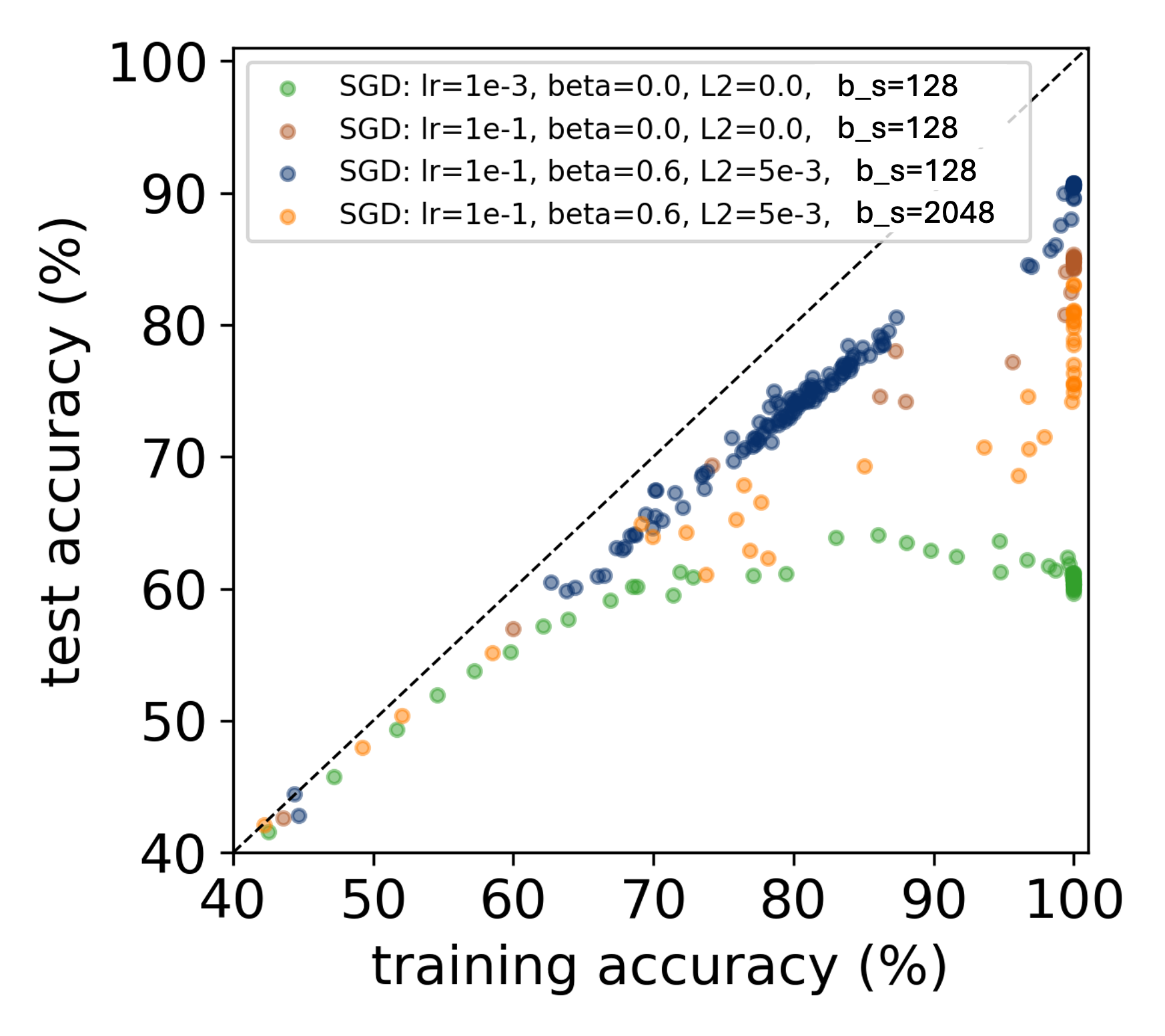}
    \caption{ERM training.}
    \label{fig:gap2}
    \end{subfigure}\hfill

    \caption{ Generalization gap (the difference between the training and the testing accuracy) in PAC-Bayes training  versus ERM training using  ResNet18 on the CIFAR10 dataset. Each point represents an intermediate model during training, plotted according to its test accuracy versus training accuracy. The line $y=x$ indicates the optimal, zero generalization gap. 
  PAC-Bayes training has a smaller generalization gap throughout the training process (Fig.~\ref{fig:gap3}) and remains stable despite changes in hyperparameters (Fig.~\ref{fig:gap1}). In constant,  ERM training (Fig.~\ref{fig:gap2}) is very unstable to hyper-parameter changes. When comparing ERM with our method in   Fig.~\ref{fig:gap3}, we picked the best ERM result (the blue one) in  Fig.~\ref{fig:gap2} that achieved the best final test accuracy.  The discontinuity it has around the testing accuracy of 87\%, is due to the activation of the learning rate scheduler.}
    \label{fig:gap}
\end{figure}
\textbf{Evaluation on the sensitivity of hyperparameters}:
In previous experiments, we selected specific batch sizes and learning rates as the only two tunable hyperparameters of our algorithm, with all other parameters remaining constant across all experiments. We further demonstrate that batch size and learning rate variations do not significantly impact our final performance. This suggests a general robustness of our method to hyperparameters, reducing the necessity for extensive tuning. 
More specifically, with a fixed learning rate 5e-4 in our method, Table~\ref{tab:cnn_large} shows that changing the batch size from $128$ to a very large one, $2048$, for VGG13 and ResNet18 does not significantly affect the performance of the PAC-Bayes training compared to ERM with extensive tuning as before. 
Also, as shown by Table~\ref{tab:model_lr}, our algorithm is more robust to learning rate changes than ERM, which utilizes the optimal weight decay and noise injection settings from Table~\ref{tab:cnn_128}.

We also examine the change in the generalization gap across the training process Figure~\ref{fig:gap}. Generalization gap is defined to be the difference between the training and testing accuracy. Algorithms with better generalization ability should yield a smaller generalization gap. The smallest possible gap is 0, which corresponds to the line $y=x$ in Figure~\ref{fig:gap}. We observe in Figure~\ref{fig:gap3} that our PAC-Bayes training, when compared to the ERM with the optimal hyperparameter setting,  has a smaller generalization gap (i.e., closer to the line $y=x$) over the course of training, although the final test accuracies are similar. The generalization gap in Stage 1 is extremely small, confirming the effectiveness of using the PAC-Bayes bound for achieving good generalization. 
In addition, as we vary the choice of hyperparameters, PAC-Bayes training is much more stable Figure~\ref{fig:gap1}  than ERM  Figure~\ref{fig:gap2}, indicating less need for hyperparameter tuning. 
Furthermore, Fig.~\ref{fig:gap}  suggests future directions for improving the PAC-Bayes training. Since Stage 1 yields the best generalization gap, we should focus on developing numerically tighter PAC-Bayes bounds to prolong Stage 1. Alternatively, we can aim to improve the heuristic algorithm in Stage 2 to minimize the increase in the generalization gap during this stage.

Please refer to Appendix~\ref{apsec:ablation} for extra experimental studies. 

\section{Conclusion and Discussion}
In this paper, we demonstrated the great practical potential of PAC-Bayes training by proposing a numerically tighter PAC-Bayes bound and applying it to train deep neural networks. The proposed framework significantly improved the performance of PAC-Bayes training, making it nearly match the best results of ERM. We hope this result inspires researchers in the field to further explore the practical implications of PAC-Bayes theory and make these bounds more useful in practice.

\section*{Acknowledgments}
We thank Andrew Gordon Wilson for insightful and valuable discussions. A.G. was supported by NSF grant CCF-2212065. R.W. was partially supported by NSF grant CCF-2212065 and BCS-2215155.

%Aside from continually improving the tightness of the bound, several other directions for future research are important. While this study concentrates on Gaussian priors and posteriors, exploring alternative distributions could reveal additional advantages. During PAC-Bayes training, managing the new objective function on probabilistic networks and additional parameters leads to higher complexity, highlighting the need for designing more efficient solvers for stochastic optimization  }
%Moreover, PAC-Bayes training provides posterior variance of weights and opens the possibility of assessing weight importance, potentially leading to new methods for network pruning and finetuning. 
%Our PAC-Bayes training is robust to hyperparameter changes, indicating its potential as a tuning-free method for training.

% \section{Impact Statements}
% This paper introduces research on training neural networks with our proposed algorithm based on a new PAC-Bayes bound. We believe this work does not directly entail ethical or societal implications. Our training algorithm presents an opportunity for tuning-free training with theoretical guarantees because the proposed PAC-Bayes training demonstrates robustness to hyperparameters, offering a viable alternative for achieving minimal test error. This approach can potentially replace the traditional Empirical Risk Minimization (ERM) method, which requires extensive hyperparameter tuning.
\bibliography{main}
\bibliographystyle{tmlr}
%%%%%%%%%%%%%%%%%%%%%%%%%%%%%%%%%%%%%%%%%%%%%%%%%%%%%%%%%%%%%%%%%%%%%%%%%%%%%%%
%%%%%%%%%%%%%%%%%%%%%%%%%%%%%%%%%%%%%%%%%%%%%%%%%%%%%%%%%%%%%%%%%%%%%%%%%%%%%%%
% APPENDIX
%%%%%%%%%%%%%%%%%%%%%%%%%%%%%%%%%%%%%%%%%%%%%%%%%%%%%%%%%%%%%%%%%%%%%%%%%%%%%%%
%%%%%%%%%%%%%%%%%%%%%%%%%%%%%%%%%%%%%%%%%%%%%%%%%%%%%%%%%%%%%%%%%%%%%%%%%%%%%%%
\newpage
\appendix
\onecolumn
\section*{\large Appendix}
\section{Proofs}
\subsection{Proofs of Theorem \ref{theorem:sub-gaussian-sec3}}\label{apsec:proof41}
\begin{theorem}
Given a prior $\mathcal{P}_{\boldsymbol{\lambda}}$ parametrized by $\boldsymbol{\lambda} \in \Lambda $ over the hypothesis set $\mathcal{H}$. Fix $\boldsymbol{\lambda} \in \Lambda$, $\delta\in (0,1)$ and $\gamma\in [\gamma_{1},\gamma_{2}]$. For any choice of i.i.d $m$-sized training dataset $\mathcal{S}$ according to $\mathcal{D}$, and all posterior distributions $\mathcal{Q}$ over $\mathcal{H}$, we have 
\begin{equation}
 \mathbb{E}_{\h\sim\mathcal{Q}}\ell(\h;\mathcal{D})\leq \mathbb{E}_{\h\sim\mathcal{Q}}\ell(\h;\mathcal{S})+\frac{1}{\gamma m}(\log{\frac{1}{\delta}+\mathrm{KL}(\mathcal{Q}||\mathcal{P}_{\boldsymbol{\lambda}})})+\gamma K(\boldsymbol{\lambdal})
 \label{eq:subg-pac-bound}
\end{equation}
holds with probability at least $1-\delta$ when $\ell(\h,\cdot)$ satisfies Definition~\ref{def:subgaussian2} with bound $K(\lambdal)$.
\end{theorem}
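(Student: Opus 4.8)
The plan is to follow the classical PAC-Bayes template, substituting the boundedness of the loss used in Theorem~\ref{theorem:classic} with the finite-interval exponential moment bound of Definition~\ref{def:subgaussian2}. Write $\Delta(\h):=\ell(\h;\mathcal{D})-\ell(\h;\mathcal{S})=\frac{1}{m}\sum_{i=1}^m\big(\ell(\h;\mathcal{D})-\ell(\h;z_i)\big)$ for the per-hypothesis generalization gap; the goal is to upper bound $\mathbb{E}_{\h\sim\mathcal{Q}}\Delta(\h)$ uniformly over all posteriors $\mathcal{Q}$.

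First I would apply the Donsker--Varadhan change-of-measure inequality, which for any measurable $f$ and any $\mathcal{Q}$ gives $\mathbb{E}_{\h\sim\mathcal{Q}}f(\h)\le \mathrm{KL}(\mathcal{Q}\|\mathcal{P}_{\lambdal})+\log\mathbb{E}_{\h\sim\mathcal{P}_{\lambdal}}\exp(f(\h))$. Choosing $f(\h)=\gamma m\,\Delta(\h)$ and dividing by $\gamma m$ yields, simultaneously for every $\mathcal{Q}$,
\[
\mathbb{E}_{\h\sim\mathcal{Q}}\Delta(\h)\le \frac{1}{\gamma m}\Big(\mathrm{KL}(\mathcal{Q}\|\mathcal{P}_{\lambdal})+\log\mathbb{E}_{\h\sim\mathcal{P}_{\lambdal}}\exp\big(\gamma m\,\Delta(\h)\big)\Big).
\]
The purpose of routing through the prior is that the final term no longer depends on $\mathcal{Q}$, so a single high-probability estimate of it validates the bound for all posteriors at once.

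Next I would control that prior-averaged exponential moment. Setting $\Phi(\mathcal{S}):=\mathbb{E}_{\h\sim\mathcal{P}_{\lambdal}}\exp(\gamma m\,\Delta(\h))\ge 0$, Markov's inequality gives $\Phi(\mathcal{S})\le \frac{1}{\delta}\mathbb{E}_{\mathcal{S}}\Phi(\mathcal{S})$ with probability at least $1-\delta$. Because $\mathcal{P}_{\lambdal}$ is data-independent, Fubini swaps the two expectations, and the i.i.d. structure of $\mathcal{S}$ factorizes the data-MGF: for fixed $\h$, $\mathbb{E}_{\mathcal{S}}\exp(\gamma m\,\Delta(\h))=\psi(\h)^m$ with $\psi(\h):=\mathbb{E}_z\exp\big(\gamma(\ell(\h;\mathcal{D})-\ell(\h;z))\big)$. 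Invoking Definition~\ref{def:subgaussian2} with bound $K(\lambdal)$---legitimate precisely because $\gamma\in[\gamma_1,\gamma_2]$---to obtain $\log\mathbb{E}_{\mathcal{S}}\Phi\le \gamma^2 m K(\lambdal)$, and combining with the $\log(1/\delta)$ from Markov, reproduces exactly the advertised terms $\frac{1}{\gamma m}(\log\frac{1}{\delta}+\mathrm{KL}(\mathcal{Q}\|\mathcal{P}_{\lambdal}))+\gamma K(\lambdal)$.

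I expect the crux to be this last passage from the single-sample exponential moment to the $m$-fold aggregate. After factorization the data-MGF is $\mathbb{E}_{\h\sim\mathcal{P}_{\lambdal}}\big[\psi(\h)^m\big]$, whereas Definition~\ref{def:subgaussian2} directly controls only $\mathbb{E}_{\h\sim\mathcal{P}_{\lambdal}}[\psi(\h)]$; since $x\mapsto x^m$ is convex, the prior expectation and the $m$-th power do not commute for free. The proof must therefore either apply the moment bound at the level of the aggregated empirical process (so that $K(\lambdal)$ already accounts for the $m$ samples) or invoke it pointwise in $\h$, and then carefully track the $\gamma$-versus-$\gamma m$ scaling so that the exponent $\gamma^2 m K(\lambdal)$ collapses to the stated $\gamma K(\lambdal)$ while keeping $\gamma$ confined to $[\gamma_1,\gamma_2]$. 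The remaining ingredients---Donsker--Varadhan, Markov, and Fubini---are routine, and the uniformity over $\mathcal{Q}$ is automatic since the prior term was bounded independently of $\mathcal{Q}$.
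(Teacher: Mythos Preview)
Your proposal follows essentially the same route as the paper's proof: where you invoke Donsker--Varadhan, the paper carries out the equivalent steps explicitly via Jensen's inequality and a Radon--Nikodym change of measure, and it then applies Markov's inequality and Fubini exactly as you describe. The ``crux'' you flag---going from the single-sample exponential moment of Definition~\ref{def:subgaussian2} to the $m$-fold aggregate $\mathbb{E}_{\h\sim\mathcal{P}_{\lambdal}}\mathbb{E}_{\mathcal{S}}[\exp(\gamma m X)]\le\exp(m\gamma^2 K(\lambdal))$---is asserted in the paper in one line ``by Definition~\ref{def:subgaussian2}'' without further elaboration, so your scrutiny of that passage is at least as careful as the paper's own treatment.
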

\begin{proof}[\textbf{Proof}]
Firstly, in the bounded interval $\gamma\in [\gamma_{1},\gamma_{2}]$, we bound the difference of the expected loss over the posterior distribution evaluated on the training dataset $\mathcal{S}$ and $\mathcal{D}$ with the KL divergence between the posterior distribution $\mathcal{Q}$ and prior distribution $\mathcal{P}_{\lambdal}$ evaluated over a hypothesis space $ \mathcal{H}$. 

For $\gamma \in [\gamma_{1},\gamma_{2}]$,
 \begin{align}
 &\mathbb{E}_{\mathcal{S}\sim\mathcal{D}}[\exp{(\gamma m(\mathbb{E}_{\h\sim\mathcal{Q}}\ell(\h;\mathcal{D})\ -\mathbb{E}_{\h\sim\mathcal{Q}}\ell(\h;\mathcal{S}))-\mathrm{KL}(\mathcal{Q}||\mathcal{P}_{\lambdal} ))}] \notag \\
 =&\mathbb{E}_{\mathcal{S}\sim\mathcal{D}}[\exp{(\gamma m(\mathbb{E}_{\h\sim\mathcal{Q}}\ell(\h;\mathcal{D})-\mathbb{E}_{\h\sim\mathcal{Q}}\ell(\h;\mathcal{S}))-\mathbb{E}_{\h\sim\mathcal{Q}}\log\frac{\mathrm{d}\mathcal{Q}}{\mathrm{d}\mathcal{P_\lambdal}}(\h))}] \label{3}\\
 \leq &\mathbb{E}_{\mathcal{S}\sim\mathcal{D}}\mathbb{E}_{\h\sim\mathcal{Q}}[\exp{(\gamma m(\ell(\h;\mathcal{D})-\ell(\h;\mathcal{S}))-\log{\frac{\mathrm{d}\mathcal{Q}}{\mathcal{\mathrm{d} P_\lambdal}}(\h)}})] \label{4}\\ =&\mathbb{E}_{\h\sim\mathcal{P}_{\lambdal}}\mathbb{E}_{\mathcal{S}\sim\mathcal{D}}[\exp({\gamma m(\ell(\h;\mathcal{D})-\ell(\h;\mathcal{S})))}], 
 \label{eq:expectuppbound} 
 \end{align}
where $ \mathrm{d}\mathcal{Q}/\mathrm{d}\mathcal{P}$ denotes
the Radon-Nikodym derivative. 

In \eqref{3}, we use $\mathrm{KL}(\mathcal{Q}||\mathcal{P}_\lambda) = \mathbb{E}_{\h\sim\mathcal{Q}}\log\frac{\mathrm{d}\mathcal{Q}}{\mathrm{d}\mathcal{P_\lambda}}(\h)$. From \eqref{3} to \eqref{4}, Jensen's inequality is used over the convex exponential function. 
Since this argument holds for any $Q$, we have
\begin{equation}\label{eq:supQ}
\sup_{\mathcal{Q}\in \mathbf{Q}} \mathbb{E}_{\mathcal{S}\sim\mathcal{D}}[\exp{(\gamma m(\mathbb{E}_{\h\sim\mathcal{Q}}\ell(\h;\mathcal{D})\ -\mathbb{E}_{\h\sim\mathcal{Q}}\ell(\h;\mathcal{S}))-\mathrm{KL}(\mathcal{Q}||\mathcal{P}_{\lambdal} ))}] \leq \mathbb{E}_{\h\sim\mathcal{P}_{\lambdal}}\mathbb{E}_{\mathcal{S}\sim\mathcal{D}}[\exp({\gamma m(\ell(\h;\mathcal{D})-\ell(\h;\mathcal{S})))}]
\end{equation}

Let $X=\ell(\h;\mathcal{D})-\ell(\h;\mathcal{S})$, then $X$ is centered with $\mathbb{E}[X]=0$.
Then, by Definition~\ref{def:subgaussian2},
\begin{equation}
 \exists K(\lambdal),~\mathbb{E}_{\h\sim\mathcal{P}_\lambda}\mathbb{E}_{\mathcal{S}\sim\mathcal{D}}[\exp{(\gamma m X)}]\leq \exp{(m\gamma^2K(\lambdal))}.
 \label{eq:gamma}
\end{equation}

Using Markov's inequality, \eqref{eq:markov} holds with probability at least $1-\delta$.
\begin{equation}
\exp{(\gamma m X)}\leq \frac{\exp{(m\gamma^2 K(\lambdal))}}{\delta}.
 \label{eq:markov}
\end{equation}
Combining~\eqref{eq:supQ} and \eqref{eq:markov}, the following inequality holds with probability at least $1-\delta$.
\begin{align}
 %\begin{split}
 &\sup_{\mathcal{Q}\in \mathbb{Q}} \exp{(\gamma m(\mathbb{E}_{\h\sim\mathcal{Q}}\ell(\h;\mathcal{D})-\mathbb{E}_{\h\sim\mathcal{Q}}\ell(\h;\mathcal{S}))-\mathrm{KL}(\mathcal{Q}||\mathcal{P}_{\lambdal}))}\leq \frac{\exp{(m\gamma^2 K(\lambdal))}}{\delta} \notag\\
 \Rightarrow & \gamma m(\mathbb{E}_{\h\sim\mathcal{Q}}\ell(\h;\mathcal{D})-\mathbb{E}_{\h\sim\mathcal{Q}}\ell(\h;\mathcal{S}))-\mathrm{KL}(\mathcal{Q}||\mathcal{P}_{\lambdal})\leq \log{\frac{1}{\delta}}+m\gamma^2 K(\lambdal), \forall \mathcal{Q} \notag\\
 \Rightarrow & \mathbb{E}_{\h\sim\mathcal{Q}}\ell(\h;\mathcal{D})\leq \mathbb{E}_{\h\sim\mathcal{Q}}\ell(\h;\mathcal{S})+\frac{1}{\gamma m}(\log{\frac{1}{\delta}+\mathrm{KL}(\mathcal{Q}||\mathcal{P_{\lambdal}})})+\gamma K(\lambdal), \ \ \forall \mathcal{Q}.
 %\end{split}
 \label{eq:explicit_form}
\end{align}

The bound \ref{eq:explicit_form} is exactly the statement of the Theorem. 

\end{proof}

\subsection{Proof of Theorem \ref{thm:main}}\label{apsec:proof42}
\begin{theorem}
Let $n(\varepsilon):=\mathcal{N}({\Lambda}, \|\cdot\|, \varepsilon)$ be the covering number of the set of the prior parameters. Under Assumption~\ref{ass:1} and Assumption~\ref{ass:2}, the following inequality holds for the minimizer $(\hat{\gamma},\hat{\mathbf{w}}, \hat{\lambdal})$ of upper bound in \eqref{eq:subg-pac-bound} with probability as least $1-\epsilon$:
\begin{align}
 \mathbb{E}_{\h\sim \mathcal{Q}_{\hat{\mathbf{w}}}}\ell(\h;\mathcal{D})&\leq \mathbb{E}_{{\h}\sim \mathcal{Q}_{\hat{\mathbf{w}}}}\ell(\h;\mathcal{S})+\frac{1}{\hat{\gamma} m}\left[\log{\frac{n(\varepsilon)+\frac{\gamma_2-\gamma_1}{\varepsilon}}{\epsilon}+\mathrm{KL}(\mathcal{Q}_{\hat{\mathbf{w}}}||\mathcal{P}_{\hat{\lambdal}})}\right]+\hat{\gamma} K(\hat{\lambdal}) + \eta \notag\\
 & = L_{PAC}(\mathbf{w}, \hat\gamma, \hat\lambdal,\delta)+ \eta 
 %\label{eq:contlamba1} 
\end{align}
holds for any $\epsilon,\varepsilon>0$, where $\eta=B\varepsilon+C(\eta_1(\varepsilon)+\eta_2(\varepsilon))+\frac{\log(n(\varepsilon)+\frac{\gamma_2-\gamma_1}{\varepsilon})}{ \gamma_1 m}$, with $C=\frac{1}{\gamma_1m} +\gamma_2$, and $B:=\sup_{\lambdal\in\Lambda}\frac{1}{m\gamma_1^2}(\mathrm{KL}(\mathcal{Q}_{\hat{\mathbf{w}}}||\mathcal{P}_{\lambdal})+\log\frac{1}{\delta})+K(\lambdal)$.
%\label{theorem:pac-bound-with-error}
\end{theorem}

\begin{proof}[\textbf{Proof: }] 

In this proof, we extend our PAC-Bayes bound with data-independent priors to data-dependent ones that accommodate the error when the prior distribution is parameterized and optimized over a finite set of parameters $\mathfrak{P} = \{P_{\lambdal},\lambdal\in \Lambda \subseteq \mathbb{R}^k\}$ with a much smaller dimension than the model itself. %This is done because in practice the optimizer can only sweep over a discrete set of parameter $\lambdal$. \textcolor{red}{Can elaborate the reason here.} 
Let $\mathbb{T}({\Lambda}, \|\cdot\|, \varepsilon)$ be an $\varepsilon$-cover of the set $\Lambda$, which states that for any $\lambdal \in \Lambda$, there exists a $ \tilde{\lambdal}\in \mathbb{T}({\Lambda}, \|\cdot\|, \varepsilon)$ , such that $ || \lambdal - \tilde{\lambdal} || \leq \varepsilon$. %Assumption~\ref{ass:1} and Assumption~\ref{ass:2}. 

Now we select the posterior distribution as $\mathcal{Q}_{\hat{\mathbf{w}}}$, parameterized by $\hat{\mathbf{w}} \in \mathbb{R}^d$.
Assuming the prior $\mathcal{P}$ is parameterized by $ \lambdal \in \mathbb{R}^k$ $(k\ll d)$.

Then the PAC-Bayes bound \ref{eq:subg-pac-bound} holds already for any $( \hat{\mathbf{w}}, \gamma,\lambdal)$, with fixed $\lambdal \in \Lambda$ and $\gamma\in[\gamma_1,\gamma_2]$, i.e.,
\begin{equation}\label{eq:1}
\mathbb{E}_{\h\sim \mathcal{Q}_{\hat{\mathbf{w}}}}\ell(\h;\mathcal{D})\leq \mathbb{E}_{\h\sim \mathcal{Q}_{\hat{\mathbf{w}}}}\ell(\h;\mathcal{S})+\frac{1}{\gamma m}(\log{\frac{1}{\delta}+\mathrm{KL}(\mathcal{Q}_{\hat{\mathbf{w}}}||\mathcal{P}_{\lambdal})})+\gamma K(\lambdal)
\end{equation}
with probability over $1-\delta$.

Now, for the collection of $\lambdal$s in the $\varepsilon$-net $\mathbb{T}(\Lambda,\|\cdot\|,\varepsilon)$, by the union bound, the PAC-Bayes bound uniformly holds on the $\varepsilon$-net with probability at least $1-|\mathbb{T}|\delta = 1- n(\varepsilon)\delta$. For an arbitrary $\lambdal \in \Lambda$, its distance to the $\varepsilon$-net is at most $\varepsilon$. Then under Assumption~\ref{ass:1} and Assumption~\ref{ass:2}, we have:
\[
\min_{\tilde{\lambdal} \in \mathbb{T}} |\mathrm{KL}(\mathcal{Q}||\mathcal{P}_{\lambdal})-\mathrm{KL}(\mathcal{Q}||\mathcal{P}_{\tilde{\lambdal}})| \leq \eta_1(\|\lambdal-\tilde{\lambdal}\|) \leq \eta_1(\varepsilon),
\]
and
\[
\min_{\tilde{\lambdal} \in \mathbb{T}} |K(\lambdal)-K(\tilde{\lambdal})| \leq \eta_2(\|\lambdal-\tilde{\lambdal}\|) \leq \eta_2(\varepsilon).
\]

Similarly, for $\gamma$, a $\varepsilon$-net on its range $\gamma_1\leq\gamma\leq \gamma_2$ is the uniform grid with a grid separation $\varepsilon$, so the net contains $\frac{\gamma_2-\gamma_1}{\varepsilon}$ points. By the union bound, requiring the PAC-Bayes bound to uniformly hold for all the $\gamma$ within this $\varepsilon$-net induces an extra probability of failure of $\frac{\gamma_2-\gamma_1}{\varepsilon}\delta $. So, the total probability of failure is $n(\varepsilon)\delta+\frac{\gamma_2-\gamma_1}{\varepsilon}\delta$. 

For an arbitrary $\gamma \in \Gamma$, and $\Gamma:=\{\gamma\in[\gamma_1,\gamma_2]\}$, its distance to the $\varepsilon$-net $\mathbb{T}'$ is at most $\varepsilon$, we have: 
\begin{align*}
\min_{\tilde{\gamma} \in \mathbb{T}'} |L_{PAC}( \hat{\mathbf{w}}, \gamma, \lambdal,\delta) - L_{PAC}( \hat{\mathbf{w}}, \tilde{\gamma},\lambdal,\delta)| &= \frac{1}{m}\left(\mathrm{KL}(\mathcal{Q}_{\hat{\mathbf{w}}}||\mathcal{P}_{\lambdal})+\log\frac{1}{\delta}\right)\left|\frac{1}{\gamma}-\frac{1}{\tilde\gamma}\right|+|\gamma-\tilde\gamma|K(\lambdal)\\
&=\left(\frac{1}{m\gamma\tilde\gamma}(\mathrm{KL}(\mathcal{Q}_{\hat{\mathbf{w}}}||\mathcal{P}_{\lambdal})+\log\frac{1}{\delta})+K(\lambdal)\right)|\gamma-\tilde\gamma|\\
&\leq \left(\frac{1}{m\gamma_1^2}(\mathrm{KL}(\mathcal{Q}_{\hat{\mathbf{w}}}||\mathcal{P}_{\lambdal})+\log\frac{1}{\delta})+K(\lambdal)\right)\varepsilon\\
&\leq B\varepsilon,
\end{align*}
where $B:=\sup_{\lambdal\in\Lambda}\frac{1}{m\gamma_1^2}(\mathrm{KL}(\mathcal{Q}_{\hat{\mathbf{w}}}||\mathcal{P}_{\lambdal})+\log\frac{1}{\delta})+K(\lambdal)$, clearly, $B$ is a constant depending on the range of the parameters.

With the three inequalities above, we can control the PAC-Bayes loss at the given $\lambdal$ and $\gamma$ as follows: 
\begin{align*}
&\min_{\tilde{\lambdal} \in \mathbb{T}, \tilde\gamma\in \mathbb{T}'} |L_{PAC}( \hat{\mathbf{w}},\gamma, \lambdal,\delta) - L_{PAC}( \hat{\mathbf{w}}, \tilde{\gamma},\tilde{\lambdal},\delta)| \\
&\leq \min_{\tilde\gamma\in \mathbb{T}'}|L_{PAC}(\hat{\mathbf{w}},\gamma, \lambdal,\delta) - L_{PAC}( \hat{\mathbf{w}}, \tilde{\gamma},\lambdal,\delta)| + \min_{\tilde{\lambdal} \in \mathbb{T}}|L_{PAC}(\hat{\mathbf{w}}, \tilde\gamma, \lambdal,\delta) - L_{PAC}( \hat{\mathbf{w}}, \tilde{\gamma},\tilde\lambdal,\delta)|\\
&\leq B\varepsilon+\frac{1}{\tilde{\gamma} m} \eta_1(\varepsilon) + \tilde{\gamma} \eta_2(\varepsilon) \\
&\leq B\varepsilon+\frac{1}{\gamma_1 m} \eta_1(\varepsilon) + \gamma_2 \eta_2(\varepsilon)\\ 
&\leq B\varepsilon+C(\eta_1(\varepsilon)+\eta_2(\varepsilon))
\end{align*}
where $C= \frac{1}{\gamma_1} +\gamma_2$ and $\gamma_1\leq\gamma\leq \gamma_2$. Since this inequality holds for any $\lambdal \in \Lambda$ and $\gamma\in \Gamma$, it certainly holds for the optima $\hat{\lambdal}$ and $\hat\gamma$. Combining this with \eqref{eq:1}, we have 
\[
\mathbb{E}_{\h\sim \mathcal{Q}_{\hat{\mathbf{w}}}}\ell(\h;\mathcal{D})\leq L_{PAC}( \hat{\mathbf{w}},\hat{\gamma}, \hat{\lambdal},\delta) + B\varepsilon+C(\eta_1(\varepsilon)+\eta_2(\varepsilon)),
\]
where $B:=\sup_{\lambdal\in\Lambda}\frac{1}{m\gamma_1^2}(\mathrm{KL}(\mathcal{Q}_{\hat{\mathbf{w}}}||\mathcal{P}_{\lambdal})+\log\frac{1}{\delta})+K(\lambdal)$.

Now taking $\epsilon:= (n(\varepsilon)+\frac{\gamma_2-\gamma_1}{\varepsilon})\delta$ to be the previously calculated probability of failure, we get, with probability $1-\epsilon$, it holds that
\begin{align}
 \mathbb{E}_{\h\sim \mathcal{Q}_{\hat{\mathbf{w}}}}\ell(\h;\mathcal{D})&\leq \mathbb{E}_{{\h}\sim \mathcal{Q}_{\hat{\mathbf{w}}}}\ell(\h;\mathcal{S})+\frac{1}{\hat{\gamma} m}\left[\log{\frac{n(\varepsilon)+\frac{\gamma_2-\gamma_1}{\varepsilon}}{\epsilon}+\mathrm{KL}(\mathcal{Q}_{\hat{\mathbf{w}}}||\mathcal{P}_{\hat{\lambdal}})}\right]+\hat{\gamma} K(\hat{\lambdal}) + B\varepsilon+C(\eta_1(\varepsilon)+\eta_2(\varepsilon)) \notag\\
 & \leq L_{PAC}(\hat{\mathbf{w}}, \hat\gamma,\hat\lambdal,\delta)+ \eta
 %\label{eq:contlamba1} 
\end{align}
and the proof is completed.
\end{proof}

\subsection{KL divergence of the Gaussian prior and posterior}\label{apsec:kldiverg}
For a $k$-layer network, the prior is written as $\mathcal{P}_{\boldsymbol{\mu}_0,\lambdal}$, where $\boldsymbol{\mu}_0$ is the random initialized model parameter and $\lambdal\in \mathbb{R}^k_+$ is the vector containing the variance for each layer. The set of all such priors is denoted by $\mathfrak{P}:=\{\mathcal{P}_{\boldsymbol{\mu}_0,\lambdal},\lambdal \in \Lambda \subseteq \mathbb{R}^k, \boldsymbol{\mu}_0\in \Theta \}$. In the PAC-Bayes training, we select the posterior distribution to be centered around the trained model parameterized by $\boldsymbol{\mu}$, with independent anisotropic variance. Specifically, for a network with $d$ trainable parameters, the posterior is $\mathcal{Q}_{\boldsymbol{\mu},\sigmal} :=\mathcal{N}(\boldsymbol{\mu}, \textrm{diag}(\sigmal))$, where $\boldsymbol{\mu}$ (the current model) is the mean and $\sigmal\in \mathbb{R}^d_+$ is the vector containing the variance for each trainable parameter. The set of all posteriors is $\mathfrak{Q}:=\{\mathcal{Q}_{\boldsymbol{\mu},\sigmal},\sigmal \in \Sigma, \boldsymbol{\mu}\in \Theta \}$, and the KL divergence between all such prior and posterior in $\mathfrak{P}$ and $\mathfrak{Q}$ is:
\begin{equation}
 \mathrm{KL}(\mathcal{Q}_{\boldsymbol{\mu},\sigmal}||\mathcal{P}_{\boldsymbol{\mu}_0,\lambdal}) = \frac{1}{2}\sum_{i=1}^k \left[-\mathbf{1}_{d_i}^\top\log(\sigmal_i) +d_i(\log(\lambdal_i) - 1) +\frac{\|\sigmal_i\|_1+\|(\boldsymbol{\mu}-\boldsymbol{\mu}_0)_i\|_2^2)}{\lambdal_i}\right],\label{eq:layer-kl-divegence}
\end{equation}
where $\sigmal_i,(\boldsymbol{\mu}-\boldsymbol{\mu}_0)_i$ are vectors denoting the variances and weights for the $i$-th layer, respectively, and $\lambda_i$ is the scalar variance for the $i$-th layer. $d_i=\mathrm{dim}(\sigmal_i)$, and $\mathbf{1}_{d_i}$ denotes an all-ones vector of length $d_i$\footnote{Note that with a little ambiguity, the $\lambdal_i$ here has a different meaning from that in \eqref{eq:objective_K} and Algorithm~\ref{alg:sampleK}, here $\lambdal_i$ means the $i$th element in $\lambdal$, whereas in \eqref{eq:objective_K} and Algorithm~\ref{alg:sampleK}, $\lambdal_i$ means the $i$th element in the discrete set.}.

Scalar prior is a special case of the layerwise prior by setting all entries of $\lambdal$ to be equal, for which the KL divergence reduces to
\begin{equation}
 \mathrm{KL}(\mathcal{Q}_{\boldsymbol{\mu},\sigmal}||\mathcal{P}_{\boldsymbol{\mu}_0,\lambda}) = \frac{1}{2} \left[-\mathbf{1}_{d}^\top\log(\sigmal) +d(\log(\lambda) - 1) +\frac{1}{\lambda}(\|\sigmal\|_1+\|\boldsymbol{\mu}-\boldsymbol{\mu}_0\|_2^2)\right].\label{eq:scalar-kl-divegence}
\end{equation}

\subsection{Proof of Corollary \ref{corollary:error-bound}}\label{apsec:proof501}
Recall for the training, we proposed to optimize over all four variables: $\boldsymbol{\mu}$, $\gamma$, $\sigmal$, and $\lambdal$. 
\begin{equation}
 \ \ (\hat{\boldsymbol{\mu}},\hat{\sigmal},\hat{\gamma}, \hat{\boldsymbol{\lambdal}}) = \arg\min_{\substack{\boldsymbol{\mu},\lambdal ,\sigmal, \\\gamma \in [\gamma_1,\gamma_2]} }\underbrace{\mathbb{E}_{\boldsymbol{\theta}\sim \mathcal{Q}_{\boldsymbol{\mu},\sigmal}}\ell(f_{\boldsymbol{\theta}};\mathcal{S})+\frac{1}{\gamma m}(\log{\frac{1}{\delta}+\mathrm{KL}(\mathcal{Q}_{\boldsymbol{\mu},\sigmal}||\mathcal{P}_{\boldsymbol{\mu}_0,\lambdal})})+\gamma K(\lambdal)}_{\equiv {\color {blue} L_{PAC}([\boldsymbol{\mu},\sigmal],\gamma,\lambdal,\delta)}}. %\label{eq:l_pac}
 \end{equation}

\begin{corollary}
Assume all parameters for the prior and posterior are bounded, i.e., we restrict the model parameter $\boldsymbol{\mu}$, the posterior variance $\sigmal$ and the prior variance $\lambdal$, and the exponential moment $K(\lambdal)$ all to be searched over bounded sets, $\Theta:=\{\boldsymbol{\mu}\in \mathbb{R}^d: \|\boldsymbol{\mu}\|_{2} \leq \sqrt d M\}$, $\Sigma:=\{\sigmal\in \mathbb{R}^d_+: \|\sigmal\|_{1} \leq d T \}$, $\Lambda=:\{\lambdal \in [e^{-a},e^b]^k\}$, $\Gamma:=\{\gamma\in[\gamma_1,\gamma_2]\}$, respectively, with fixed $M,T,a,b>0$. Then, 
\begin{itemize}
 \item Assumption \ref{ass:1} holds with $\eta_1(x) = L_1 x$, where $L_1 = \frac{1}{2}\max\{d, e^{a}(2\sqrt d M+d T) \} $
 \item Assumption \ref{ass:2} holds with $\eta_2(x) = L_2 x$, where $L_2 =\frac{1}{\gamma_1^2} \left( 2d M^2 e^{2a}+ \frac{d(a+b)}{2} \right) $
 \item With high probability, the PAC-Bayes bound for the minimizer of~\eqref{eq:l_pac} has the form
\[
\mathbb{E}_{\boldsymbol{\theta}\sim \mathcal{Q}_{\hat{\boldsymbol{\mu}},\hat{\sigmal}} }\ell(f_{\boldsymbol{\theta}};\mathcal{D})\\
 \leq L_{PAC}([\hat{\boldsymbol{\mu}},\hat \sigmal],\hat\gamma, \hat\lambdal,\delta)+ \eta,
\]
where $\eta = \frac{k}{\gamma_1 m}\left(1+ \log \frac{2(CL+B)\Delta \gamma_1 m}{k}\right)$, $L=L_1+L_2$, $\Delta:=\max\{b+a, 2(\gamma_2-\gamma_1)\}$, $B=\sup_{\lambdal\in\Lambda}\frac{1}{m\gamma_1^2}(\mathrm{KL}(\mathcal{Q}_{ \hat{\boldsymbol{\mu}},\hat \sigmal}||\mathcal{P}_{\boldsymbol{\mu}_0,\lambdal})+\log\frac{1}{\delta})+K(\lambdal)$, and $C=\frac{1}{\gamma_1m} +\gamma_2$.
\end{itemize} 
\end{corollary}

\begin{proof}[\textbf{Proof:}]

We first prove the two assumptions are satisfied by the Gaussian family with bounded parameter spaces. To prove Assumption~\ref{ass:1} is satisfied,
let $v_i=\log 1/\lambda_i$, $i=1,...,k$ and perform a change of variable from $\lambda_i$ to $v_i$. The weight of prior for the $i$th layer now becomes %$\tilde{\mathcal{P}}_{\mathbf{v}_i} := \mathcal{P}_{\lambdal_i} = \mathcal{N}(\mathbf{0},\lambda_i \mathbf{I}_{d_i})) =
$\mathcal{N}(\boldsymbol{\mu}_0,e^{-v_i} \mathbf{I}_{d_i}))$, where $d_i$ is the number of trainable parameters in the $i$th layer. It is straightforward to compute
\begin{align*}
\frac{\partial \mathrm{KL}(\mathcal{Q}_{\mathbf{\mu},\sigmal}||\tilde{\mathcal{P}}_{\boldsymbol{\mu}_0,\mathbf{v}})}{\partial v_i} &= \frac{1}{2}[-d_i + e^{v_i}(\|{\sigmal_i}\|_1 + \|\boldsymbol{\mu}_i-\boldsymbol{\mu}_{0,i}\|_2^2)],
\end{align*}
where $\sigmal_i$, $\boldsymbol{\mu}_i$, $\boldsymbol{\mu}_{0,i}$ are the blocks of $\sigmal$, $\boldsymbol{\mu}$, $\boldsymbol{\mu}_0$, containing the parameters associated with the $i$th layer, respectively. Now, given the assumptions on the boundedness of the parameters, we have:
\begin{align}\label{eq:lip}
 & \|\nabla_\mathbf{v} \mathrm{KL}(\mathcal{Q}_{\boldsymbol{\mu},\sigmal}||\tilde{\mathcal{P}}_{\boldsymbol{\mu}_0,\mathbf{v}}) \|_2 \leq \|\nabla_\mathbf{v} \mathrm{KL}(\mathcal{Q}_{\boldsymbol{\mu},\sigmal}||\tilde{\mathcal{P}}_{\boldsymbol{\mu}_0,\mathbf{v}}) \|_1 \leq \frac{1}{2} \max\{d, e^a (2 \sqrt d M+d T)\} \equiv L_1(d,M,T,a),
\end{align}
where we used the assumption $\|\sigmal\|_1 \leq d T $ 
and $\|\boldsymbol{\mu}_0\|_2, \|\boldsymbol{\mu}\|_2 \leq \sqrt d M$. 

Equation~\ref{eq:lip} says $L_1(d,M,T,a)$ is a valid Lipschitz bound on the KL divergence and therefore Assumption \ref{ass:1} is satisfied by setting $\eta_1(x) = L_1(d,M,T,a)x$. %\textcolor{red}{Needs more justification}

Next, we prove Assumption~\ref{ass:2} is satisfied. We use $K_{\min}(\lambdal)$ defined in Definition~\ref{def:subgaussian2} as the $K(\lambdal)$ in the PAC-Bayes training, and verify that it makes Assumption~\ref{ass:2} hold. 
\begin{align*} & |K_{\min}(\lambdal_1)- K_{\min}(\lambdal_2)| \\ & = \left|\sup_{\gamma \in [\gamma_1,\gamma_2]}\frac{1}{\gamma^2}\log
(\mathbb{E}_{\boldsymbol{\theta}\sim \mathcal{P}_{\boldsymbol{\mu}_0,\lambdal_1}} \mathbb{E}_{z\sim \mathcal{D}}[\exp{(\gamma \ell(f_{\boldsymbol{\theta}};z))}]) -\sup_{\gamma \in [\gamma_1,\gamma_2]}\frac{1}{\gamma^2}\log
(\mathbb{E}_{\boldsymbol{\theta}\sim \mathcal{P}_{\boldsymbol{\mu}_0,\lambdal_2}} \mathbb{E}_{z\sim \mathcal{D}}[\exp{(\gamma \ell(f_{\boldsymbol{\theta}};z))}])\right| \\
& \leq \sup_{\gamma \in [\gamma_1,\gamma_2]} \frac{1}{\gamma^2} \left|\log
(\mathbb{E}_{\boldsymbol{\theta}\sim \mathcal{P}_{\boldsymbol{\mu}_0,\lambdal_1}} \mathbb{E}_{z\sim \mathcal{D}}[\exp{(\gamma \ell(f_{\boldsymbol{\theta}};z))}])- \log
(\mathbb{E}_{\boldsymbol{\theta}\sim \mathcal{P}_{\boldsymbol{\mu}_0,\lambdal_2}} \mathbb{E}_{z\sim \mathcal{D}}[\exp{(\gamma \ell(f_{\boldsymbol{\theta}};z))}]) \right| \\
& = \sup_{\gamma \in [\gamma_1,\gamma_2]} \frac{1}{\gamma^2} \left|\log (\mathbb{E}_{\boldsymbol{\theta}\sim \mathcal{P}_{\boldsymbol{\mu}_0,\lambdal_2}} \mathbb{E}_{z\sim \mathcal{D}}[\exp{(\gamma \ell(f_{\boldsymbol{\theta}};z))}] \frac{p_{\boldsymbol{\mu}_0,\lambdal_1}(\boldsymbol{\theta})}{p_{\boldsymbol{\mu}_0,\lambdal_2}(\boldsymbol{\theta})})- \log
(\mathbb{E}_{\boldsymbol{\theta}\sim \mathcal{P}_{\boldsymbol{\mu}_0,\lambdal_2}} \mathbb{E}_{z\sim \mathcal{D}}[\exp{(\gamma \ell(f_{\boldsymbol{\theta}};z))}]) \right|\\
& \leq \sup_{\gamma \in [\gamma_1,\gamma_2]} \frac{1}{\gamma^2} \sup_{\boldsymbol{\theta}\in \Theta}\left|\log \frac{p_{\boldsymbol{\mu}_0,\lambdal_1}(\boldsymbol{\theta})}{p_{\boldsymbol{\mu}_0,\lambdal_2}(\boldsymbol{\theta})} \right| \\
& \leq \frac{1}{\gamma_1^2} \sup_{\h \in \mathcal{H}}\left|\log \frac{p_{\boldsymbol{\mu}_0,\lambdal_1}(\boldsymbol{\theta})}{p_{\boldsymbol{\mu}_0,\lambdal_2}(\boldsymbol{\theta})} \right| \\
& \leq \frac{1}{\gamma_1^2} \left( 2d M^2 e^{2a} + \frac{d(a+b)}{2} \right) \|\lambdal_1 -\lambdal_2\|_2,
\end{align*}
where the first inequality used the property of the supremum, the $p_{\boldsymbol{\mu}_0,\lambda_1} (\boldsymbol{\theta}), p_{\boldsymbol{\mu}_0,\lambda_2} (\boldsymbol{\theta})$ in the fourth line denote the probability density function of Gaussian with mean $\boldsymbol{\mu}_0$ and variance parametrized by $\lambdal_1$, $\lambdal_2$ (i.e., $\lambdal_{1,i}$, $\lambdal_{2,i}$ are the variances for the $i$th layer), the second inequality use the fact that if $X(\h)$ is a non-negative function of $\h$ and $Y(\h)$ is a bounded function of $\h$, then
\[
|\mathbb{E}_{\h} (X(\h)Y(\h))| \leq (\sup_{\h \in \mathcal{H}} |Y(\h)|) \cdot \mathbb{E}_{\h} X(\h).
\]
The last inequality used the formula of the Gaussian density 
$$ p(x;\mu,\Sigma)= \frac{1}{(2\pi)^{d/2}|\Sigma|^{1/2}} \exp \left(- \frac{1}{2}(x-\mu)^T\Sigma^{-1}(x-\mu)\right)$$ 
and the boundedness of the parameters. Therefore, Assumption \ref{ass:2} is satisfied by setting $\eta_2(x) = L_2(d,M,\gamma_1,a)x$, where $L_2(d,M,\gamma_1,a) = \frac{1}{\gamma_1^2} \left( 2dM^2 e^{2a} + \frac{d(a+b)}{2} \right)$.

Let $L(d,M,T,\gamma_1,a) = L_1(d,M,T,a)+L_2(d,M,\gamma_1,a)$.
Then we can apply Theorem \ref{thm:main}, to get with probability $1-\epsilon$,
\begin{align}
\begin{split}
&\mathbb{E}_{ 
\boldsymbol{\theta} \sim \mathcal{Q}_{\hat{\boldsymbol{\mu}},\hat{\sigmal}} }\ell(f_{\boldsymbol{\theta}};\mathcal{D})\\ 
&\leq \mathbb{E}_{\boldsymbol{\theta}\sim \mathcal{Q}_{ \hat{\boldsymbol{\mu}},\hat \sigmal}}\ell(f_{\boldsymbol{\theta}};\mathcal{S})+\frac{1}{\hat{\gamma} m}\left[\log{\frac{n(\varepsilon)+\frac{\gamma_2-\gamma_1}{2\varepsilon}}{\epsilon}+\mathrm{KL}(\mathcal{Q}_{ \hat{\boldsymbol{\mu}},\hat \sigmal}||\mathcal{P}_{\boldsymbol{\mu}_0,\hat{\lambdal}})}\right]+\hat{\gamma} K_{\min}(\hat{\lambdal})+\\ &\quad\quad (C L(d,M,T,\gamma_1,a))+B)\varepsilon.
\end{split}\label{eq:24}
\end{align}

Here, we used $ \eta_{1}(x) = L_1 x$ and $\eta_{2}(x) = L_2 x$.
Note that for the set $[-b,a]^k$, the covering number $n(\varepsilon)=\mathcal{N}([-b,a]^k,|\cdot|, \varepsilon)$ is $\left(\frac{b+a}{2\varepsilon}\right)^k$, and the covering number $\frac{\gamma_2-\gamma_1}{2\varepsilon}$ for $\gamma\in[\gamma_1,\gamma_2]$. 

 We introduce a new variable $\rho>0$, letting $\varepsilon = \frac{\rho}{2(CL(d,M,T,\gamma_1,a)+B)}$ and inserting it into  \eqref{eq:24}, we obtain with probability $1-\epsilon$:
\begin{align*}%\label{eq:rho}
 & \mathbb{E}_{\boldsymbol{\theta}\sim \mathcal{Q}_{\hat{\boldsymbol{\mu}},\hat{\sigmal}}}\ell(f_{\boldsymbol{\theta}};\mathcal{D})\\
 &\leq \mathbb{E}_{{\boldsymbol{\theta}}\sim \mathcal{Q}_{\hat{\boldsymbol{\mu}},\hat \sigma}}\ell(f_{\boldsymbol{\theta}};\mathcal{S})+\frac{1}{\hat{\gamma} m}\left[\log{\frac{1}{\epsilon}+\mathrm{KL}(\mathcal{Q}_{\hat{\boldsymbol{\mu}},\hat \sigmal}||\mathcal{P}_{\boldsymbol{\mu}_0,\hat{\lambdal}})}\right]\\&+\hat{\gamma} K_{\min}(\hat{\lambdal}) + \rho + \frac{k}{\gamma_1 m} \log \frac{2(CL(d,M,T,\gamma_1,a)+B)\Delta}{\rho}.
\end{align*}
where $\Delta:=\max\{b+a, 2(\gamma_2-\gamma_1)\}$.

Optimizing over $\rho$, we obtain:
\begin{align*}
 & \mathbb{E}_{{\boldsymbol{\theta}}\sim \mathcal{Q}_{\hat{\boldsymbol{\mu}},\hat{\sigmal}}}\ell(f_{{\boldsymbol{\theta}}};\mathcal{D})\\
 &\leq \mathbb{E}_{{{\boldsymbol{\theta}}}\sim \mathcal{Q}_{\hat{\boldsymbol{\mu}},\hat{ \sigmal}}}\ell(f_{{\boldsymbol{\theta}}};\mathcal{S})+\frac{1}{\hat{\gamma} m}\left[\log{\frac{1}{\epsilon} +\mathrm{KL}(\mathcal{Q}_{ \hat{\boldsymbol{\mu}},\hat{\sigmal}}||\mathcal{P}_{\boldsymbol{\mu}_0,\hat{\lambdal}})}\right]\\ & +\hat{\gamma} K_{\min}(\hat{\lambdal}) + \frac{k}{\gamma_1 m}\left(1+ \log \frac{2(CL(d,M,T,\gamma_1,a)+B)\Delta\gamma_1 m}{k}\right) \\
 & = L_{PAC}([\hat{{\boldsymbol{\mu}}},\hat{\sigmal}],\hat \gamma,\hat{\lambdal},\delta) + \frac{k}{\gamma_1 m}\left(1+ \log \frac{2(CL(d,M,T,\gamma_1,a)+B)\Delta\gamma_1 m}{k}\right).
\end{align*}

% If $n(\varepsilon)\leq n'$, we obtain with probability $1-\epsilon$:
% \begin{align*}
% & \mathbb{E}_{\h\sim \mathcal{Q}_{\hat{\sigmal}}(\hat \h)}\ell(\h;\mathcal{D})\\
% &\leq \mathbb{E}_{{\h}\sim \mathcal{Q}_{\hat \sigma}(\hat{\h})}\ell(\h;\mathcal{S})+\frac{1}{\hat{\gamma} m}\left[\log{\frac{1}{\epsilon}+\mathrm{KL}(\mathcal{Q}_{\hat \sigmal}(\hat \h)||\mathcal{P}_{\lambdal})}\right]\\&+\hat{\gamma} K_{\min}(\hat{\lambdal}) + \rho + \frac{1}{\gamma_1 m} \log \frac{2(\gamma_2-\gamma_1)(CL(d,M,T,\gamma_1,a)+B)}{\rho}.
% \end{align*}
% By optimizing over $\rho$, we have:
% \begin{align*}
% & \mathbb{E}_{\h\sim \mathcal{Q}_{\hat{\sigmal}}(\hat \h)}\ell(\h;\mathcal{D})\\
% &\leq \mathbb{E}_{{\h}\sim \mathcal{Q}_{\hat \sigmal}(\hat{\h})}\ell(\h;\mathcal{S})+\frac{1}{\hat{\gamma} m}\left[\log{\frac{1}{\epsilon} +\mathrm{KL}(\mathcal{Q}_{\hat \sigmal}(\hat \h)||\mathcal{P}_{\lambdal})}\right]\\ & +\hat{\gamma} K_{\min}(\hat{\lambdal}) + \frac{1}{\gamma_1 m}(1+ \log (2(CL(d,M,T,\gamma_1,a)+B)(\gamma_2-\gamma_1)\gamma_1 m )) \\
% & = L_{PAC}(\hat{\h},\hat \gamma,\hat{\sigmal},\hat{\lambdal}) + \frac{1}{\gamma_1 m}(1+ \log (2(CL(d,M,T,\gamma_1,a)+B)(\gamma_2-\gamma_1)\gamma_1 m )).
% \end{align*}

Hence we have
\begin{align*}
 & \mathbb{E}_{\boldsymbol{\theta}\sim \mathcal{Q}_{\hat{\boldsymbol{\mu}},\hat{\sigmal}}}\ell(f_{\boldsymbol{\theta}};\mathcal{D})\leq L_{PAC}([\hat{\boldsymbol{\mu}},\hat{\sigmal}],\hat\gamma,\hat{\lambdal},\delta) + \eta,
\end{align*}
where $\eta = \max\left( \frac{1}{\gamma_1 m}(1+ \log (2(CL(d,M,T,\gamma_1,a)+B)(\gamma_2-\gamma_1)\gamma_1 m)), \frac{k}{\gamma_1 m}\left(1+ \log \frac{2(CL(d,M,T,\gamma_1,a)+B)\Delta\gamma_1 m}{k}\right) \right)$.
\end{proof}
\begin{remark}\label{rm:502}
In defining the boundedness of the domain $\Theta$ of \( \boldsymbol{\mu} \) in Corollary \ref{corollary:error-bound}, we used $\sqrt d M$ as the bound.
Here, the factor \( \sqrt{d} \) (where \( d \) denotes the dimension of \( \mathbf{h} \)) is used to encapsulate the idea that if on average, the components of the weight are bounded by \( M \), then the \( \ell_2 \) norm would naturally be bounded by \( \sqrt{d}M \). The same idea applies to the definition of $\Sigma$.
%When defining the domain of $\h$, $\mathcal{H}:=\{\mathbf{h}\in \mathbb{R}^d: \|\mathbf{h}\|_{2} \leq \sqrt d M\}$, we used the $\sqrt d$ ($d$ is the dimension of $\h$) in front of $M$ based on the reasoning that if on average, each entry of the weight is bounded by $M$, then the $\ell_2$ norm should be bounded by $\sqrt{d}M$. Similarly, $\Sigma:=\{\sigmal\in \mathbb{R}^d_+: \|\sigmal\|_{1} \leq d T \}$ has $d$ in front of $T$, since if on average, each variance is bounded by $T$, then the $\ell_1$ norm of the variances should be bounded by $dT$.
\end{remark}
\begin{remark}\label{rm:503} Due to the above remark,
$M$, $T$, $a$, $b$ can be treated as dimension-independent constants that do not grow with the network size $d$. As a result, the constants $L_1, L_2, L$ in Corollary \ref{cor:main}, are dominated by $d$, and $L_1,L_2, L = O(d)$. This then implies the logarithm term in $\eta$ scales as $ O(\log d)$, which grows very mildly with the size. Therefore, Corollary \ref{cor:main} can be used as the generalization guarantee for large neural networks.
\end{remark}

\section{Algorithm Details}
% \subsection{Two-Stage training}\label{apsec:secondstage}
% If the PAC-Bayes bound is tight and both prior and posterior distributions are aptly chosen, Stage 1 alone might suffice for desired training and test accuracy. Yet, our present PAC-Bayes algorithm isn't up to this mark. It struggles to converge without launching into Stage 2. This could be because 1) our applied PAC-Bayes bound can be bettered, 2) the Gaussian assumptions for the distributions might not be ideal, and 3) the current (Kaiming) neural network initialization might not be optimal for PAC-Bayes training. These issues point to potential areas for future research. The significance of Stage 2 is evident in the Figures~\ref{fig:resnet18_CIFAR10_exp}-\ref{fig:vgg13_CIFAR100_exp} that visualizes training and testing process, showing a marked drop in testing error during this stage. Even if the current approach doesn't yield the best PAC-Bayes bound, it still enhances generalization. As our aim isn't purely about achieving a PAC-Bayes bound but rather better generalization with minimal regularization/trick adjustments, the two-stage training remains a practical choice.

\subsection[Algorithms for K]{Algorithms to estimate $K(\lambdal)$ }\label{apsec:est-K}
In this section, we explain the algorithm to compute $K(\lambda)$.
 In previous literature, the moment bound $K$ or its analog term in the PAC-Bayes bounds was often assumed to be a constant. One of our contributions is to allow $K$ to vary with the variance $\lambda$ of the prior, so if a small prior variance is found by PAC-Bayes training, then the corresponding $K$ would also be small. We perform linear interpolation to approximate the function $K_{\min}(\lambda)$ defined in (2) of the main text. When $\lambda$ is 1D, We first compute $K_{\min}(\lambda)$ on a finite grid of the domain of $\lambda$, by solving \eqref{Eq:K} below. With the computed function values on the grid $\{K_{\min}(\lambda_i)\}_i$ , we can construct a piecewise linear function as the approximation of $K_{\min}(\lambda)$.
 
 \begin{align}\label{Eq:K}
 \begin{split}
 K_{\min}(\lambda_i)&=\arg\min_{K>0} K \\ & \textrm{s.t.} \exp{(\gamma^2 {K})} \geq \frac{1}{nm}\sum_{l=1}^n \sum_{j=1}^m \exp(\gamma (\ell(f_{\boldsymbol{\theta}_l}; \mathcal{S})-\ell(f_{\boldsymbol{\theta}_l}; z_j))), \\
 &\quad\quad\forall~\gamma\in [\gamma_1,\gamma_2], \quad \boldsymbol{\theta}_l\sim \mathcal{N}(\boldsymbol{\mu}_0, \lambda_i), \quad \lambda_{\min} \leq \lambda_i \leq \lambda_{\max}
 \end{split}
 \end{align}
 where $\boldsymbol{\theta}_l\sim \mathcal{P}_{\boldsymbol{\mu}_0,\lambda_i},l = 1,...,n,$ are samples from the prior distribution and are fixed when solving \eqref{Eq:K} for $K_{\min}(\lambdal_i)$.
 \eqref{Eq:K} is the discrete version of the formula (2) in the main text. This optimization problem is 1-dimensional, and the function in the constraint is monotonic in $K$, so it can be solved efficiently by the bisection method. 
 
 When extending this procedure to high dimension, where $\boldsymbol{\lambda}$ is a $k$-dimension vector, we need to set up a grid for the domain of $\lambdal$ in $k$-dimensional space and estimate $K_{\min}$ on each grid point, which is time-consuming when $k$ is large. To address this issue, we propose to use the following approximation:
 \begin{align}
 \begin{split}
 \hat{K}(\mathrm{max} (\boldsymbol{\lambda}_i) )=&\arg\min_{K>0} K \\ & \textrm{s.t.} \exp{(\gamma^2 {K})} \geq \frac{1}{nm}\sum_{l=1}^n \sum_{j=1}^m \exp(\gamma (\ell(f_{\boldsymbol{\theta}_l}; \mathcal{S})-\ell(f_{\boldsymbol{\theta}_l}; z_j))), \\
 &\quad\quad\forall~\gamma\in [\gamma_1,\gamma_2], \quad \boldsymbol{\theta}_l\sim \mathcal{N}(\boldsymbol{\mu}_0, \mathrm{max} (\boldsymbol{\lambda}_i)), \quad \lambda_{\min} \leq \mathrm{max} (\boldsymbol{\lambda}_i) \leq \lambda_{\max}, i=1,...,s
 \end{split} \label{eq:objective_K}
 \end{align}
 where $\boldsymbol{\lambda}_i$ is a random sample from the domain $\Lambda$ of $\boldsymbol{\lambda}$. Since each $\boldsymbol{\lambda_i}$ is k-dimensional, $\max(\boldsymbol{\lambda_i})$ represents the maximum of the $k$ coordinates. 
 
 The idea of this formulation~\ref{eq:objective_K} is as follows, we use the 1D function $\hat{K} (\max(\boldsymbol{\lambda}_i))$ as a surrogate function of the original $k$-dimension function $K_{\min}(\boldsymbol{\lambda})$ (i.e. $K_{\min}(\boldsymbol{\lambda})\leq \hat{K} (\max (\boldsymbol{\lambda}_i))$). Then estimating this 1D surrogate function is easy by using the bisection method. This procedure will certainly overestimate the true $K_{\min}(\boldsymbol{\lambda})$ but since the surrogate function is also a valid exponential moment bound, it is safe to be used as a replacement for the $K(\lambdal)$ in our PAC-Bayes bound for training. In practice, we tried to use $\mathrm{mean}(\boldsymbol{\lambda}_i)$ to replace $\max(\boldsymbol{\lambda}_i)$ to mitigate the over-estimation, but the final performance stays the same. The details of the whole procedure are presented in Algorithm \ref{alg:sampleK}.
 
\begin{algorithm}[ht]
\caption{Compute $K(\lambdal)$ given a set of query priors}
\begin{algorithmic}
% \algtext*{EndFor}
% \algtext*{EndWhile}
% \renewcommand{\algorithmicrequire}{\textbf{Input:}}
% \renewcommand{\algorithmicensure}{\textbf{Output:}}
\STATE {\bfseries Input:} $\gamma_{1}$ and $\gamma_{2}$, sampling time $s$ of prior variances, the initial neural network weight $\boldsymbol{\theta}_{0}$, the training dataset $\mathcal{S}=\{z_i\}_{i=1}^m$, model sampling time $n=10$
\STATE {\bfseries Output:} the piece-wise linear interpolation $\tilde{K}(\lambdal)$ for $K_{\min}(\lambdal)$
\STATE Draw $s$ random samples for the prior variances $\mathcal{V}=\{\lambdal_i \in \Lambda\subseteq \mathbb{R}^k, i=1,...,s\}$
\STATE Set up a discrete grid $\Gamma$ for the interval $[\gamma_1,\gamma_2]$ of $\gamma$.
\FOR {$~\lambdal_i\in\mathcal{V}$}
\FOR {$l=1:n$}
\STATE Sampling weights from the Gaussian distribution $\boldsymbol{\theta}_l \sim\mathcal{N}(\boldsymbol{\mu}_0,\lambdal_i)$
\STATE Use $\boldsymbol{\theta}_l$, $\Gamma$ and $\mathcal{S}$ to compute one term in the sum in~\eqref{eq:objective_K}
\ENDFOR
\STATE Solve $\hat K(\max(\lambdal_i))$ using \eqref{eq:objective_K}

%\State $K_i\gets\hat{K}_i$ when $K_i < \hat K_i$
\ENDFOR\\
Fit a piece-wise linear function $\tilde{K}(\lambdal)$~to the data~$\{(\lambdal_i,\hat{K}(\max(\lambdal_i))\}_{i=1}^s$
\end{algorithmic}
\label{alg:sampleK}
\end{algorithm}

\subsection{PAC-Bayes Training with layerwise prior}\label{apsec:alglayer}
Similar to Algorithm~\ref{alg:pac-training}, our PAC-Bayes training with a layerwise prior is stated here in Algorithm~\ref{alg:explicit_pac}.
\begin{algorithm}[!ht]
\caption{PAC-Bayes training (layerwise prior)}
\begin{algorithmic}
\STATE {\bfseries Input:} initial weight $\boldsymbol{\mu}_{0}\in\mathbb{R}^d$, the number of layers $k$, $T_1$, $\lambda_1 =e^{-12},\lambda_2= e^{2}$, $\gamma_1=0.5,\gamma_2=10$, \textit{// $T_1,\lambda_1,\lambda_2,\gamma_1,\gamma_2$ can be fixed in all experiments of Sec\ref{sec:exp}.}
\STATE {\bfseries Output:} trained model $\hat{ \boldsymbol{\mu}}$, posterior noise level $\hat \sigmal$
\STATE $\boldsymbol{\theta} \gets \boldsymbol{\mu}_0$,
$\mathbf{v} \gets \mathbf{1}_d \cdot \log(\frac{1}{d}\sum_{i=1}^d |\boldsymbol{\mu}_{0,i}|)$, $\mathbf{b} \gets \mathbf{1}_k\cdot \log(\frac{1}{d}\sum_{i=1}^d |\boldsymbol{\mu}_{0,i}|) $ \hfill \textit{// Initialization} \\
Obtain the estimated $\tilde{K}(\bar\lambdal)$ with $\Lambda=[\lambda_1,\lambda_2]^k$ using \eqref{eq:objective_K} and Appendix~\ref{apsec:est-K}
\STATE \textit{// Stage 1}
\FOR{epoch = $1:T_1$} 
\FOR{sampling one batch $s$ from $\mathcal{S}$}
\STATE $\lambdal\gets \exp(\mathbf{b})$, $\sigmal\gets \exp(\mathbf{v})$\hfill \textit{// Ensure non-negative variances}
\STATE Construct the covariance of $\mathcal{P}_{\hat{\boldsymbol{\mu}}_0,\lambdal}$ from $\lambdal$ \hfill \textit{// Setting the variance of the weights in layer-$i$ all to the scalar $\lambdal(i)$}
\STATE 
Draw one $\tilde{\boldsymbol{\theta}}\sim \mathcal{Q}_{\boldsymbol{\mu},\sigmal}$ and evaluate $\ell(f_{\tilde{\boldsymbol{\theta}}};\mathcal{S})$,\hfill \textit{// Stochastic version of $\mathbb{E}_{{\tilde{\boldsymbol{\theta}}}\sim \mathcal{Q}_{\boldsymbol{\mu},\sigmal}} \ell(f_{\tilde{\boldsymbol{\theta}}};\mathcal{S})$}
\STATE Compute the KL-divergence as~\eqref{eq:layer-kl-divegence}
\STATE Compute $\gamma$ as~\eqref{eq:best_gamma}
\STATE Compute the loss function $\mathcal{L}$ as $L_{PAC}$ in~\eqref{eq:l_pac}
\STATE $\mathbf{b} \gets \mathbf{b} +\eta\frac{\partial{\mathcal{L}}}{\partial \mathbf{b}}$, $\mathbf{v}\gets \mathbf{v}+\eta\frac{\partial \mathcal{L}}{\partial \mathbf{v}}$, $\boldsymbol{\mu}\gets \boldsymbol{\mu}+\eta\frac{\partial \mathcal{L}}{\partial \boldsymbol{\mu}}$\hfill \textit{// Update all parameters}
\ENDFOR
\ENDFOR \\
$\hat{\sigmal} \gets \exp(\mathbf{v})$ \hfill \textit{// Fix the noise level from now on}
\STATE \textit{// Stage 2}
\WHILE{not converge}
\FOR{sampling one batch $s$ from $\mathcal{S}$}
\STATE Draw one sample $\tilde{\boldsymbol{\theta}}\sim \mathcal{Q}_{\hat{\boldsymbol{\mu}},\hat{\sigmal}}$ and evaluate $\ell(f_{\tilde{\boldsymbol{\theta}}};\mathcal{S})$ as $\tilde{\mathcal{L}}$,
\hfill \textit{// Noise injection}
%\State Compute the loss function $\mathcal{L}$ as $L_{PAC}$ in \eqref{eq:l_pac} over $h+\Delta h$
\STATE $\boldsymbol{\mu}\gets \boldsymbol{\mu}+\eta\frac{\partial\tilde{\mathcal{L}}}{\partial \boldsymbol{\mu}}$\hfill \textit{// Update model parameters}
\ENDFOR
\ENDWHILE\\
$\hat{\boldsymbol{\mu}}\gets \boldsymbol{\mu}$

\end{algorithmic}
\label{alg:explicit_pac}
\end{algorithm}

\subsection{Regularizations in PAC-Bayes bound}\label{apsec:regularization}
Only noise injection and weight decay are essential from our derived PAC-Bayes bound. Since many factors in normal training, such as mini-batch and dropout, enhance generalization by some sort of noise injection, it is unsurprising that they can be substituted by the well-calibrated noise injection in PAC-Bayes training. Like most commonly used implicit regularizations (large lr, momentum, small batch size), dropout and batch-norm are also known to penalize the loss function's sharpness indirectly. \citet{wei2020implicit} studies that dropout introduces an explicit regularization that penalizes \textit{sharpness} and an implicit regularization that is analogous to the effect of stochasticity in small mini-batch stochastic gradient descent. Similarly, it is well-studied that batch-norm \cite{luo2018towards} allows the use of a large learning rate by reducing the variance in the layer batches, and large allowable learning rates regularize \textit{sharpness} through the edge of stability \cite{cohen2020gradient}. 
As shown in the equation below, the first term (noise-injection) in our PAC-Bayes bound explicitly penalizes the Trace of the Hessian of the loss, which directly relates to sharpness and is quite similar to the regularization effect of batch-norm and dropout. During training, suppose the current posterior is $\mathcal{Q}_{\hat{\boldsymbol{\mu}},\hat{\sigmal}}=\mathcal{N}(\hat{\boldsymbol{\mu}},\textrm{diag}(\hat \sigmal))$, then the training loss expectation over the posterior is: 
\begin{align*}
\mathbb{E}_{\boldsymbol{\theta}\sim \mathcal{Q}_{\hat{\boldsymbol{\mu}},\hat{\sigmal}}}\ell(f_{\boldsymbol{\theta}};\mathcal{D}) &= \mathbb{E}_{\Delta{\boldsymbol{\theta}} \sim \mathcal{Q}_{\mathbf{0},\hat{\sigmal}}}\ell(f_{\hat{\boldsymbol{\theta}}+\Delta{\boldsymbol{\theta}}};\mathcal{D})\\
&\approx \ell(f_{\hat{\boldsymbol{\theta}}},\mathcal{D})
+\mathbb{E}_{\Delta{\boldsymbol{\theta}} \sim \mathcal{Q}_{\mathbf{0},\hat{\sigmal}}}(\ell(f_{\hat{\boldsymbol{\theta}}};\mathcal{D})\Delta \boldsymbol{\theta}+\frac{1}{2}\Delta \boldsymbol{\theta}^\top\nabla^2\ell(f_{\hat{ \boldsymbol{\theta}}};\mathcal{D})\Delta \boldsymbol{\theta}) \\
&= \ell(\hat{f_{\boldsymbol{\theta}}};\mathcal{D})+\frac{1}{2} \mathrm{Tr} (\textrm{diag}(\hat\sigmal) \nabla^2 \ell(f_{\hat{\boldsymbol{\theta}}};\mathcal{D})).
\end{align*}

The second regularization term (weight decay) in the bound additionally ensures that the minimizer found is close to initialization. Although the relation of this regularizer to sharpness is not very clear, empirical results suggest that weight decay may have a separate regularization effect from sharpness. In brief, we state that the effect of sharpness regularization from dropout and batch norm can also be well emulated by noise injection with the additional effect of weight decay. 

\subsection{Deterministic Prediction}\label{apsec:determine}
 Recall that for any $\boldsymbol{\mu} \in \mathbb{R}^d$ and $\sigmal\in \mathbb{R}^d_+$, we used $\mathcal{Q}_{\boldsymbol{\mu},\sigmal}$ to denote the multivariate normal distribition with mean $\boldsymbol{\mu}$ and covariance matrix $\textrm{diag}(\sigmal)$. If we rewrite the left-hand side of the PAC-Bayes bound by Taylor expansion, we have: 
\begin{align}\label{eq:predictor}
\begin{split}
 \mathbb{E}_{\boldsymbol{\theta}\sim \mathcal{Q}_{\hat{\boldsymbol{\mu}},\hat{\sigma}}}\ell(f_{\boldsymbol{\theta}};\mathcal{D}) &= \mathbb{E}_{\Delta{\boldsymbol{\theta}} \sim \mathcal{Q}_{\mathbf{0},\hat{\sigmal}}}\ell(f_{\hat{\boldsymbol{\theta}}+\Delta{\boldsymbol{\theta}}};\mathcal{D})\\
&\approx \ell(f_{\hat{\boldsymbol{\theta}}},\mathcal{D})
+\mathbb{E}_{\Delta{\boldsymbol{\theta}} \sim \mathcal{Q}_{\mathbf{0},\hat{\sigmal}}}(\nabla \ell(f_{\hat{\boldsymbol{\theta}}};\mathcal{D})^T\Delta \boldsymbol{\theta}+\frac{1}{2}\Delta \boldsymbol{\theta}^\top\nabla^2\ell(f_{\hat{ \boldsymbol{\theta}}};\mathcal{D})\Delta \boldsymbol{\theta}) \\
&= \ell(f_{\hat{\boldsymbol{\theta}}};\mathcal{D})+\frac{1}{2} \mathrm{Tr} (\textrm{diag}(\hat\sigmal) \nabla^2 \ell(f_{\hat{\boldsymbol{\theta}}};\mathcal{D}))\geq \ell(f_{\hat{\boldsymbol{\theta}}};\mathcal{D}).
\end{split}
\end{align}

Recall here $\hat{\boldsymbol{\mu}}$ and $\hat{\sigmal}$ are the minimizers of the PAC-Bayes loss, obtained by solving the optimization problem \eqref{eq:l_pac}.
Equation \eqref{eq:predictor} states that the deterministic predictor has a smaller prediction error than the Bayesian predictor.
However, note that the last inequality in \eqref{eq:predictor} is derived under the assumption that the term $\nabla^2 \ell(f_{\hat{\boldsymbol{\theta}}};\mathcal{D})$ is positive-semidefinite. This is a reasonable assumption as $\hat{\boldsymbol{\mu}}$ is the local minimizer of the PAC-Bayes loss, and the PAC-Bayes loss is close to the population loss when the number of samples is large. Nevertheless, since this property only approximately holds, the presented argument can only serve as an intuition that shows the potential benefits of using the deterministic predictor.

\section{Extended Experimental Details}
We conducted experiments using eight A5000 GPUs with four AMD EPYC 7543 32-core Processors. To speed up the training process for posterior and prior variance, we utilized a warmup method that involved updating the noise level in the posterior of each layer as a scalar for the first 50 epochs and then proceeding with normal updates after the warmup period. This method only affects the convergence speed, not the generalization, and it was only used for large models in image classification.

\subsection{Parameter Settings}\label{apsec:choose-gamma}
Recall that the exponential momentum bound $K(\lambdal)$ is estimated over a range $[\gamma_1,\gamma_2]$ of $\gamma$ as per Definition~\ref{def:subgaussian2}. It means that we need the inequality
\[
 \mathbb{E}_{\h\sim \mathcal{P}_{\lambdal}} \mathbb{E}[\exp{(\gamma (\mathbb{E}[X(\h)]- X(\h)))}]\leq \exp{(\gamma^2K(\boldsymbol{\lambda}))}
\]
to hold for any $\gamma$ in this range. One needs to be a little cautious when choosing the upper bound $\gamma_2$, because if it is too large, then the empirical estimate of $ \mathbb{E}_{\h\sim \mathcal{P}_{\lambdal}} \mathbb{E} [\exp{(\gamma (\mathbb{E}[X(\h)]- X(\h)))}]$ would have too large of a variance. Therefore, we recommended $\gamma_2$ to be set to no more than 10 or 20. %Even though in some cases, this means the optimal $\gamma$ that minimizes the PAC-Bayes bound is ruled out, we found that a more accurate estimation of $K$ resulting from using the recommended $\gamma_2$ is more crucial to obtaining the best performance of the PAC-Bayes training. 
The choice of $\gamma_1$ also does not seem to be very crucial, so we have fixed it to 0.5 throughout. % $K(\lambdal)$ for any given $\lambdal$, and therefore a looser PAC-Bayes bound and worse performance.

 %For GNN, this operation increases the noise level. Please refer to Remark \ref{rm:c01} for the theoretical reason for the clipping. 

For large datasets (like in MNIST or CIFAR10), $m$ is large. Then, according to Theorem \ref{thm:main}, we can set the range $M,T,a,b$ of the trainable parameters to be very large with only a little increase of the bound (as $M,T,a,b$ are inside the logarithm), and then during training, the parameters would not exceed these bounds even if we don't clip them. Hence, no clipping is needed for very large networks or with small networks with proper initializations. But 
when the dataset size $m$ is small, or the initialization is not good enough, then the correction term could be large, and clipping will be needed. %Consequently, in the training algorithm, we must ensure the parameters stay in the sets $\mathcal{H}$, $\Sigma$, and $\Gamma$ by projecting onto these sets after each gradient update.
%\label{rm:clipping}
%\end{remark}

The clipping is also needed from the usual numerical stability point of view. As $\lambda$ is in the denominator of the KL-divergence, it cannot be too close to $0$.
Because of this, in the numerical experiments on GNN and CNN13/CNN15, we clip the domain of $\lambda$ at a lower bound of $0.1$ and $5e-3$, respectively. For the VGG and Resnet experiments, the clipping $\lambda$ is optional. 
% as per Equations~\ref{eq:layer-kl-divegence} and \ref{eq:scalar-kl-divegence}.

%At the beginning of training, we observed that $\lambda$ might be small, leading to excessively strong weight decay. Clipping the prior ensures it maintains a relatively large value, facilitating proper training. Alternatively, adding batch normalization can mitigate this issue; we found that incorporating a batch normalization layer without trainable parameters before the fully connected layers in CNN13/CNN15 obviates the need for clipping and yields comparable test accuracy. This suggests that the complexity of the final objective function impacts convergence, as discussed in Sec.\ref{apsec:limit}. For consistency and fairness in comparison, we opted for clipping while maintaining the model structure aligned with the baselines.

\subsection{Baseline PAC-Bayes bounds for unbounded loss functions}\label{apsec:pac-baseline}
We compared two baseline PAC-Bayes bounds when training CNNs with our layerwise PAC-Bayes bound. The bounds are expressed in our notation. 
Consider a neural network model denoted as $f_{\boldsymbol{\theta}}$, where $f$ represents the network's architecture, and $\boldsymbol{\theta}$ is the weight. %For Gaussian posterior, $\boldsymbol{\mu}$ and $\sigmal$ denote the mean and variance.
\begin{itemize}
\item sub-Gaussian (Theorem 5.1   of \citet{alquier2021user}):
\begin{equation}
 \mathbb{E
}_{\boldsymbol{\theta}\sim\mathcal{Q}_{}}\ell(f_{\boldsymbol{\theta}};\mathcal{D})\leq \mathbb{E}_{\boldsymbol{\theta}\sim\mathcal{Q}_{}}\ell(f_{\boldsymbol{\theta}};\mathcal{S})+\frac{1}{m\gamma}\left(\log{\frac{1}{\delta}+\mathrm{KL}(\mathcal{Q}_{}||\mathcal{P})}\right)+K_{sub}\gamma,\label{eq:s2baseline}
\end{equation}
where $K_{sub}$ is the variance factor by assuming the loss function $\ell$ is sub-Gaussian as defined below:
\begin{equation*}
\mathbb{E}_{\boldsymbol{\theta}\sim\mathcal{P}}\mathbb{E}_{\mathcal{S}\sim\mathcal{D}}\exp{[\gamma(\ell(f_{\boldsymbol{\theta}};\mathcal{S})-\ell(f_{\boldsymbol{\theta}};\mathcal{D}))]}\leq \exp{(\gamma^2K_{sub})}, \forall \gamma\in\mathbb{R}.
\end{equation*}

In the experiment to generate Figure 1, we restricted $\gamma$  to $[-1,1]$. Otherwise, if we use $\gamma \in \mathbb{R}$, the sub-Gaussian bound would be too large to calculate, resulting in a NaN value. Therefore, the reported  $K_{sub}$ in Figure 1 may be underestimated.

%Theoretically, we need to evaluate $s$ using all potential models sampled from prior to making sure the bound always holds in training, which is too difficult to achieve in real applications using deep neural networks. Given that $s$ increases with the prior variance, we assessed $s$ using the prior with the largest variance ($\lambdal_2 = e^2$) that we explored in our proposed training algorithm. This ensures that \eqref{eq:s2baseline} is consistently maintained throughout our training.

\item CGF (Theorem 9 of \citet{rodriguez2023more}):
\begin{equation}
 \mathbb{E}_{\boldsymbol{\theta}\sim\mathcal{Q}}\ell(f_{\boldsymbol{\theta}};\mathcal{D})\leq \mathbb{E}_{\boldsymbol{\theta}\sim\mathcal{Q}}\ell(f_{\boldsymbol{\theta}};\mathcal{S})+\frac{1}{\gamma}\left(\frac{1}{m}(\log{\frac{1}{\delta}+\mathrm{KL}(\mathcal{Q}||\mathcal{P})})+\psi(\gamma)\right),\label{eq:psibound}
\end{equation}
where $\psi(\gamma)$ is a convex and continuously differentiable function defined on $[0, b)$ for some $b \in \mathbb{R}^+$ such that $\psi(0) = \psi'(0) = 0$ and $ \mathbb{E}_{\boldsymbol{\theta}\sim\mathcal{P}
}\mathbb{E}_{\mathcal{S}\sim\mathcal{D}}[\exp(\gamma(\ell(f_{\boldsymbol{\theta}};\mathcal{D})-\ell(f_{\boldsymbol{\theta}};\mathcal{S})))]\leq \exp{(\psi(\gamma))}$ for all $\gamma\in [0, b)$. 
There is no specific form of $\psi(\gamma)$ provided in the original paper, but in order to achieve the normal $\sqrt m$ convergence rate of the PAC-Bayes bound, we need at least set $\psi(\gamma)=K_{CGF}\gamma^\alpha$ $(\alpha\geq 2)$. Among these, using $\alpha=2
$ gives the smallest $K$, so we used $\alpha=2$ for the comparison in Figure \ref{fig:compare_K} and \ref{fig:compare_s2}. % and evaluated $K$ using the prior with the largest variance ($\lambdal_2 = e^2$) that we explored in our proposed training algorithm to ensure \eqref{eq:psibound} always holds. 
%Moreover, $\gamma$ is on the denominator of the bound, so we optimized $\gamma$ when evaluating this bound and clipped $\gamma$ to the same range $[0.5,10)$ as we did to our algorithm.
\end{itemize}

\subsection{Compatibility with Data Augmentation}\label{apsec:dataug}
We didn't include data augmentation in the experiments in the main text. Because with data augmentation, there is no rigorous way of choosing the sample size $m$ that appears in the PAC-Bayes bound. More specifically, for the PAC-Bayes bound to be valid, the training data has to be i.i.d. samples from some underlying distribution. However, most data augmentation techniques would break the i.i.d. assumption. As a result, if we have $10$ times more samples after augmentation, the new information they bring in would be much less than those from $10$ times i.i.d. samples. In this case, how to determine the effective sample size $m$ to be used in the PAC-Bayes bound is a problem. 

Since knowing whether a training method can work well with data augmentation is important, we carried out the PAC-Bayes training with an ad-hoc choice of $m$, that is, we set $m$ to be the size of the augmented data.
We compared the grid-search result of SGD and Adam versus PAC-Bayes training on CIFAR10 with ResNet18. The augmentation is achieved by random flipping and random cropping. The data augmentation increased the size of the training sample by 128 times. %The grid search is the same as in Appendix~\ref{apsec:image-class}. 
The test accuracy for SGD is 95.2\%, it is 94.3\% for Adam, it is 94.4\% for AdamW, and it is 94.3\% for PAC-Bayes training with the layerwise prior. In contrast, the test accuracy without data augmentation is lower than 90\% for all methods. It suggests that data augmentation does not conflict with the PAC-Bayes training in practice.

\subsection{Model analysis}\label{apsec:modelanalysis}
We examined the learning process of PAC-Bayes training by analyzing the posterior variance $\sigmal$ for different layers in models trained by Algorithm \ref{alg:explicit_pac}. Typically, batch norm layers have smaller $\sigmal$ values than convolution layers. Additionally, shadow convolution and the last few layers have smaller $\sigmal$ values than the middle layers. We also found that skip-connections in ResNet18 have smaller $\sigmal$ values than nearby layers, suggesting that important layers with a greater impact on the output have smaller $\sigmal$ values.

In Stage 1, the training loss is higher than the testing loss, which means the adopted PAC-Bayes bound is able to bound the generalization error throughout the PAC-Bayes training stage. Additionally, we observed that the final value of $K$ is usually very close to the minimum of the sampled function values. The average value of $\sigmal$ experienced a rapid update during the initial 50 warmup epochs but later progressed slowly until Stage 2.
The details can be found in Figure~\ref{fig:resnet18_CIFAR10_exp} and \ref{fig:vgg13_CIFAR10_exp}.
Based on the figures, shadow convolution, and the last few layers have smaller $\sigmal$ values than the middle layers for all models. We also found that skip-connections in ResNet18 and ResNet34 have smaller $\sigmal$ values than nearby layers on both datasets, suggesting that important layers with a greater impact on the output have smaller $\sigmal$ values. 

\textbf{Computational cost}: In PAC-Bayes training, we have four parameters $\boldsymbol{\mu}, \boldsymbol{\lambda}, \boldsymbol{\sigma}, \gamma$. Among these variables, $\gamma$ can be computed on the fly or whenever needed, so there is no need to
store them. We need to store $\boldsymbol{\mu}, \boldsymbol{\lambda}, \boldsymbol{\sigma}$, where $\boldsymbol{\sigma}$ has the same size as $\boldsymbol{\mu}$ and the size of $\boldsymbol{\lambda}$ is the same as the number of layers which is much smaller. Hence the total storage is approximately doubled. Likewise, when computing the gradient for $\boldsymbol{\mu}, \boldsymbol{\lambda}, \boldsymbol{\sigma}$, the cost of automatic differentiation in each iteration is also approximately doubled. In the inference stage, the complexity is the same as in conventional training.

\textbf{Effect of two stages}: We have tested the effect of the two stages. Without the first stage, the algorithm cannot automatically learn the noise level and weight decay to be used in the second stage. If the first stage is there but too short (10 epochs for example), then the final performance of VGG13 on CIFAR100 will reduce to 64.0\% . Without Stage 2, the final performance is not as good as reported either. The test accuracy of models like VGG13 and ResNet18 on CIFAR10 would be 10\% lower as in Figure~\ref{fig:resnet18_CIFAR10_exp} and \ref{fig:vgg13_CIFAR10_exp}.

\subsection{Node classification by GNNs}\label{sec:node-class}
We test the PAC-Bayes training algorithm on the following popular GNN models, tuning the learning rate ($1e{-3}, 5e{-3}, 1e{-2}$), weight decay ($0, 1e{-4}, 1e{-3}, 1e{-2}$), noise injection ($0, 1e{-3}, 5e{-3}, 1e{-2}$), and dropout ($0, 0.4, 0.8$). The number of filters per layer is $32$ in GCN~\citep{kipf2016semi} and SAGE~\citep{hamilton2017inductive}.
For GAT~\citep{velivckovic2017graph}, the number of filters is $8$ per layer, the number of heads is $8$, and the dropout rate of the attention coefficient is $0.6$. Fpr APPNP~\citep{gasteiger2018predict}, the number of filters is $32$, $K=10$ and $\alpha=0.1$.
We set the number of layers to 2, achieving the best baseline performance. A ReLU activation and a dropout layer are added between the convolution layers for baseline training only. Since GNNs are faster to train than convolutional neural networks, we tested all possible combinations of the above parameters for the baseline, conducting 144 searches per model on one dataset. We use Adam as the optimizer with the learning rate as $1e{-2}$ for all models using both training and validation nodes for PAC-Bayes training. 

We also did a separate experiment using both training and validation nodes for training. For baselines, we need first to train the model to detect the best hyperparameters as before and then train the model again on the combined data. Our PAC-Bayes training can also match the best generalization of baselines in this setting. 

All results are visualized in Figure~\ref{fig:gcn}-\ref{fig:appnp}.
The AdamW+val and scalar+val record the performances of the baseline and the PAC-Bayes training, respectively, with both training and validation datasets for training. We can see that test accuracy after adding validation nodes increased significantly for both methods but still, the results of our algorithm match the best test accuracy of baselines. 
Our proposed PAC-Bayes training with the scalar prior is better than most of the settings during searching and achieved comparable test accuracy when adding validation nodes to training.

\subsection{Few-shot text classification with transformers}\label{apsec:few-shot}
The proposed method is also observed to work on transformer networks. We conducted experiments on two text classification tasks of the GLUE benchmark as shown in Table~\ref{tab:glue}. 
SST is the sentiment analysis task, whose performance is evaluated as the classification accuracy. Sentiment analysis is the process of analyzing the sentiment of a given text to determine if the emotional tone of the text is positive, negative, or neutral. QNLI (Question-answering Natural Language Inference) focuses on determining the logical relationship between a given question and a corresponding sentence. The objective of QNLI is to determine whether the sentence contradicts, entails, or is neutral with respect to the question. 

We use classification accuracy as the evaluation metric. The baseline method uses grid search over the hyper-parameter choices of the learning rate ($1e{-1}$, $1e{-2}$, $1e{-3}$), batch size ($2, 8, 16, 32, 80$), dropout ratio ($0, 0.5$), optimization algorithms (SGD, AdamW), noise injection ($0, 1e{-5}, 1e{-4}, 1e{-3}, 1e{-2}, 1e{-1}$), and weight decay ($0, 1e{-1}, 1e{-2}, 1e{-3}, 1e{-4}$). The learning rate and batch size of our method are set to $1e{-3}$ and $100$ (i.e., full-batch), respectively.
In this task, the number of training samples is small (80). As a result, the preset $\gamma_2=10$ is a bit large and thus prevents the model from achieving the best performance with PAC-Bayes training. %We use the refined procedure described in Appendix~\ref{apsec:choose-gamma}\footnote{The refined procedure can be also applied to the CNN and GNN experiments but with smaller improvements than the transformers.}.

We adopt BERT~\citep{huggingbert} as our backbone and added one fully connected layer as the classification layer. Only the added classification layer is trainable, and the pre-trained model is frozen without gradient update. To simulate a few-shot learning scenario, we randomly sample 100 instances from the original training set and take the whole development set to evaluate the classification performance. We split the training set into 5 splits, taking one split as the validation data and the rest as the training set. Each experiment was conducted five times, and we report the average performance. We used the PAC-Bayes training with the scalar prior in this experiment.
According to Table~\ref{tab:glue}, our method is competitive to the baseline method on the SST task, and the performance gap is only $0.4$ points. On the QNLI task, our method outperforms the baseline by a large margin, and the variance of our proposed method is less than that of the baseline method.

\begin{table}[ht]
 \begin{center}
 \caption{Test accuracy on the development sets of 2 GLUE benchmarks.} 
 \begin{tabular}{c c c }
 \toprule
 & \textbf{SST} & \textbf{QNLI} \\
 %& \text{(ACC)} & \text{(ACC)} \\
 \midrule
 baseline & \textbf{72.9$\pm$0.99} &62.6$\pm$0.10\\
 scalar & 72.5$\pm$0.99 &\textbf{64.2$\pm$0.02} \\
 \bottomrule
 \end{tabular}
 
 \label{tab:glue}
 \end{center}
\end{table}

\subsection{Additional experiments stability}\label{apsec:ablation}
 We conducted extra experiments to showcase the robustness of the proposed PAC-Bayes training algorithm. Specifically, we tested the effect of different learning rates on ResNet18 and VGG13 models trained with layerwise prior. Learning rate has long been known as an important impact factor of the generalization for baseline training. Within the stability range of gradient descent, the larger the learning rate is, the better the generalization has been observed \citep{lewkowycz2020large}. In contrast, the generalization of the PAC-Bayes trained model is less sensitive to the learning rate. We do observe that due to the newly introduced noise parameters, the stability of the optimization gets worse, which in turn requires a lower learning rate to achieve stable training. But as long as the stability is guaranteed by setting the learning rate low enough, our results, as Table~\ref{tab:resnet18_lr}, indicated that the test accuracy remained stable across various learning rates for VGG13 and Resnet18. The dash in the table means that the learning rate for that particular setting is too large to maintain the training stability. 
For learning rates below $1e{-4}$, we trained the model in Stage 1 for more epochs (700) to fully update the prior and posterior variance.

We also demonstrate that the warmup iterations (as discussed at the beginning of this section) do not affect generalization. As shown in Table~\ref{tab:resnet18_warmup}, the test accuracy is insensitive to different numbers of warmup iterations. Furthermore, additional evaluations of the effects of batch size (Table~\ref{tab:vgg13batch}), optimizer (Tables~\ref{tab:resnet18momentum_lr}), and $\gamma_1$ and $\gamma_2$ (Table~\ref{tab:resnet18gamma12}) 

%We further visualize the sorted test accuracy of baselines and our proposed PAC-Bayes training with large batch sizes and a fixed learning rate $5e{-4}$ in Figure~\ref{fig:CIFAR10-large} and Figure~\ref{fig:CIFAR100-large}. These figures demonstrate that our PAC-Bayes training algorithm achieves better test accuracy than most searched settings. For models VGG$13$ and ResNet$18$, the large batch size is $2048$, and for large models VGG$19$ and ResNet$34$, the large batch size is set to $1280$ due to the GPU memory limitation.

% {\color{red} To further demonstrate the effectiveness of our proposed PAC-Bayes bound, we compared it with \citet{germain2016pac}, which presents a PAC-Bayes bound for unbounded loss based on the sub-Gaussian norm. As shown in Figure~\ref{fig:bounds_compare}, our proposed bound (training loss in Stage 1 before epoch 500) is tighter, thus enabling training by minimizing the PAC-Bayes loss. In contrast, the training accuracy does not increase when minimizing the baseline bound. The final test accuracy of our training is 89.7\%/89.3\% for VGG13/ResNet18, respectively, compared to the 87.8\%/86.9\% achieved using the baseline bound.} 

\begin{table}[!ht]
 \centering
 \caption{Test accuracy of ResNet18 trained with different learning rates.}
 \begin{tabular}{cccccccc}
 \toprule
 lr&$3e{-5}$&$5e{-5}$&$1e{-4}$&$2e{-4}$&$3e{-4}$&$5e{-4}$\\
 \midrule
 CIFAR10&88.4 &88.8 &89.3 &88.6 &88.3 &89.2 \\
 CIFAR100&69.2 &69.0 &68.9 &69.1 &69.1 &69.6 \\
 \bottomrule
 \end{tabular}
 \label{tab:resnet18_lr}
\end{table}

\begin{table}[!ht]
 \centering
 \caption{Test accuracy of VGG13 trained with different learning rates.}
 \begin{tabular}{cccccccc}
 \toprule
 lr&$3e{-5}$&$5e{-5}$&$1e{-4}$&$2e{-4}$&$3e{-4}$&$5e{-4}$\\
 \midrule
 CIFAR10&88.6 &88.9 &89.7 &89.6 &89.6 &89.5 \\
 CIFAR100&67.7 &68.0 &67.1 & - &- &- \\
 \bottomrule
 \end{tabular}
 \label{tab:vgg13_lr}
\end{table}

\begin{table}[!ht]
 \centering
 \caption{Test accuracy of ResNet18 trained with warmup epochs of $\mathbf{\sigma}$.}
 \begin{tabular}{ccccccc}
 \toprule
 &$10$&$20$&$50$&$80$&$100$&$150$\\
 \midrule
 CIFAR10&88.5 &88.5 &89.3 &89.5 &89.5 &88.9 \\
 CIFAR100&69.4 &69.6 &68.9&69.1 &69.0 &68.1 \\
 \bottomrule
 \end{tabular}
 \label{tab:resnet18_warmup}
\end{table}

\begin{table}[!ht]
\centering
\caption{Test accuracy of VGG13 with different batch sizes.}
\begin{tabular}{c c c c c c}
\toprule
Batch Size & 128 & 256 & 1024 &2048 &2500 \\ \midrule
Test Acc & 89.7 & 89.7 & 88.7 &89.4 &88.3 \\
\bottomrule
\end{tabular}
\label{tab:vgg13batch}
\end{table}

% \begin{table}[!ht]
% \centering
% \caption{Test accuracy of ResNet18 using SGD with different momentum (learning rate $1\times10^{-3}$).}
% \begin{tabular}{c c c c}
% \toprule
% Momentum & 0.3 & 0.6 & 0.9 \\ \midrule
% Test Acc & 88.6 & 88.8 & 89.2 \\ \bottomrule
% \end{tabular}
% \label{tab:momentum}
% \end{table}

% \begin{table}[!ht]
% \centering
% \caption{Test accuracy of ResNet18 using SGD with different learning rates (momentum $0.9$).}
% \begin{tabular}{cccc}
% \toprule
% lr & $1 \times 10^{-4}$ & $3 \times 10^{-4}$ & $1 \times 10^{-3}$ \\ \midrule
% Test Acc & 88.3 & 88.8 & 89.2 \\ \bottomrule
% \end{tabular}
% \label{tab:vgg13lrsgd}
% \end{table}

\begin{table}[!ht]
\centering
\caption{Test accuracy of ResNet18 using SGD: Effects of different momentum values (with learning rate $1\times10^{-3}$) and different learning rates (with momentum $0.9$).}
\begin{tabular}{cccccccc}
\toprule
& \multicolumn{3}{c}{Momentum} & & \multicolumn{3}{c}{Learning Rate} \\
\cmidrule{2-4} \cmidrule{6-8}
& 0.3 & 0.6 & 0.9 & & $1 \times 10^{-4}$ & $3 \times 10^{-4}$ & $1 \times 10^{-3}$ \\ \cmidrule{2-4} \cmidrule{6-8}
Test Acc & 88.6 & 88.8 & 89.2 & & 88.3 & 88.8 & 89.2 \\ \bottomrule
\end{tabular}
\label{tab:resnet18momentum_lr}
\end{table}

\begin{table}[!ht]
\centering
\caption{Test accuracy of ResNet18 with different settings for $\gamma_1$ (with $\gamma_2=20$) and $\gamma_2$ (with $\gamma_1=0.1$).}
\begin{tabular}{cccccccc}
\toprule
& \multicolumn{3}{c}{$\gamma_1$} &
 & \multicolumn{3}{c}{$\gamma_2$} \\
\cmidrule{2-4} \cmidrule{6-8}
& 0.1 & 0.5 & 1.0 & & 10 & 15 & 20 \\
\cmidrule{2-4} \cmidrule{6-8}
Test Acc & 88.8 & 89.3 & 88.8 & & 89.3 & 89.4 & 89.4 \\ \bottomrule
\end{tabular}
\label{tab:resnet18gamma12}
\end{table}

\begin{figure}[!ht]
 \centering
 \begin{subfigure}[b]{0.3\textwidth}
 \centering
 \includegraphics[width=\textwidth]{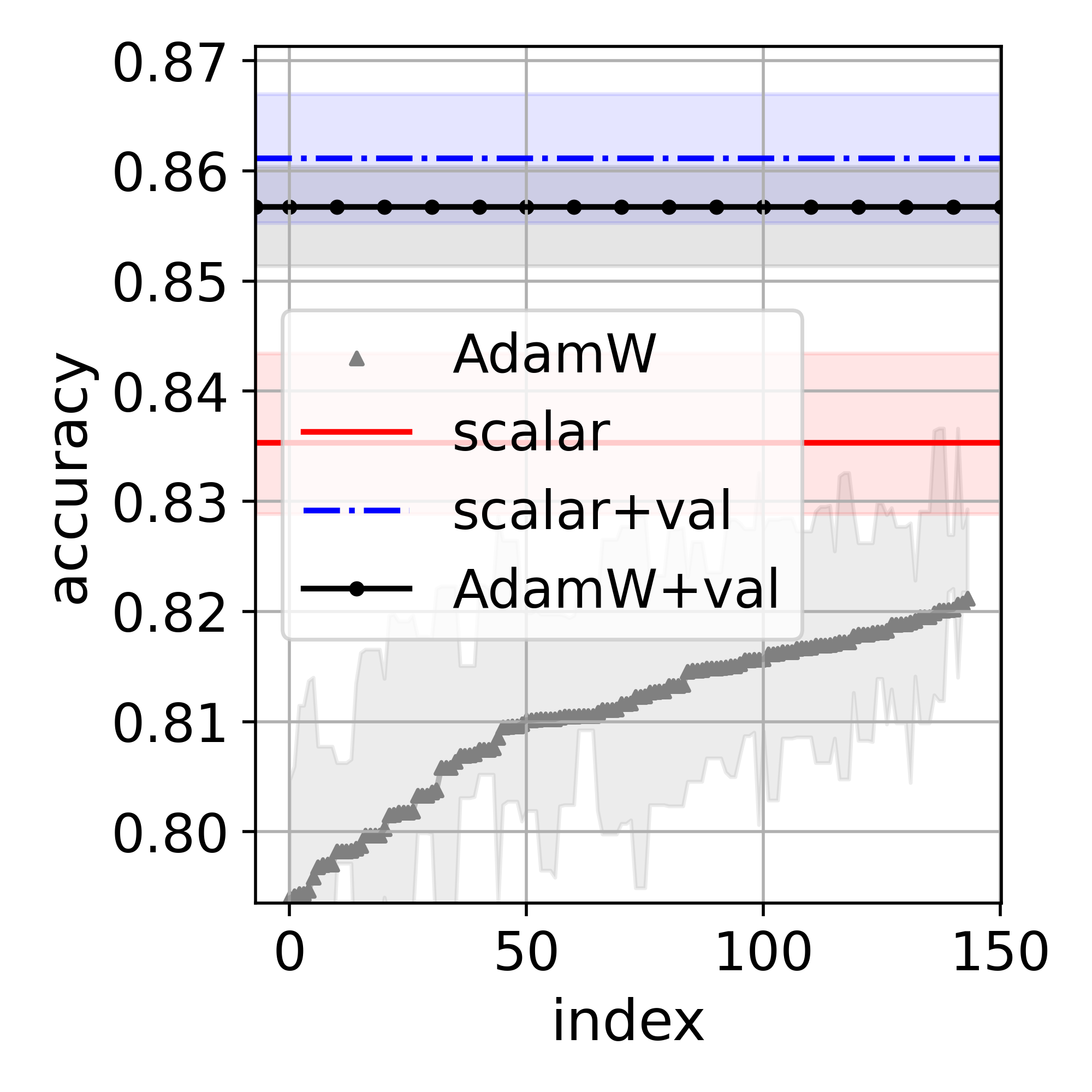}
 \caption{CoraML}
 \end{subfigure}
 \hfill
 \begin{subfigure}[b]{0.3\textwidth}
 \centering
 \includegraphics[width=\textwidth]{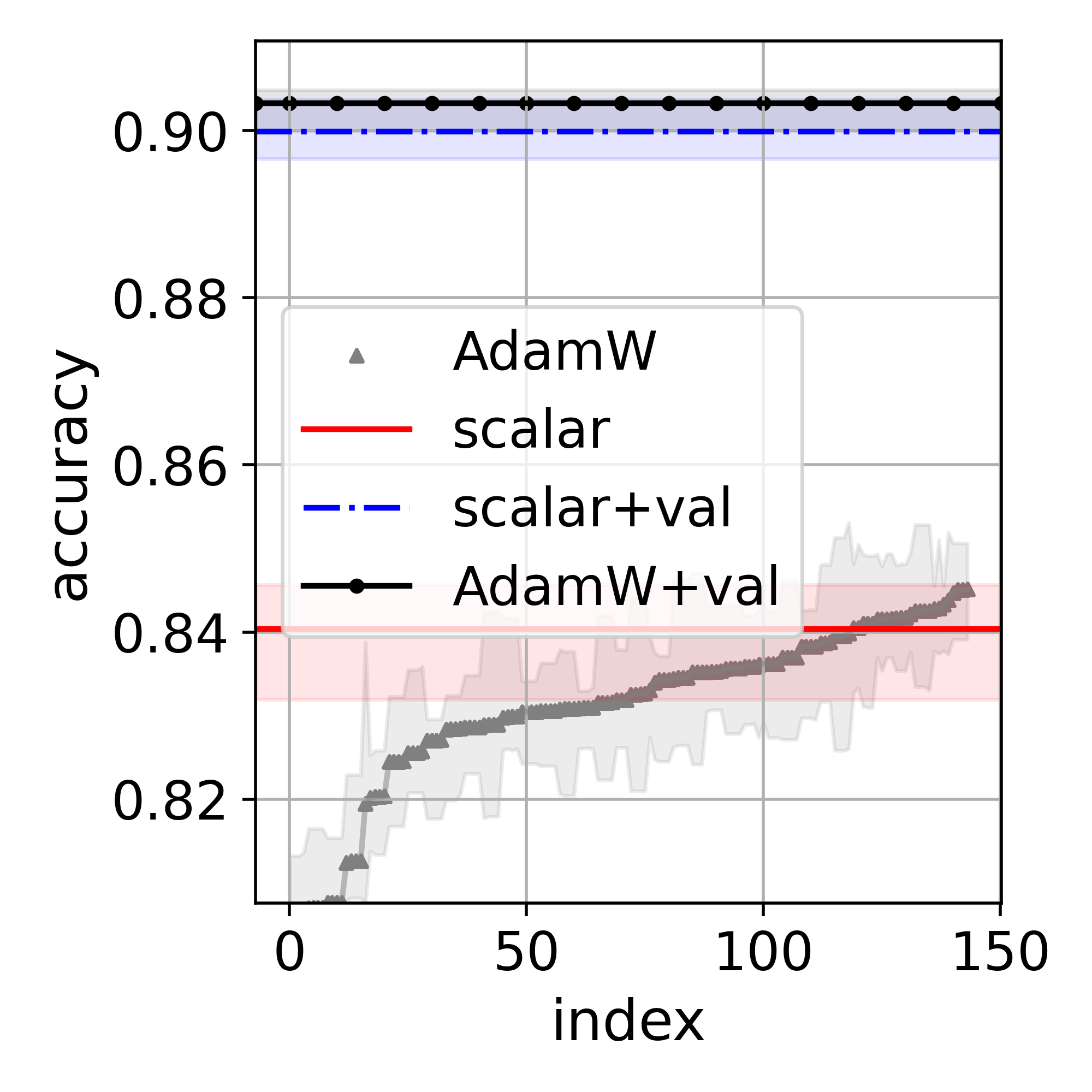}
 \caption{Citeseer}
 \end{subfigure}
 \hfill
 \begin{subfigure}[b]{0.3\textwidth}
 \centering
 \includegraphics[width=\textwidth]{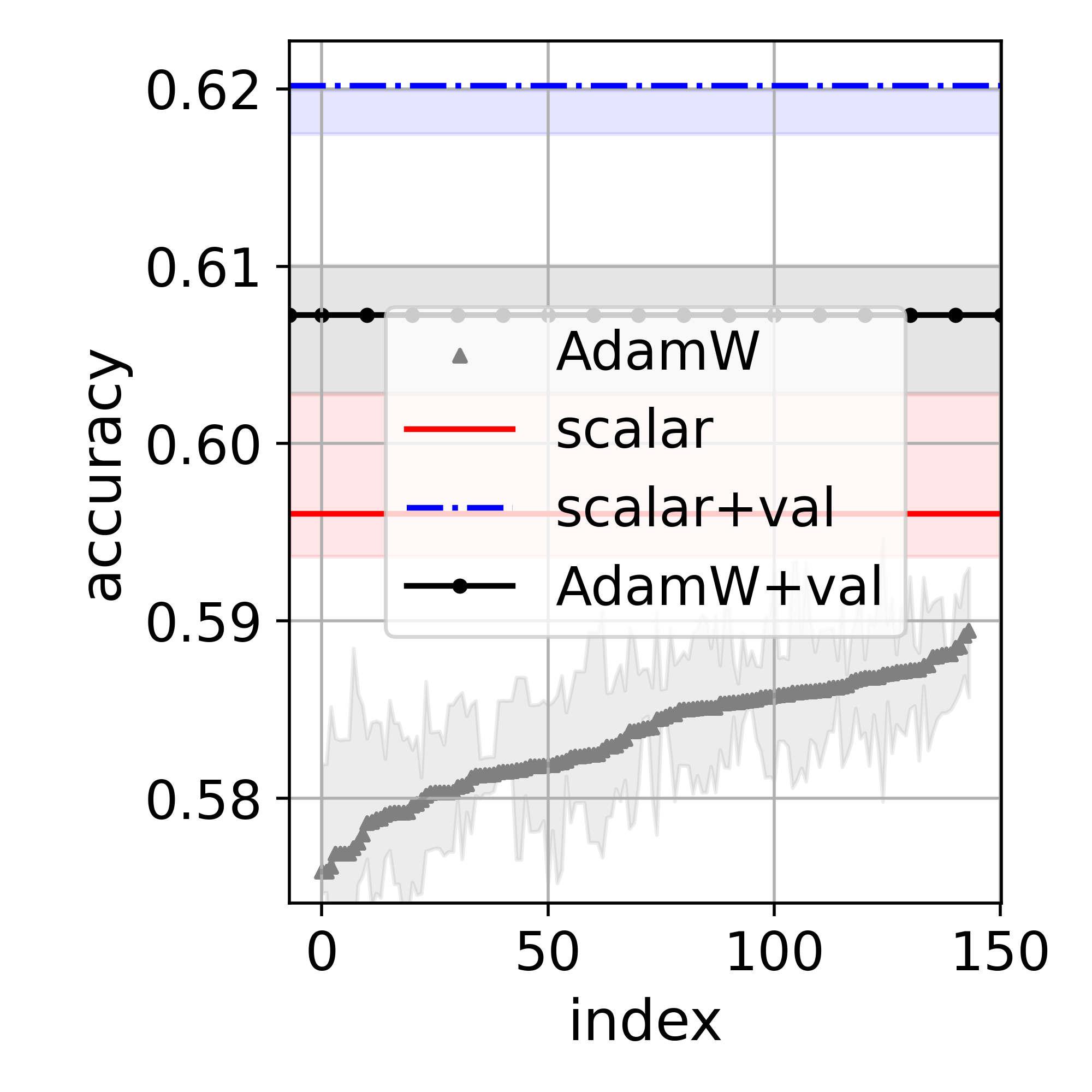}
 \caption{CoraFull}
 \end{subfigure}
 \hfill
 \\
 \begin{subfigure}[b]{0.3\textwidth}
 \centering
 \includegraphics[width=\textwidth]{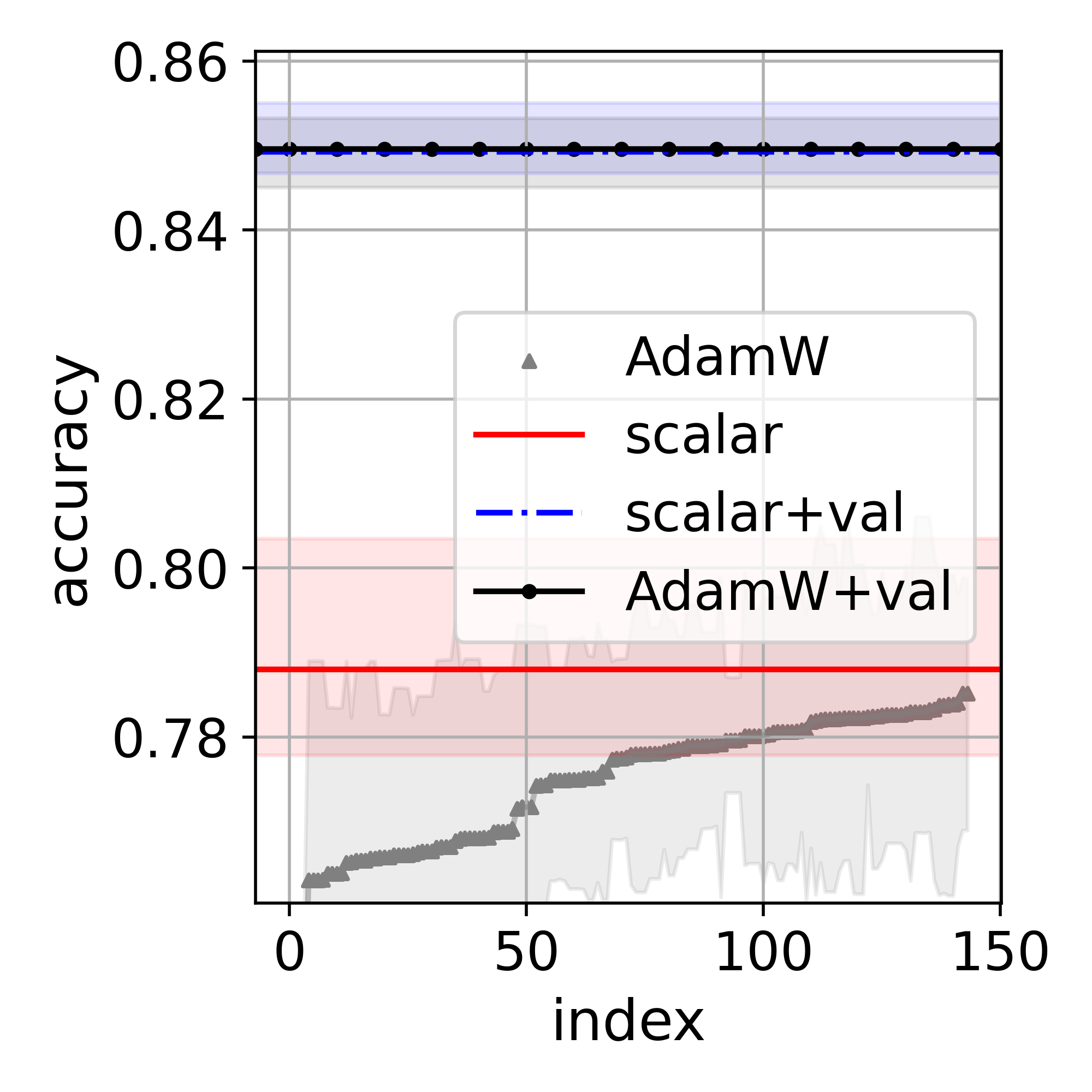}
 \caption{DBLP}
 \end{subfigure}
 ~~
 \begin{subfigure}[b]{0.3\textwidth}
 \centering
 \includegraphics[width=\textwidth]{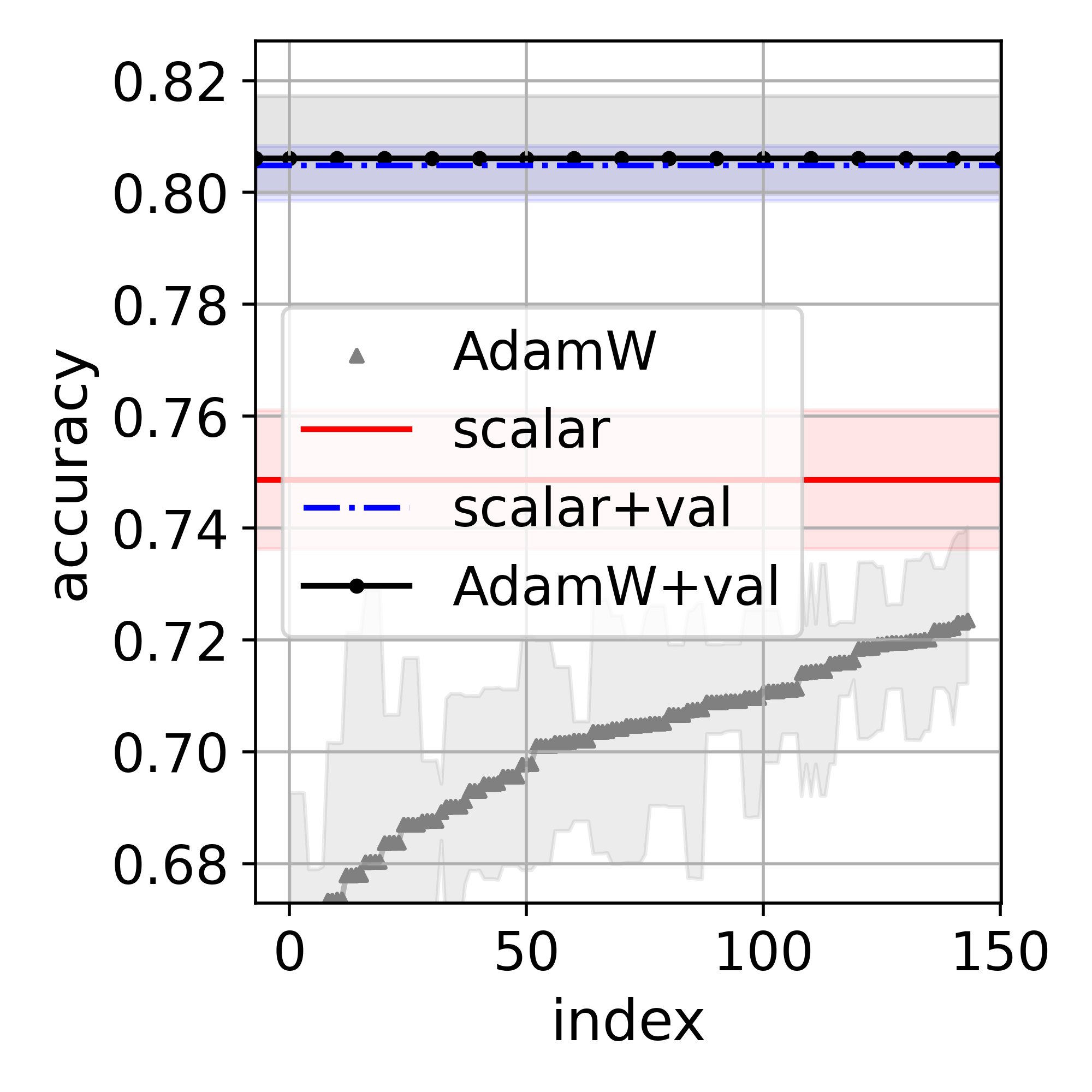}
 \caption{DBLP}
 \end{subfigure}
 \caption{Test accuracy of GCN. The first and third quartiles construct the interval over the ten random splits. $\{$+val$\}$ denotes the performance with both training and validation datasets for training.}
 \label{fig:gcn}
\end{figure}

\begin{figure}[!ht]
 \centering
 \begin{subfigure}[b]{0.3\textwidth}
 \centering
 \includegraphics[width=\textwidth]{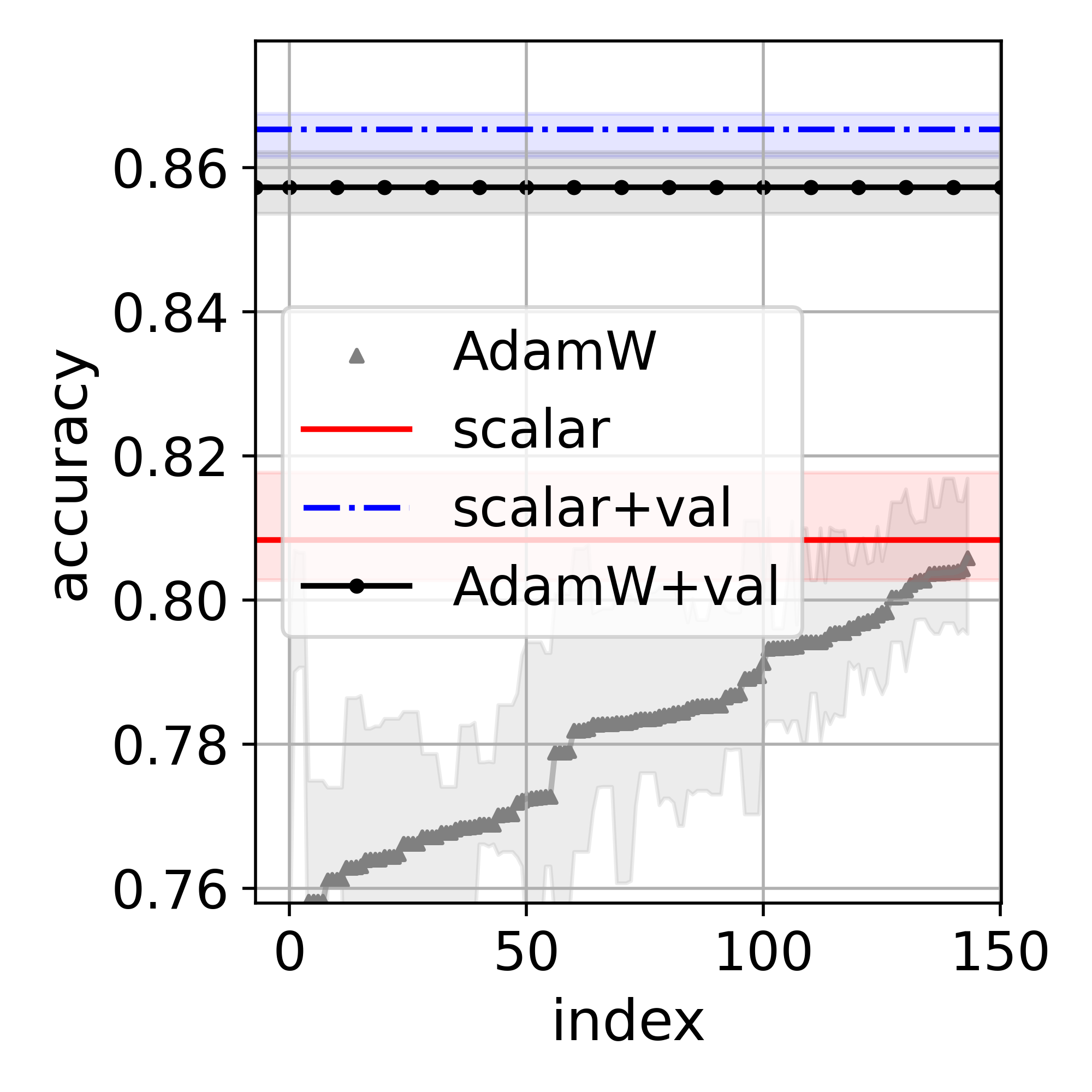}
 \caption{CoraML}
 \end{subfigure}
 \hfill
 \begin{subfigure}[b]{0.3\textwidth}
 \centering
 \includegraphics[width=\textwidth]{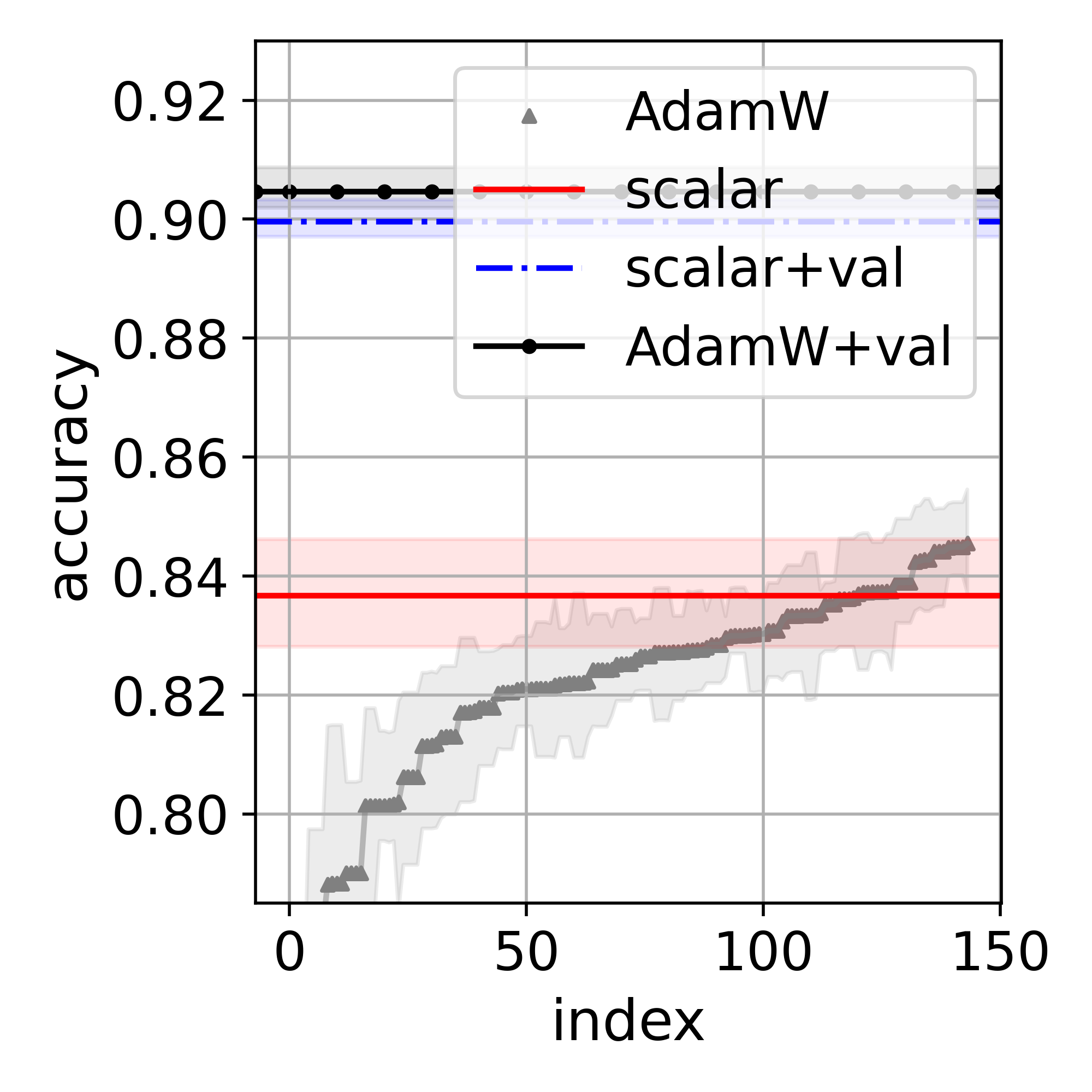}
 \caption{Citeseer}
 \end{subfigure}
 \hfill
 \begin{subfigure}[b]{0.3\textwidth}
 \centering
 \includegraphics[width=\textwidth]{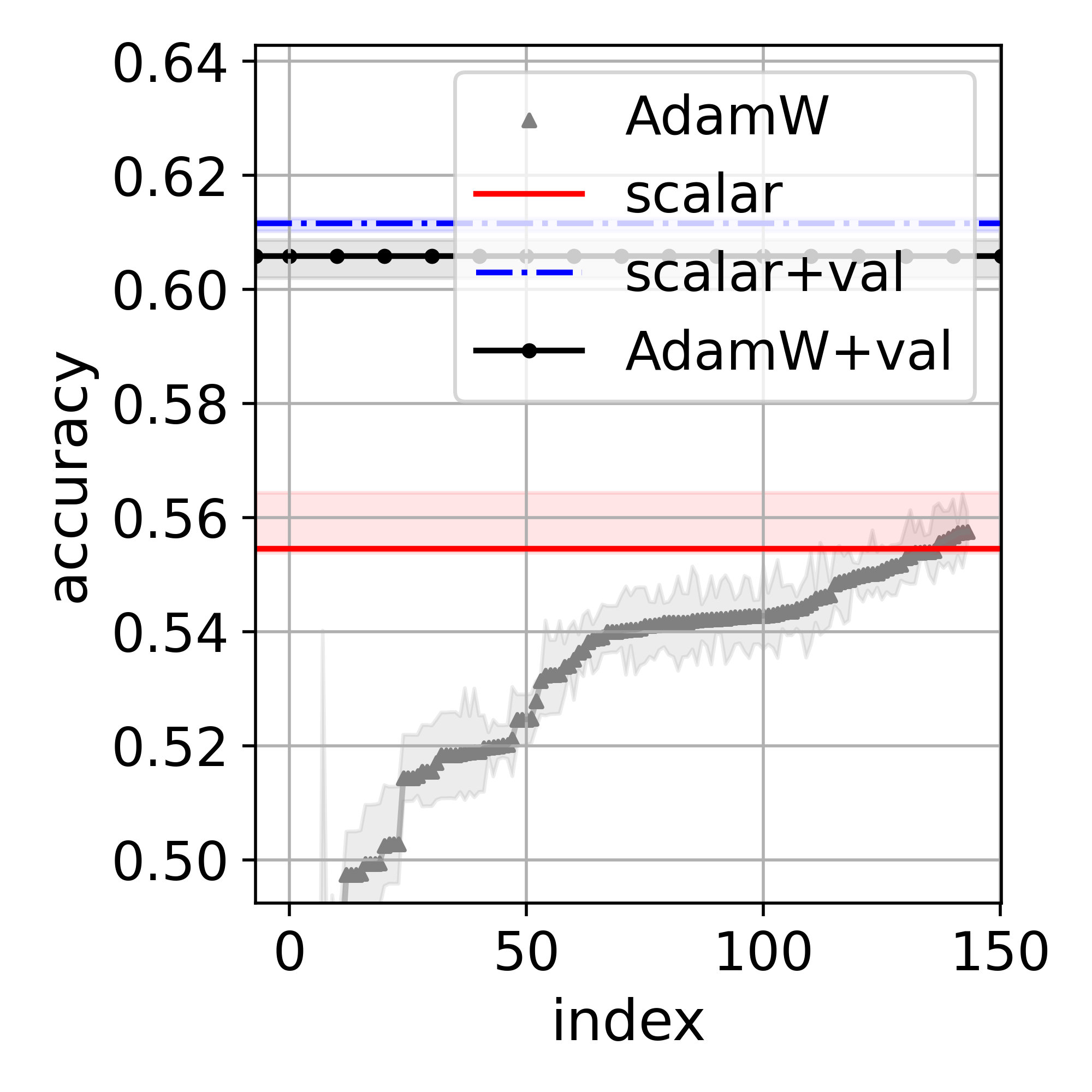}
 \caption{CoraFull}
 \end{subfigure}
 \hfill
 \\
 \begin{subfigure}[b]{0.3\textwidth}
 \centering
 \includegraphics[width=\textwidth]{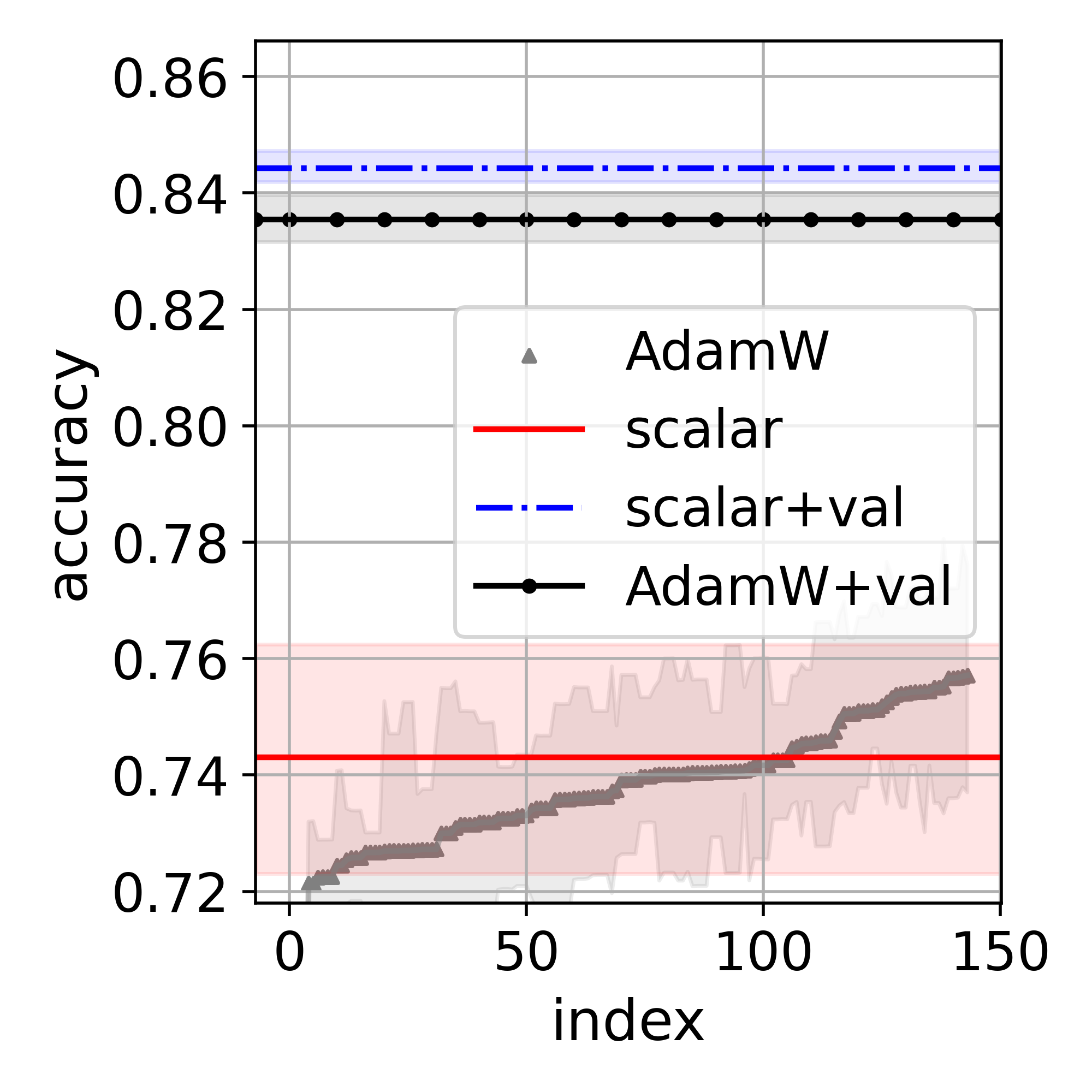}
 \caption{DBLP}
 \end{subfigure}
 ~~
 \begin{subfigure}[b]{0.3\textwidth}
 \centering
 \includegraphics[width=\textwidth]{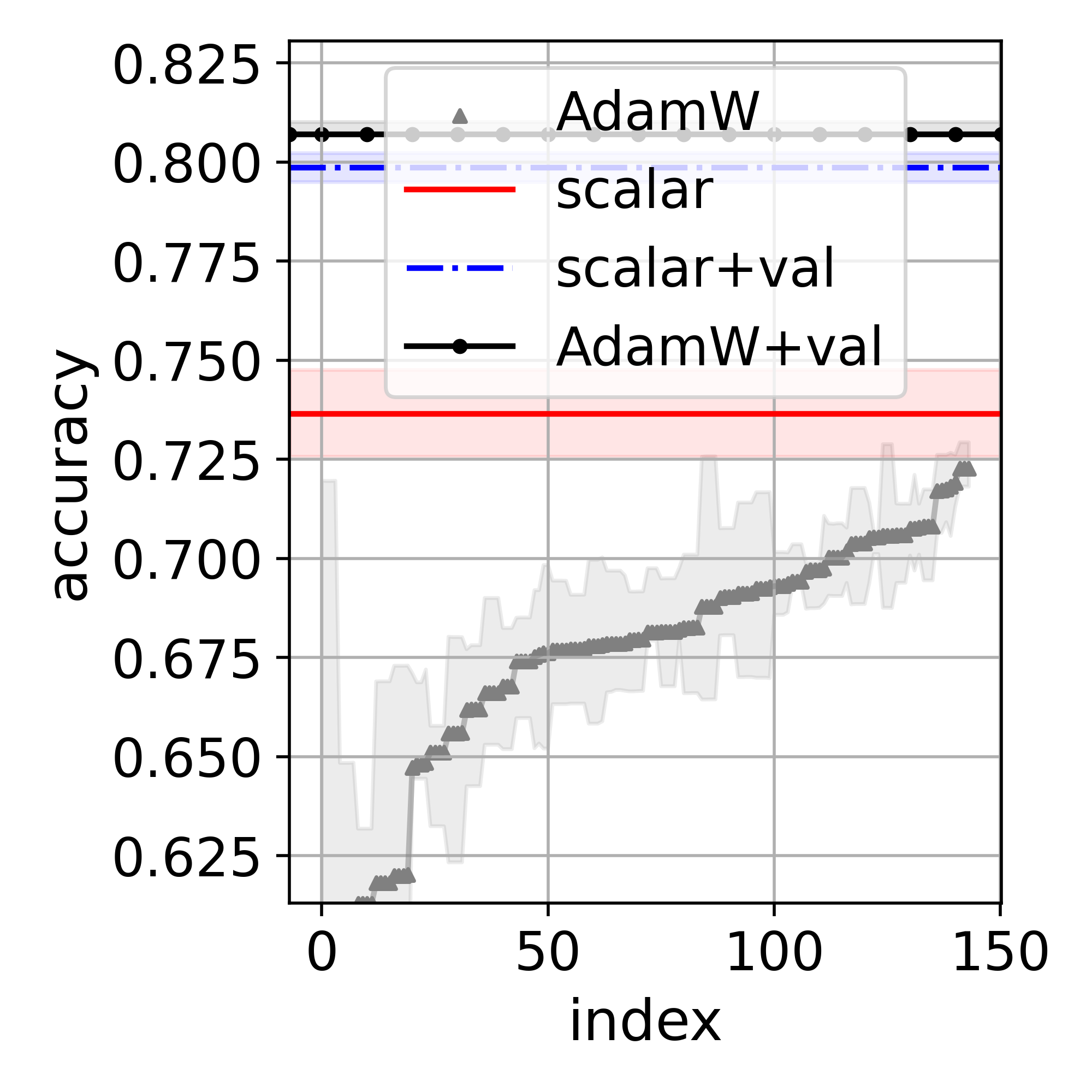}
 \caption{DBLP}
 \end{subfigure}
 \caption{Test accuracy of SAGE. The first and third quartiles construct the interval over the ten random splits. $\{$+val$\}$ denotes the performance with both training and validation datasets for training.}
 \label{fig:sage}
\end{figure}

\begin{figure}[!ht]
 \centering
 \begin{subfigure}[b]{0.3\textwidth}
 \centering
 \includegraphics[width=\textwidth]{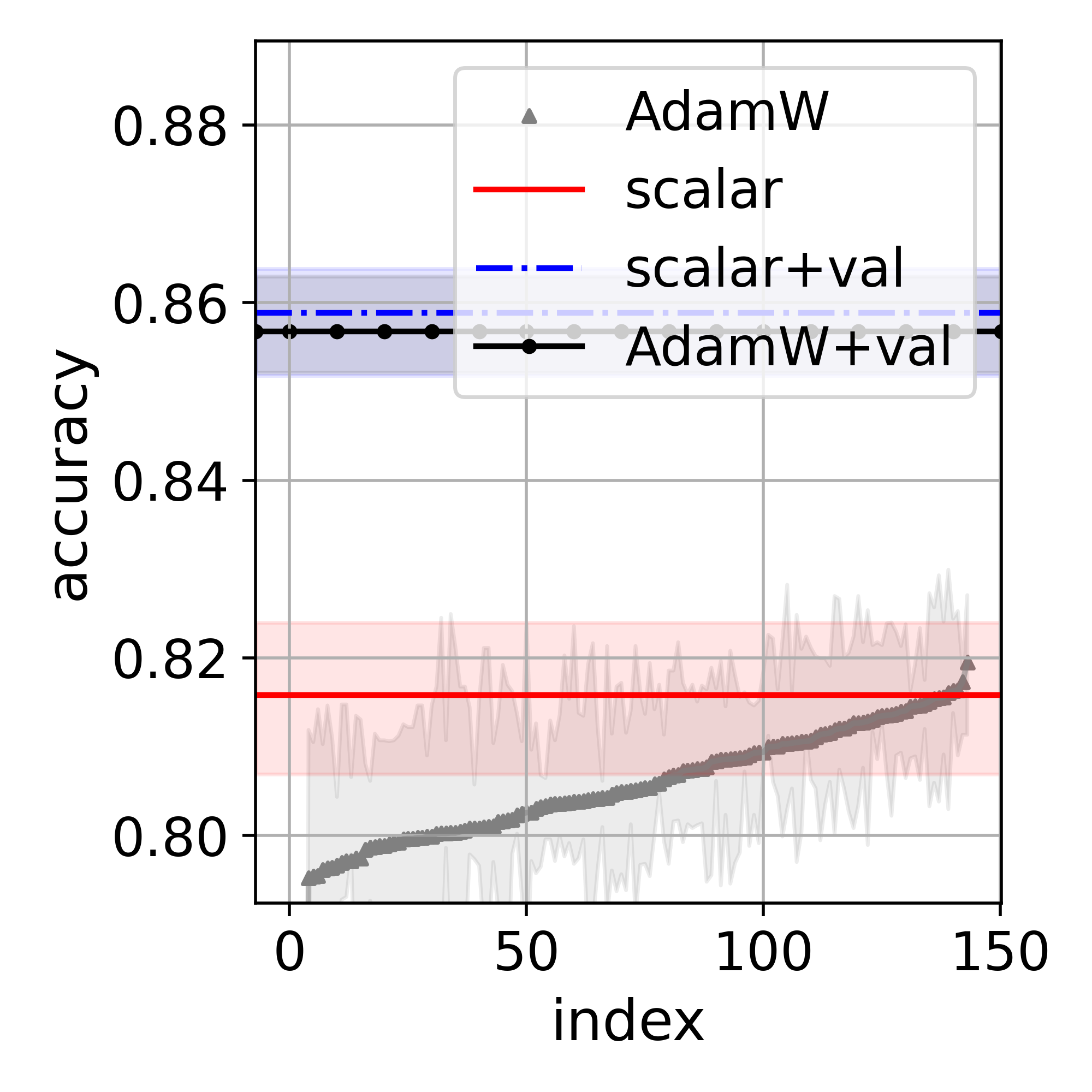}
 \caption{CoraML}
 \end{subfigure}
 \hfill
 \begin{subfigure}[b]{0.3\textwidth}
 \centering
 \includegraphics[width=\textwidth]{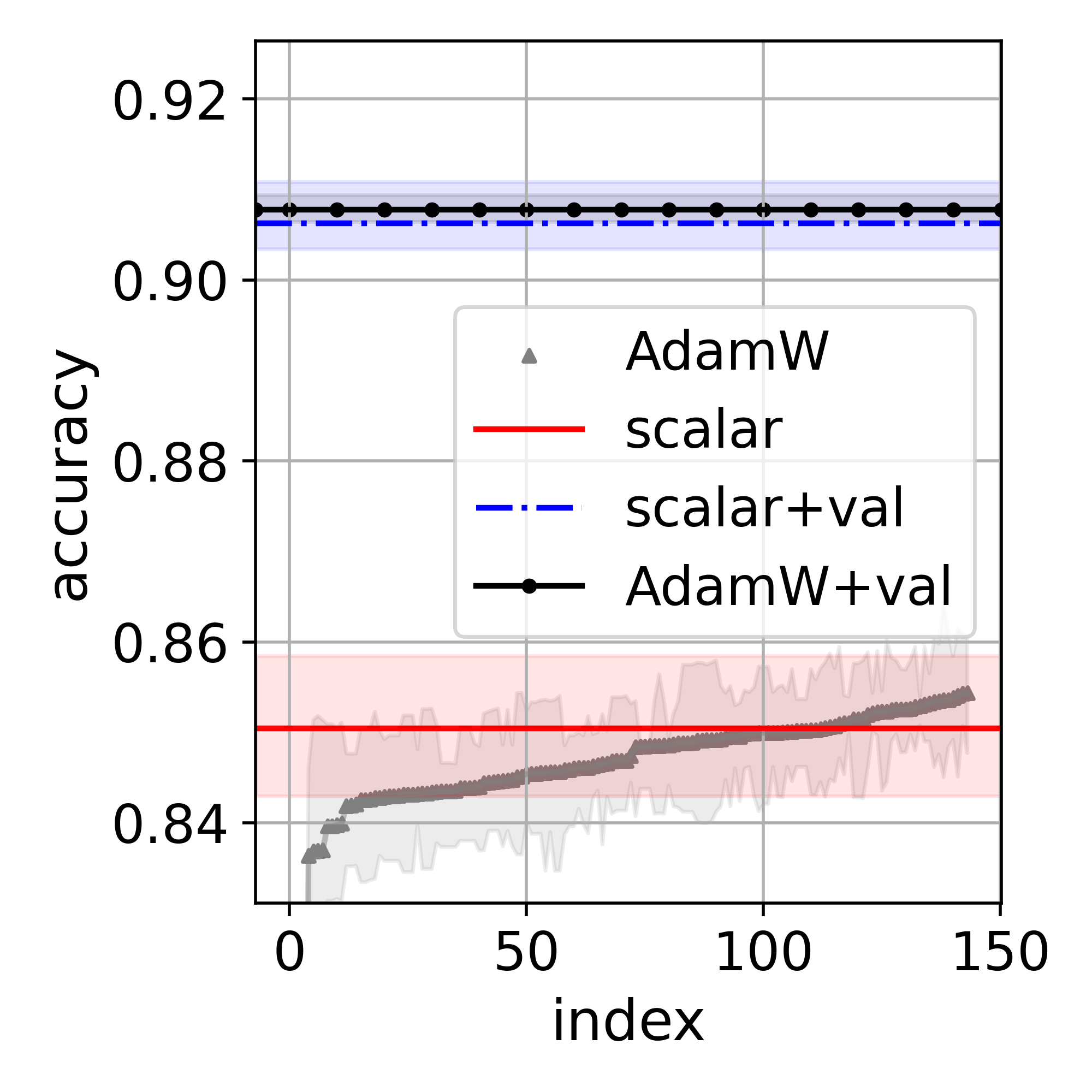}
 \caption{Citeseer}
 \end{subfigure}
 \hfill
 \begin{subfigure}[b]{0.3\textwidth}
 \centering
 \includegraphics[width=\textwidth]{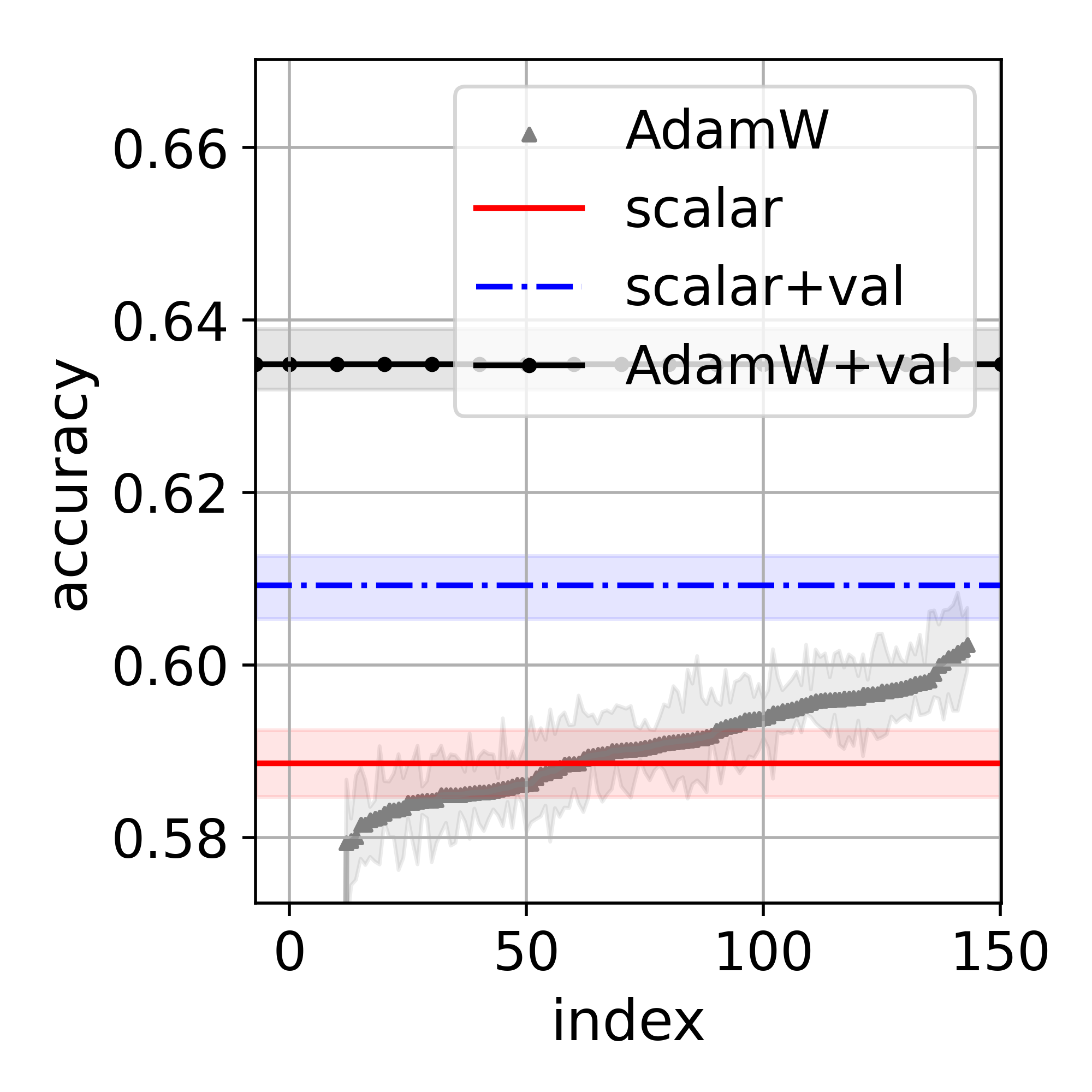}
 \caption{CoraFull}
 \end{subfigure}
 \hfill
 \\
 \begin{subfigure}[b]{0.3\textwidth}
 \centering
 \includegraphics[width=\textwidth]{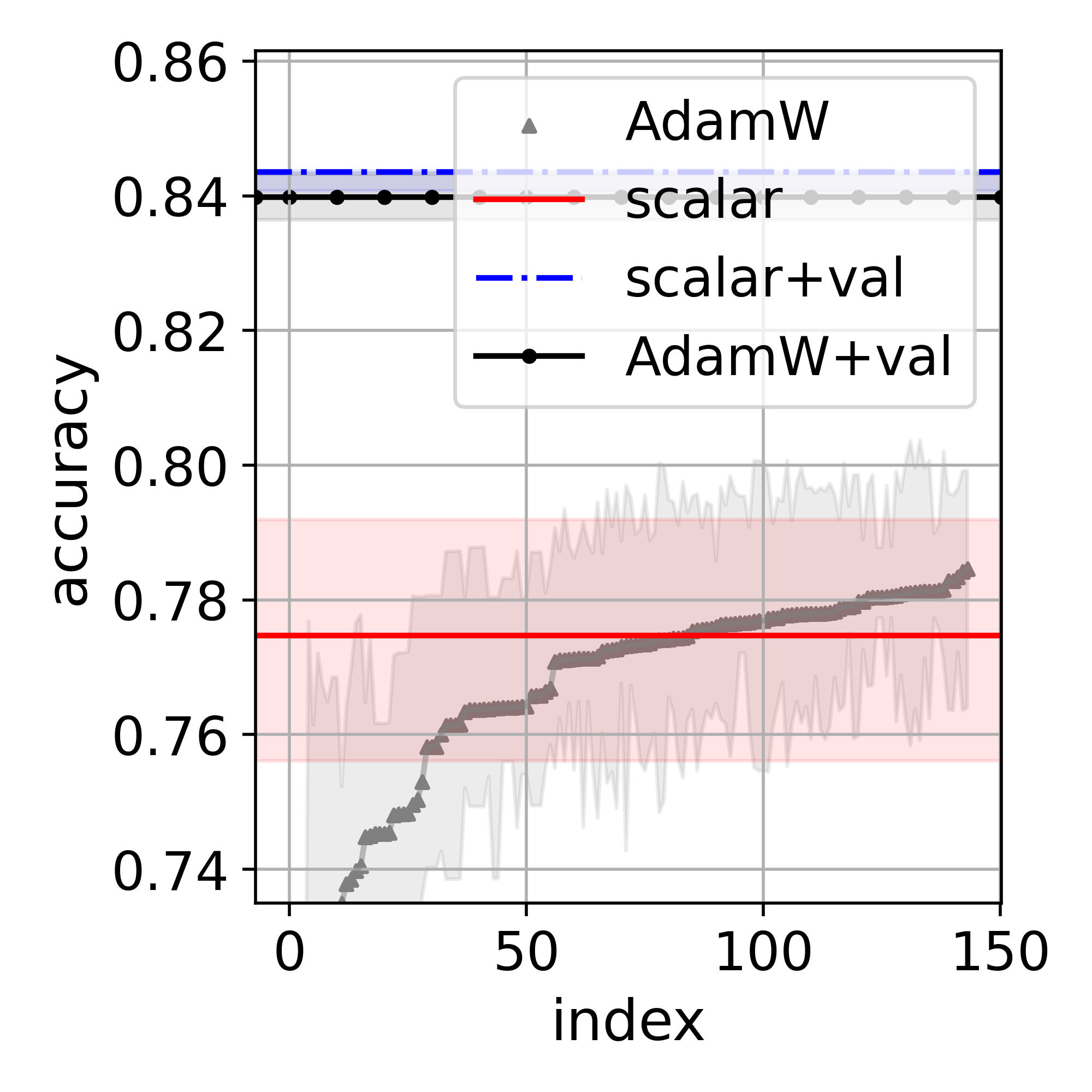}
 \caption{DBLP}
 \end{subfigure}
 ~~
 \begin{subfigure}[b]{0.3\textwidth}
 \centering
 \includegraphics[width=\textwidth]{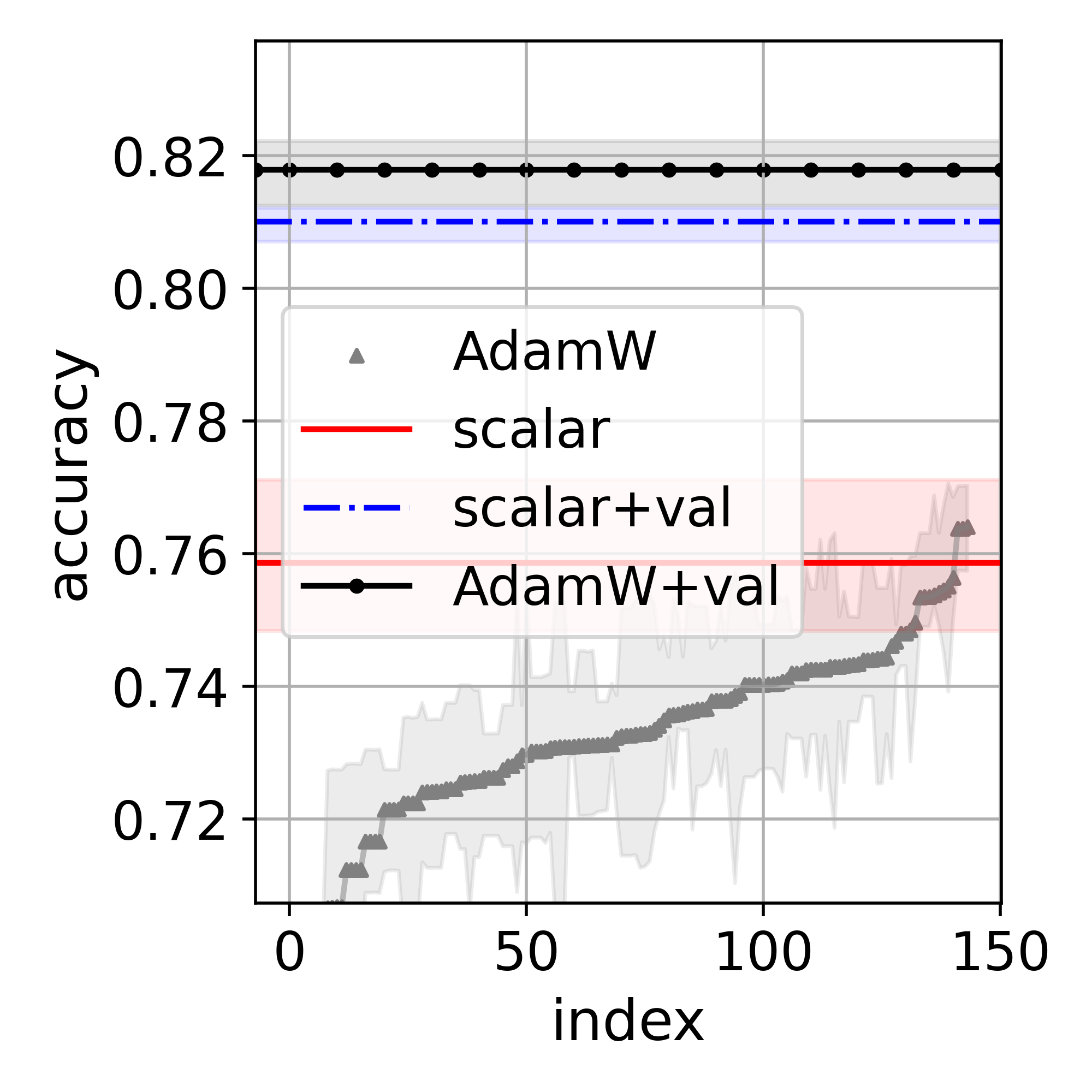}
 \caption{DBLP}
 \end{subfigure}
 \caption{Test accuracy of GAT. The first and third quartiles construct the interval over the ten random splits. $\{$+val$\}$ denotes the performance with both training and validation datasets for training.}
 \label{fig:gat}
\end{figure}

\begin{figure}[!ht]
 \centering
 \begin{subfigure}[b]{0.3\textwidth}
 \centering
 \includegraphics[width=\textwidth]{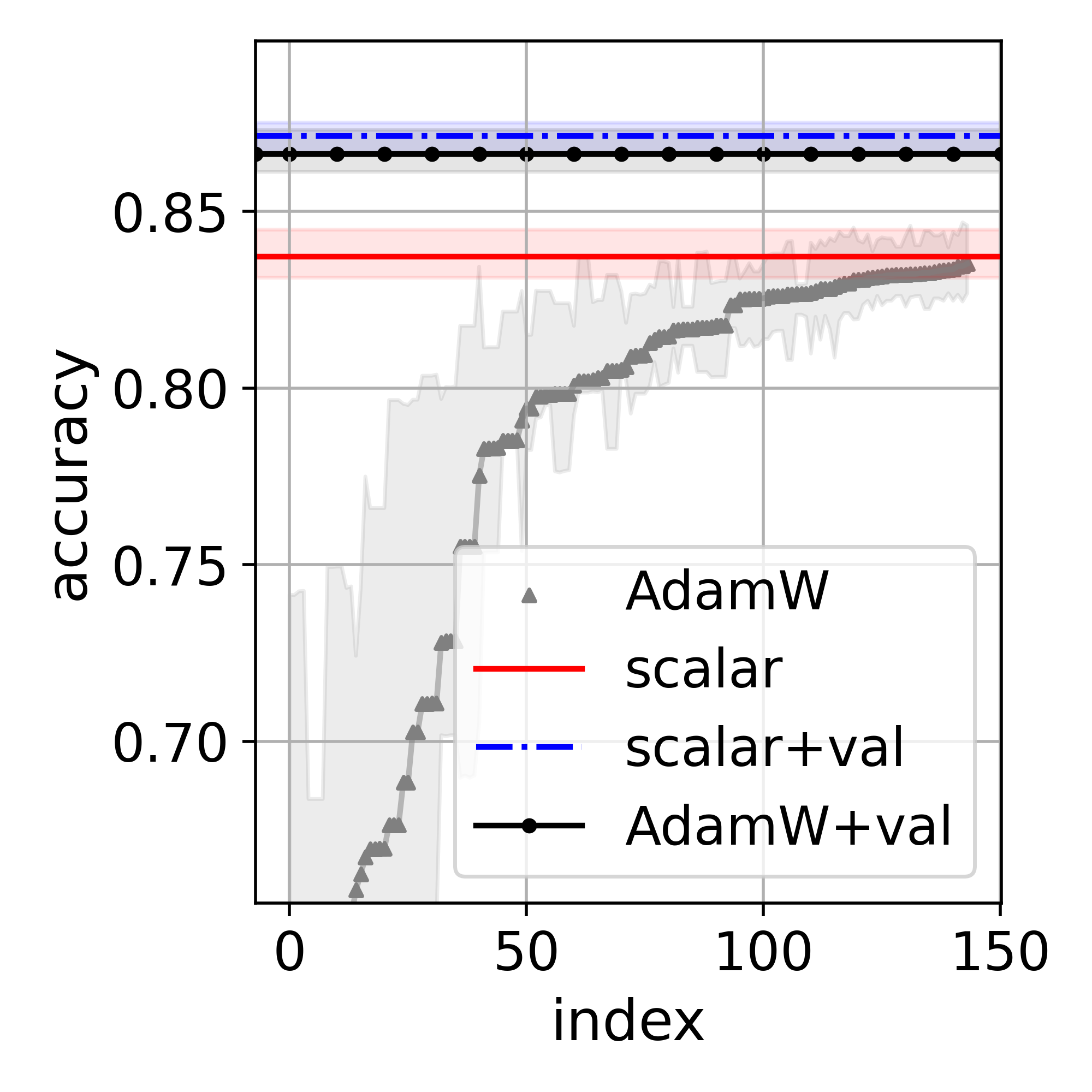}
 \caption{CoraML}
 \end{subfigure}
 \hfill
 \begin{subfigure}[b]{0.3\textwidth}
 \centering
 \includegraphics[width=\textwidth]{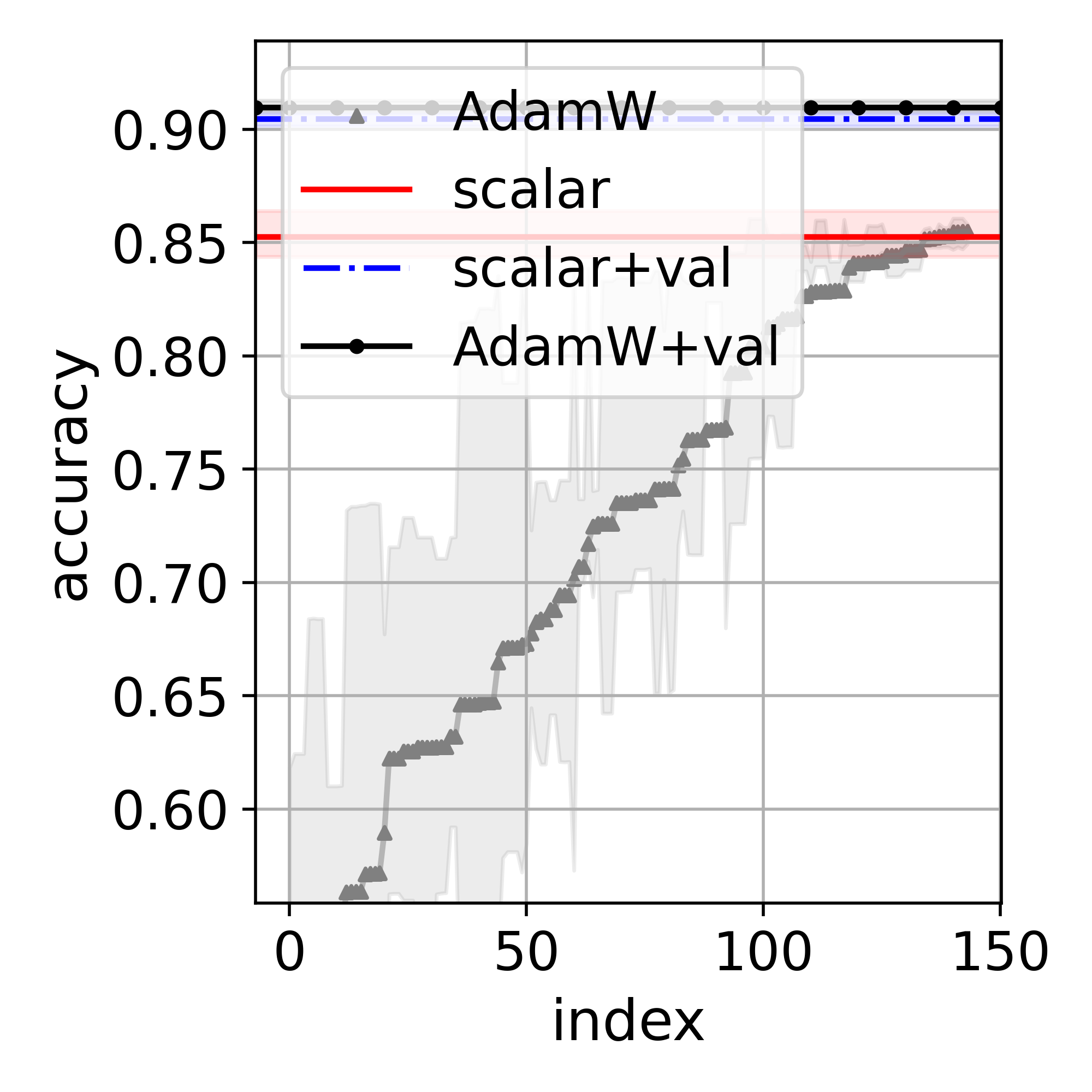}
 \caption{Citeseer}
 \end{subfigure}
 \hfill
 \begin{subfigure}[b]{0.3\textwidth}
 \centering
 \includegraphics[width=\textwidth]{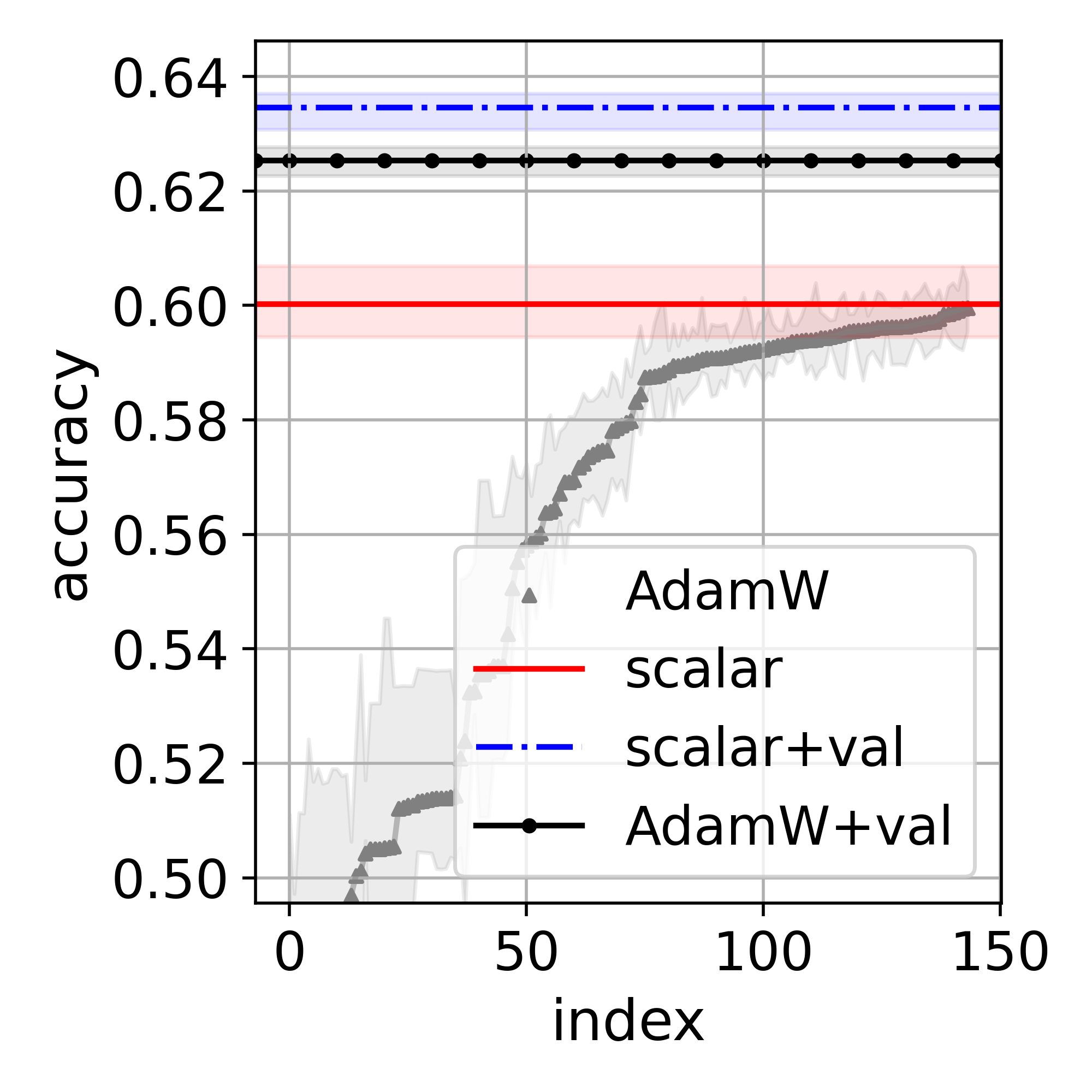}
 \caption{CoraFull}
 \end{subfigure}
 \hfill
 \\
 \begin{subfigure}[b]{0.3\textwidth}
 \centering
 \includegraphics[width=\textwidth]{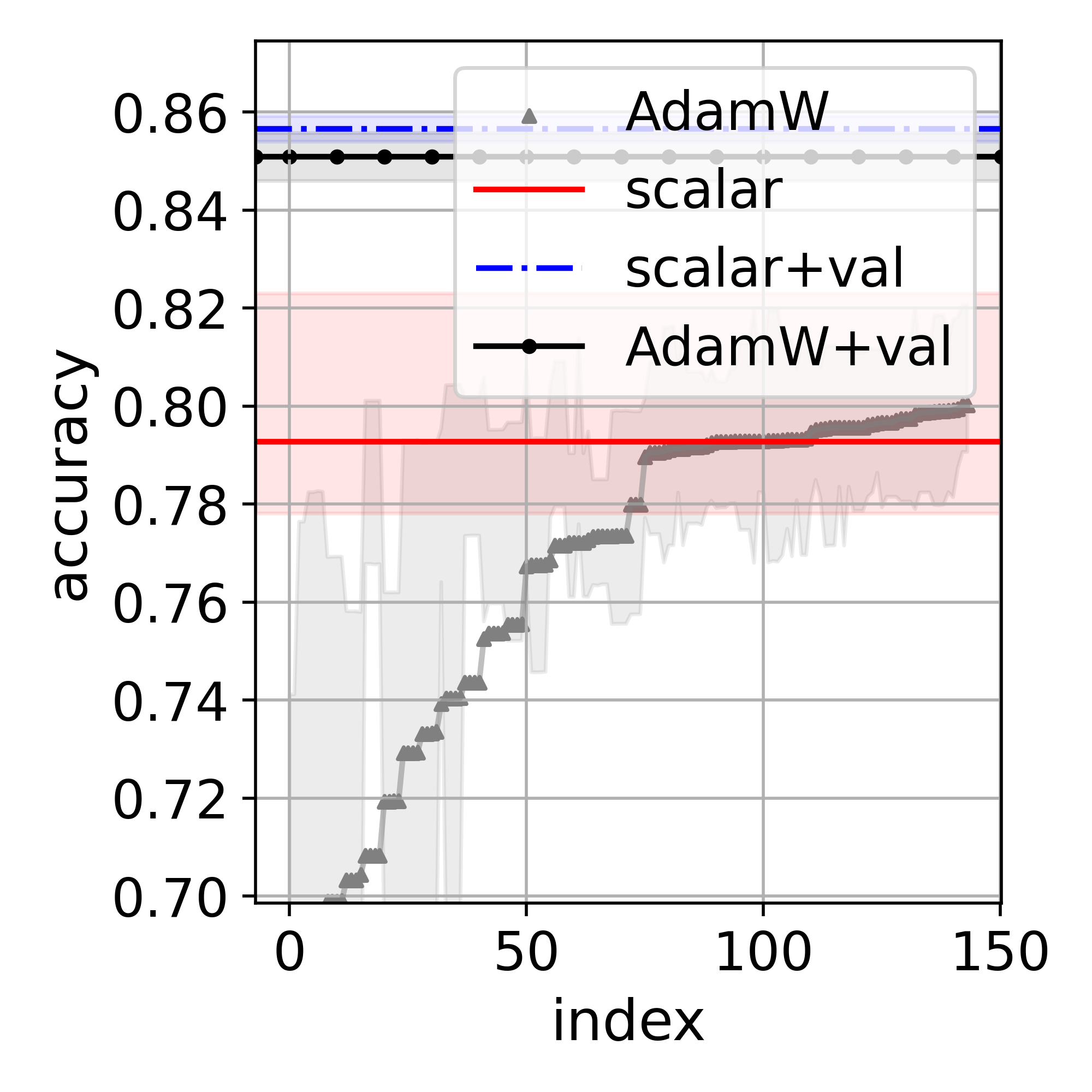}
 \caption{DBLP}
 \end{subfigure}
 ~~
 \begin{subfigure}[b]{0.3\textwidth}
 \centering
 \includegraphics[width=\textwidth]{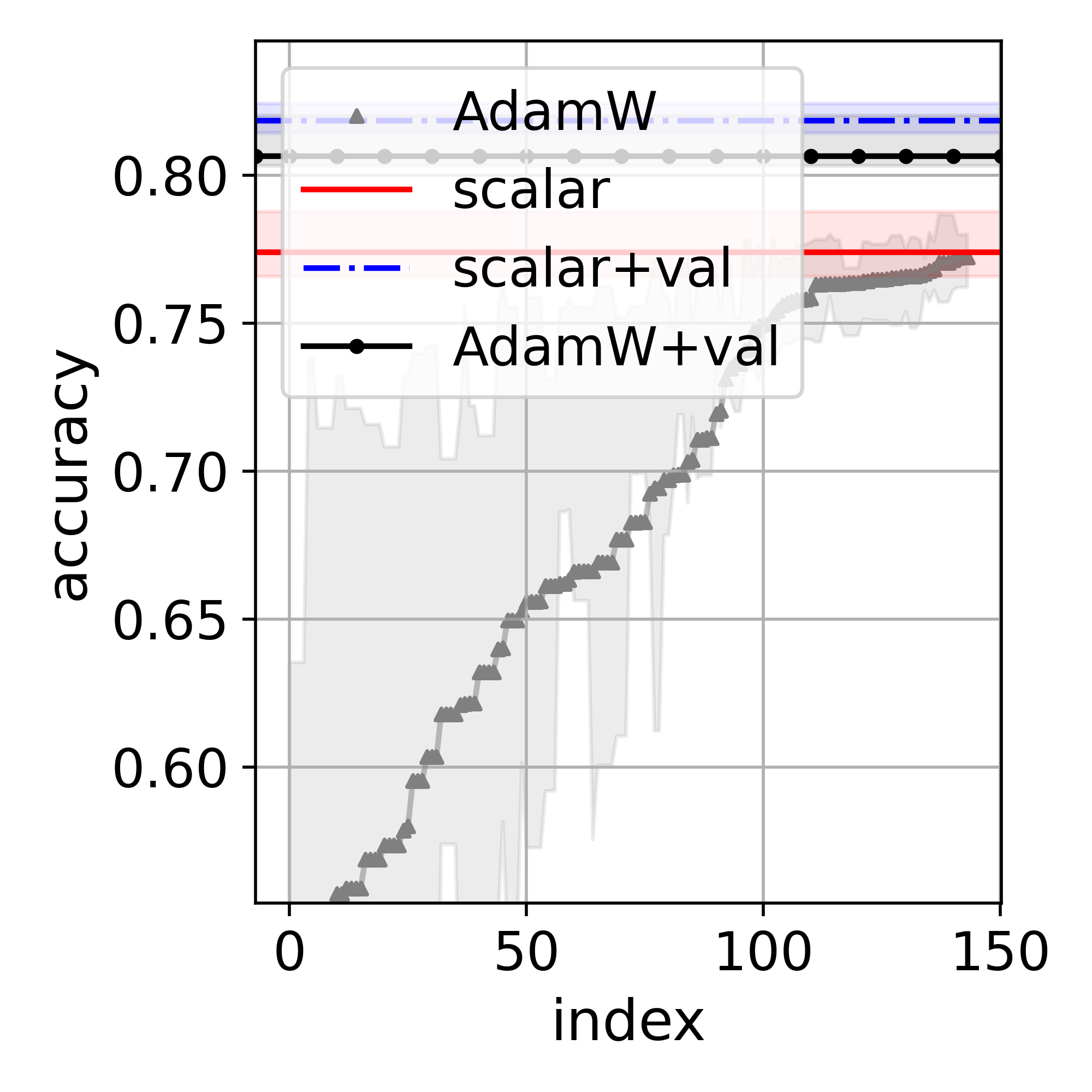}
 \caption{DBLP}
 \end{subfigure}
 \caption{Test accuracy of APPNP. The first and third quartiles construct the interval over the ten random splits. $\{$+val$\}$ denotes the performance with both training and validation datasets for training.}
 \label{fig:appnp}
\end{figure}

\begin{figure}[!ht]
 \centering
 \begin{subfigure}[b]{0.32\textwidth}
 \centering
 \includegraphics[width=\textwidth]{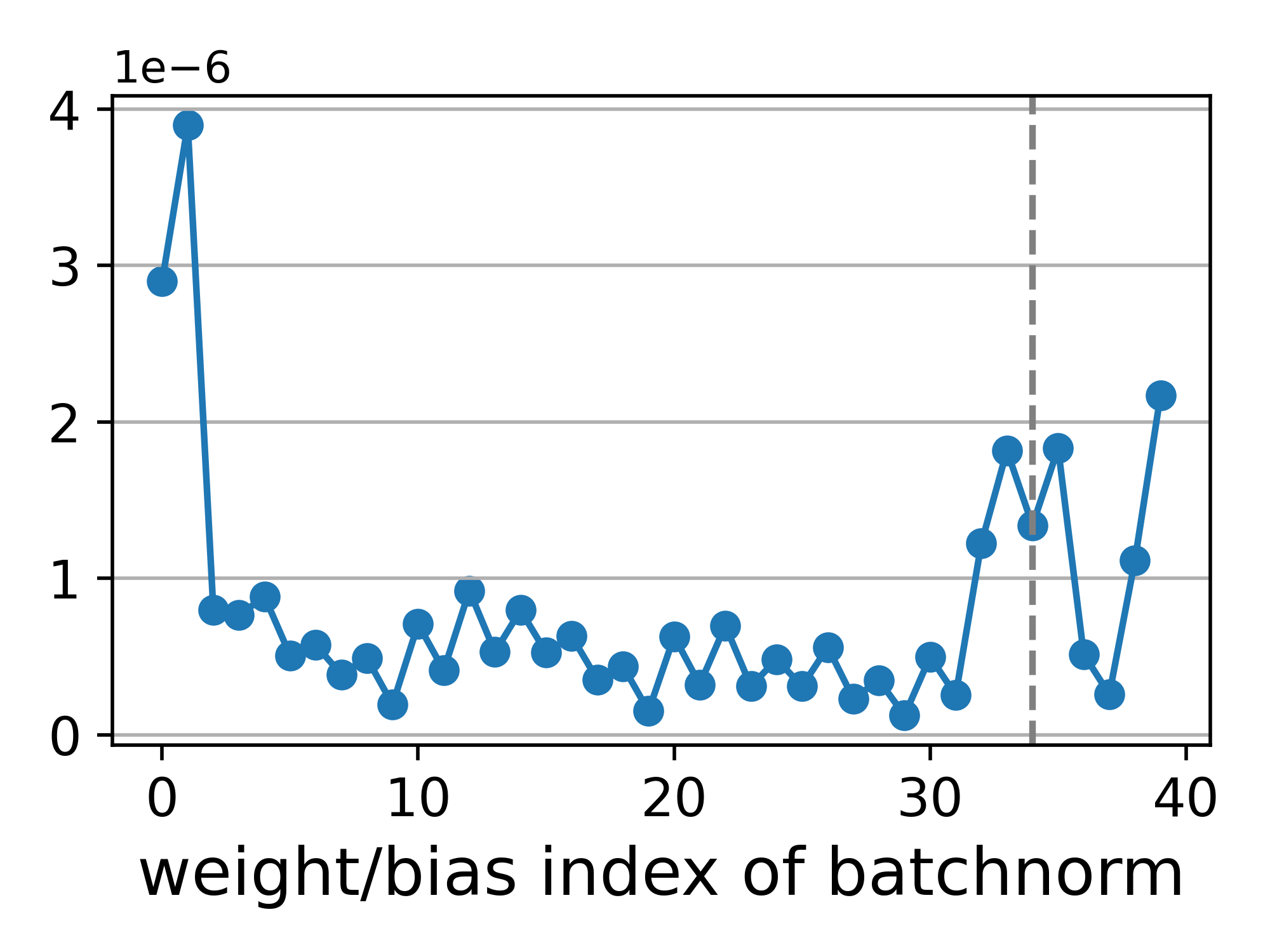}
 \caption{mean($\sigmal$) of batch-norm layers.}
 \end{subfigure}
 \hfill
 \begin{subfigure}[b]{0.32\textwidth}
 \centering
 \includegraphics[width=\textwidth]{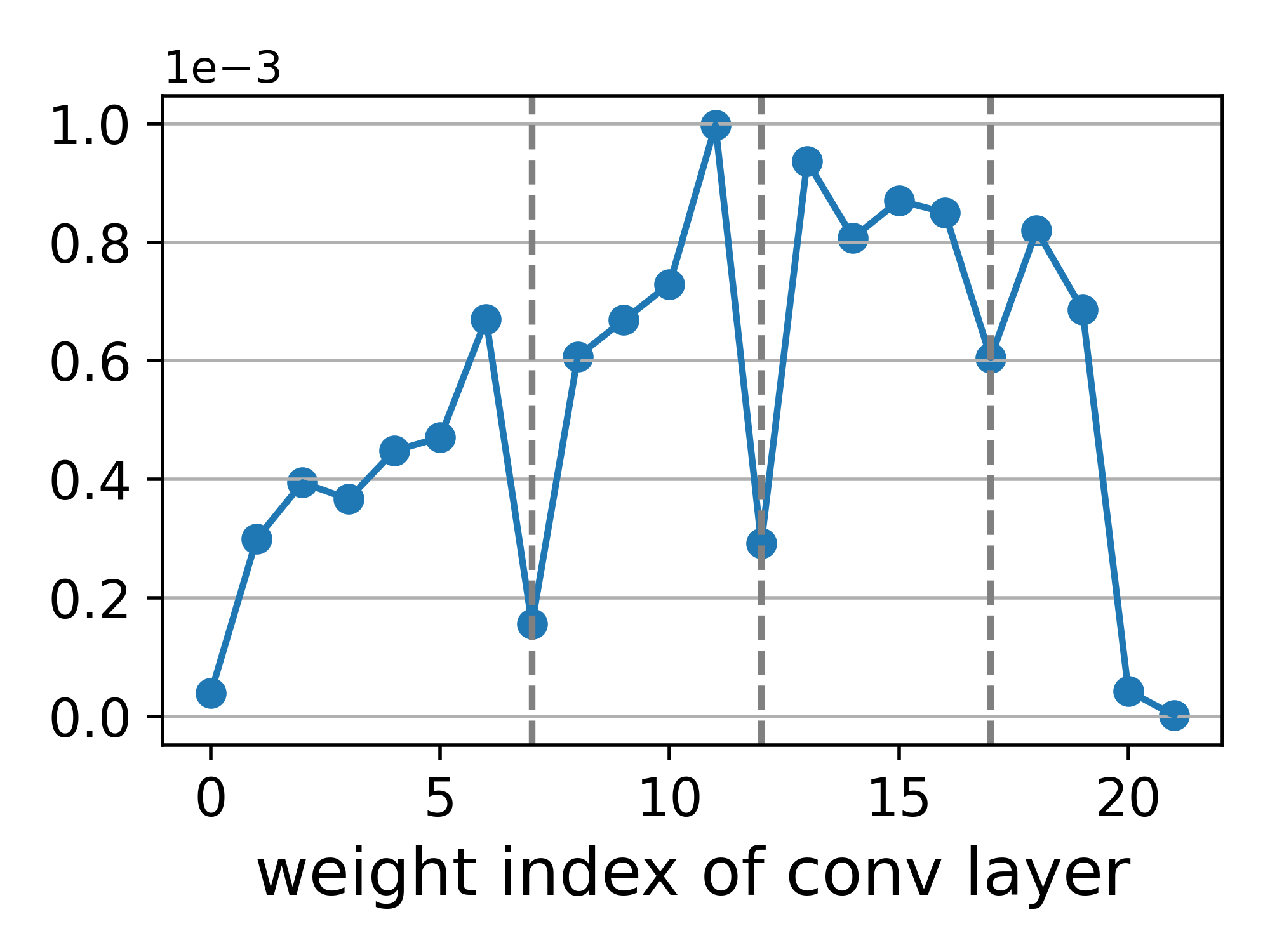}
 \caption{mean($\sigmal$) of convolution layers.}
 \end{subfigure}
 \hfill
 \begin{subfigure}[b]{0.32\textwidth}
 \centering
 \includegraphics[width=\textwidth]{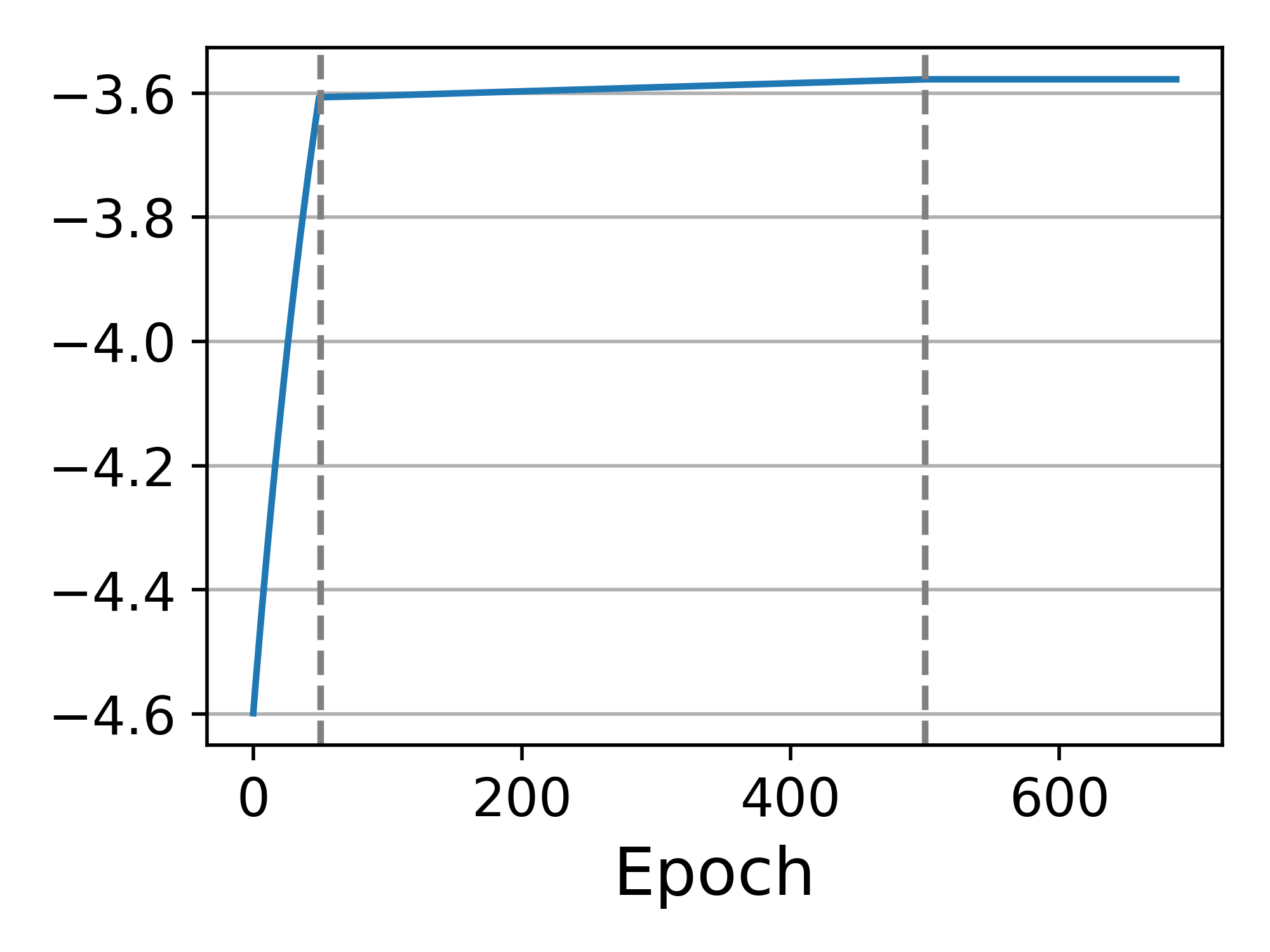}
 \caption{mean($\sigmal$) in training.}
 \end{subfigure}
 \\
 \begin{subfigure}[b]{0.32\textwidth}
 \centering
 \includegraphics[width=\textwidth]{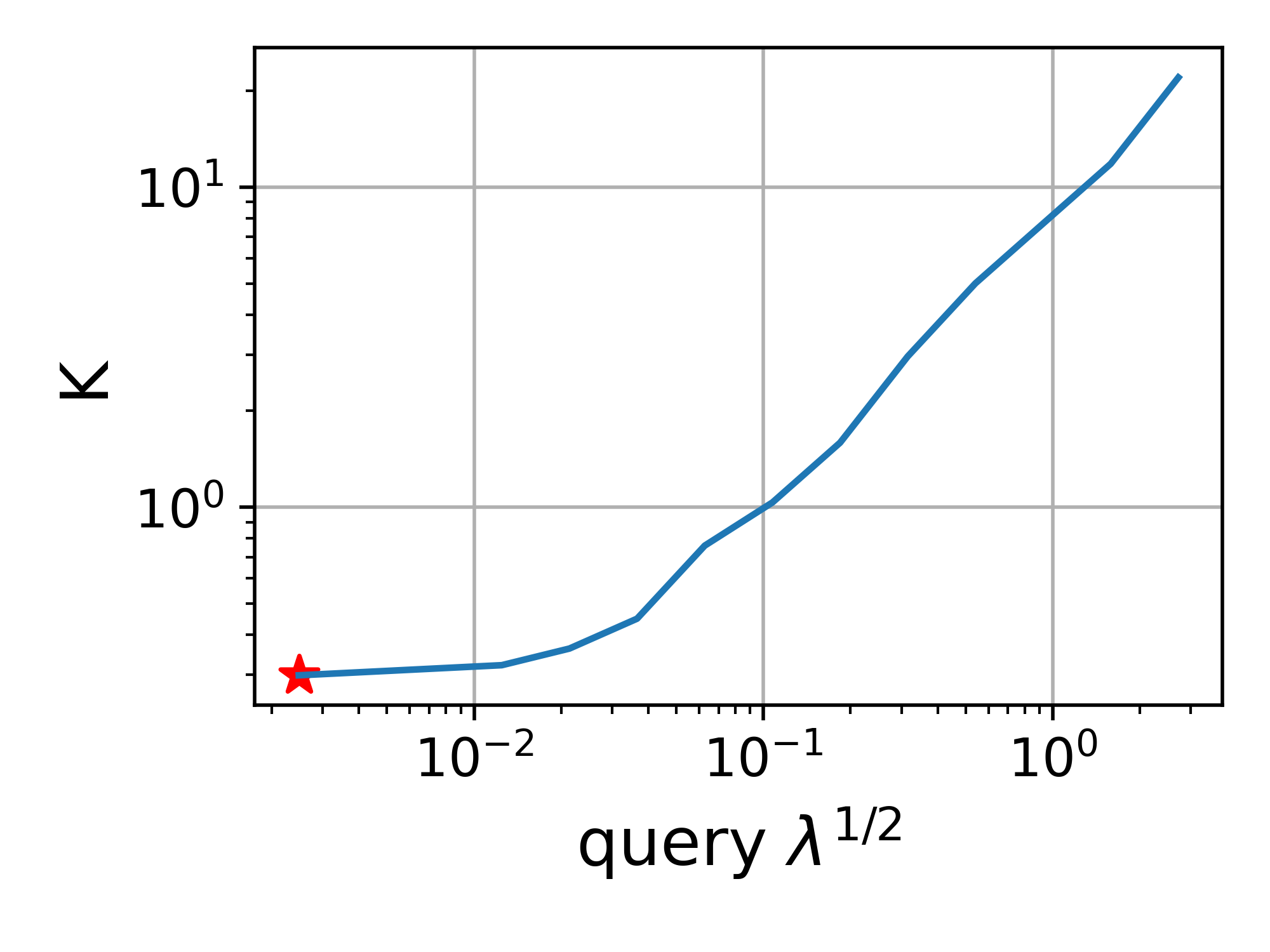}
 \caption{Function $\tilde{K}(\bar \lambdal)$.}
 \end{subfigure}
 ~~
 \begin{subfigure}[b]{0.32\textwidth}
 \centering
 \includegraphics[width=\textwidth]{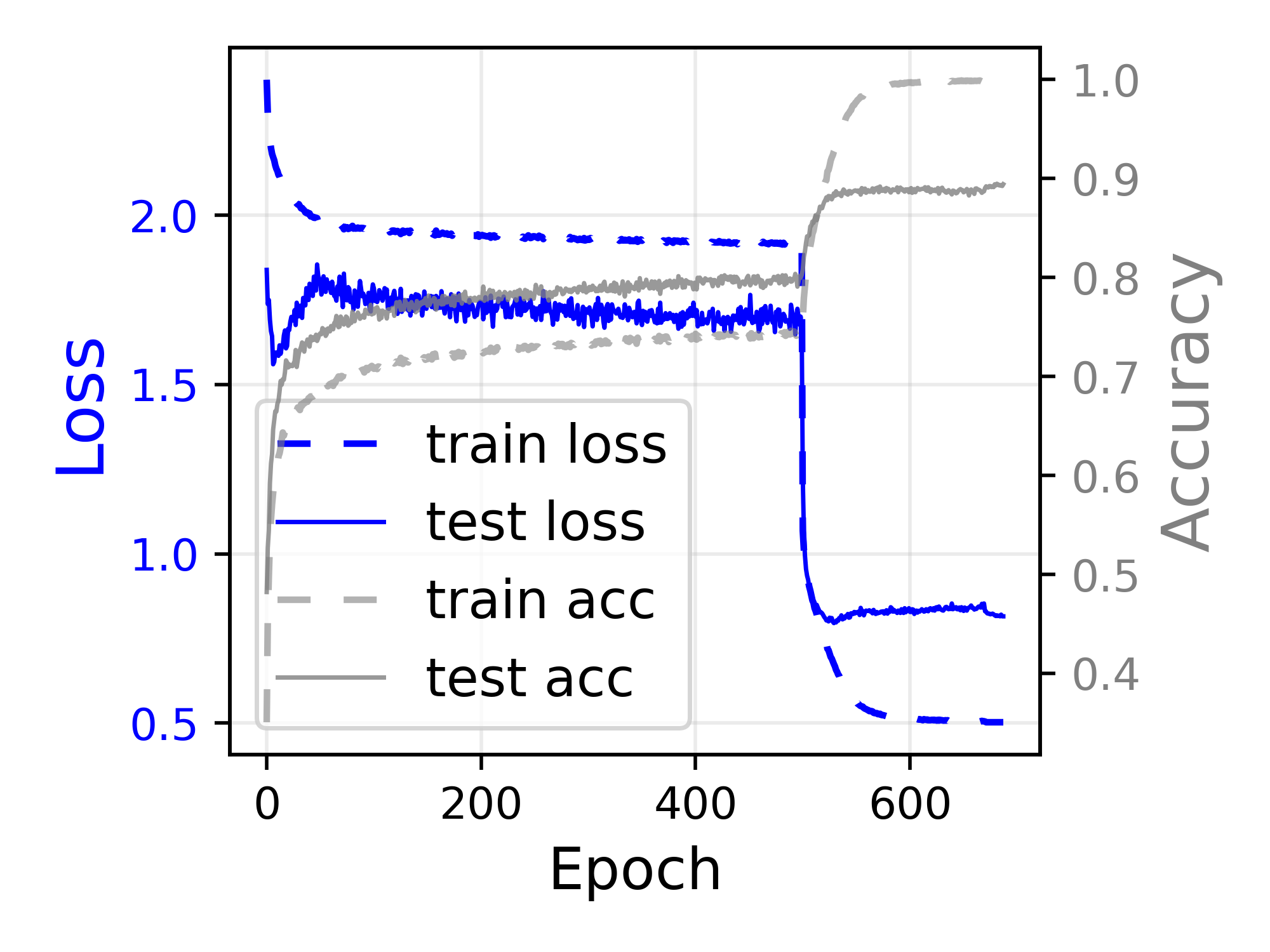}
 \caption{Training and testing process.}
 \end{subfigure}
 \caption{Training details of ResNet18 on CIFAR10. The red star denotes the final $K$.}
 \label{fig:resnet18_CIFAR10_exp}
\end{figure}

\begin{figure}[!ht]
 \centering
 \begin{subfigure}[b]{0.32\textwidth}
 \centering
 \includegraphics[width=\textwidth]{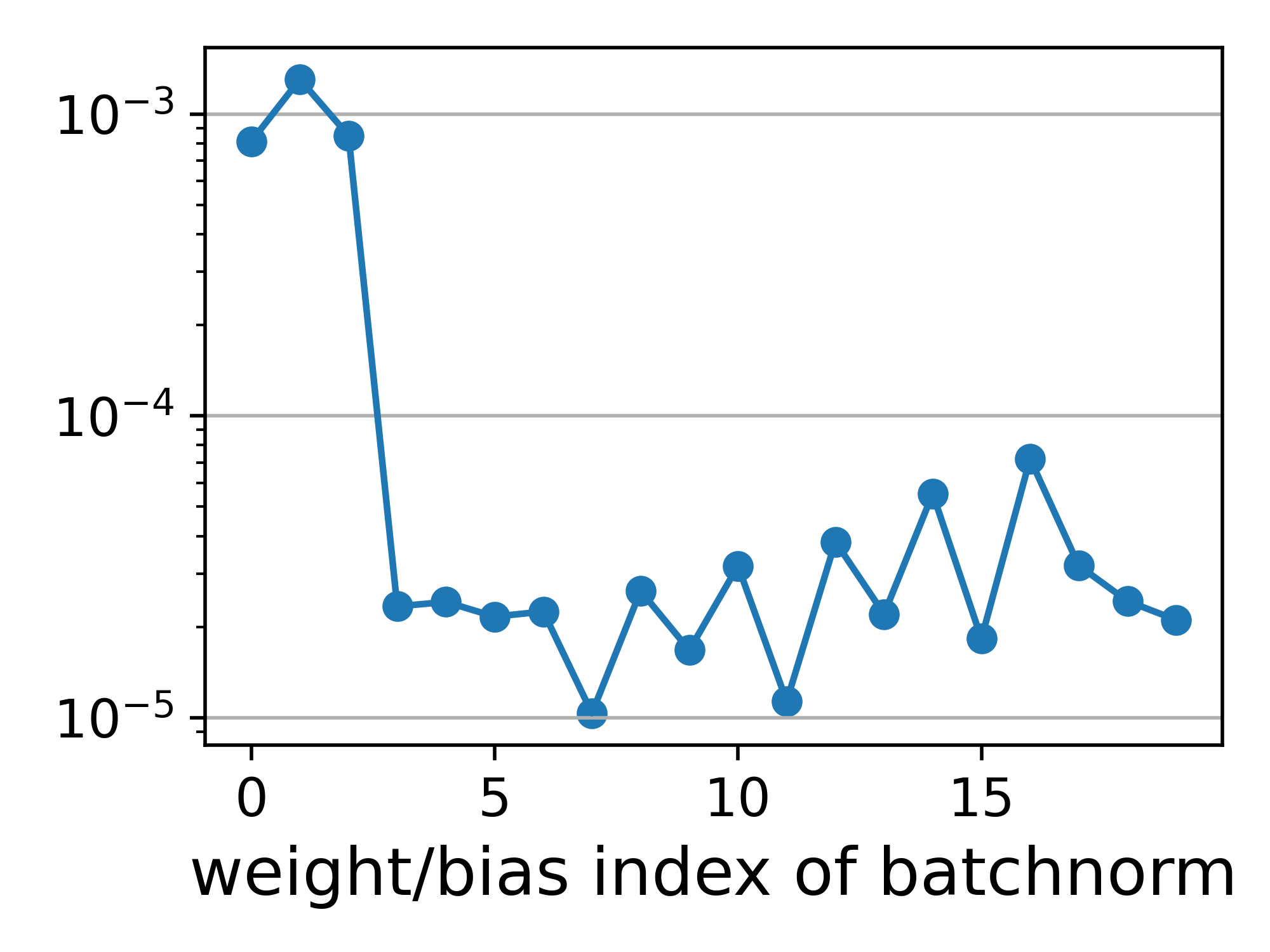}
 \caption{mean($\sigmal$) of batch-norm layers.}
 \end{subfigure}
 \hfill
 \begin{subfigure}[b]{0.32\textwidth}
 \centering
 \includegraphics[width=\textwidth]{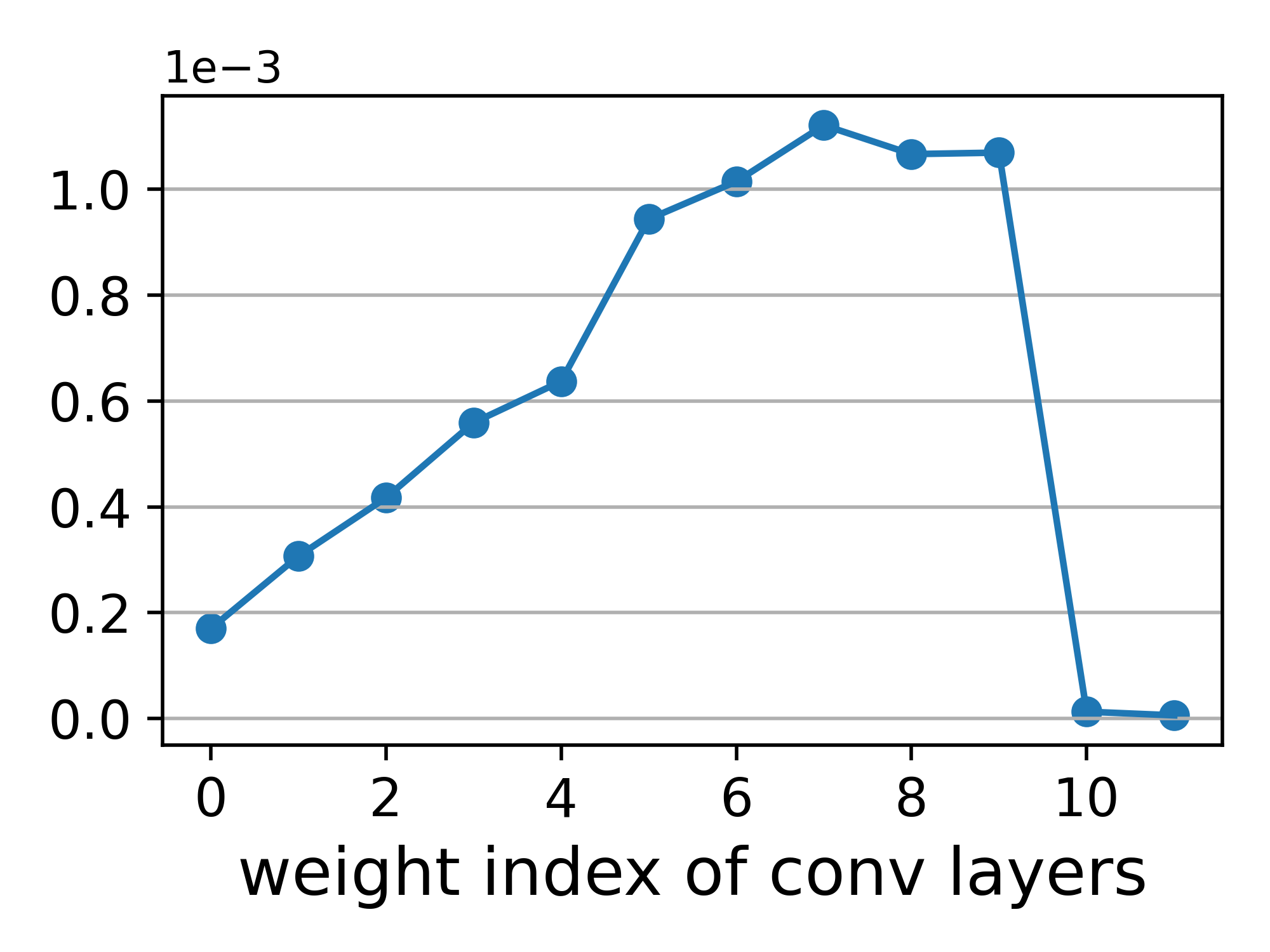}
 \caption{mean($\sigmal$) of convolution layers.}
 \end{subfigure}
 \hfill
 \begin{subfigure}[b]{0.32\textwidth}
 \centering
 \includegraphics[width=\textwidth]{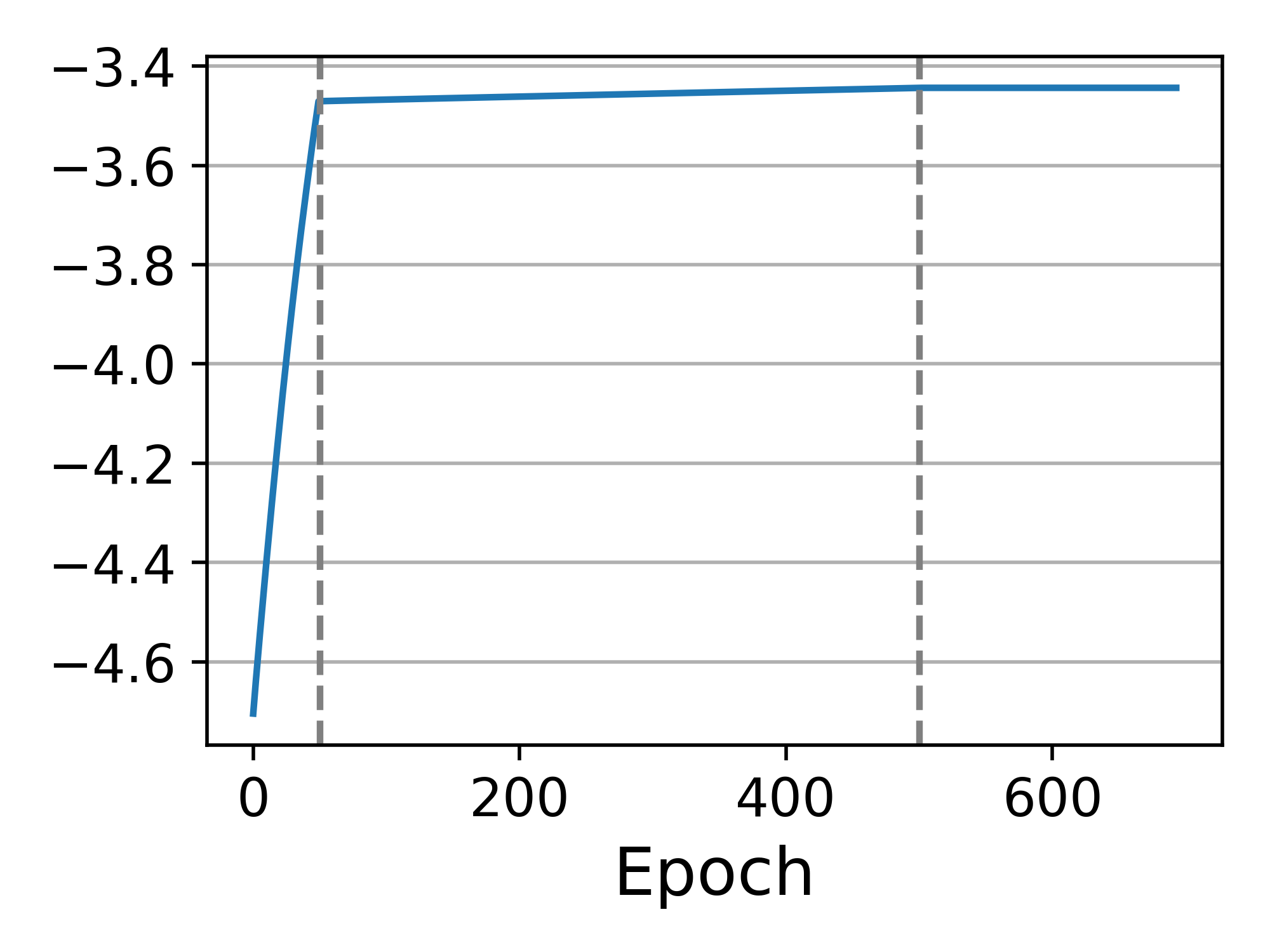}
 \caption{mean($\sigmal$) in training.}
 \end{subfigure}
 \\
 \begin{subfigure}[b]{0.32\textwidth}
 \centering
 \includegraphics[width=\textwidth]{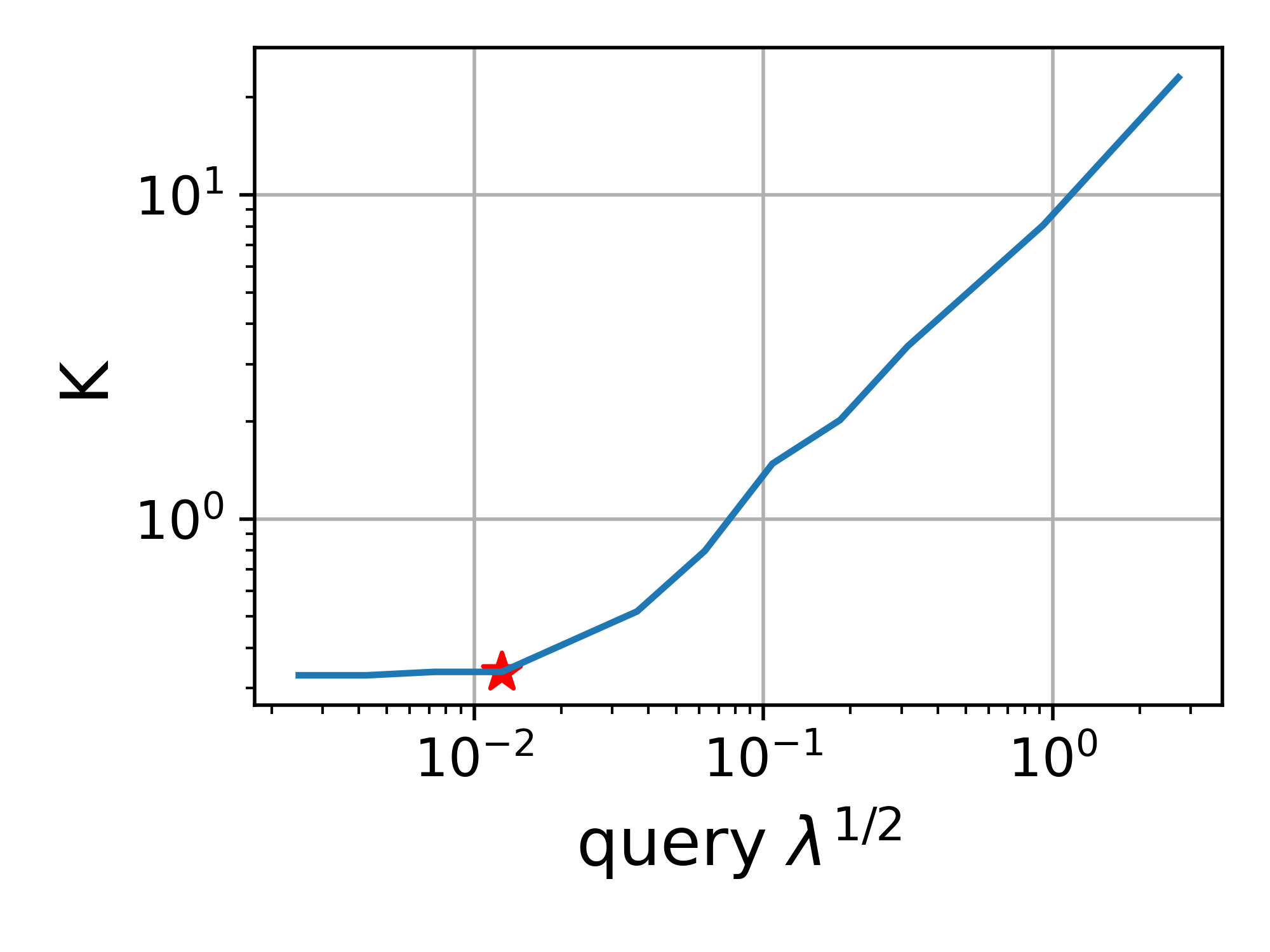}
 \caption{Function $\tilde{K}(\bar \lambdal)$.}
 \end{subfigure}
 ~~
 \begin{subfigure}[b]{0.32\textwidth}
 \centering
 \includegraphics[width=\textwidth]{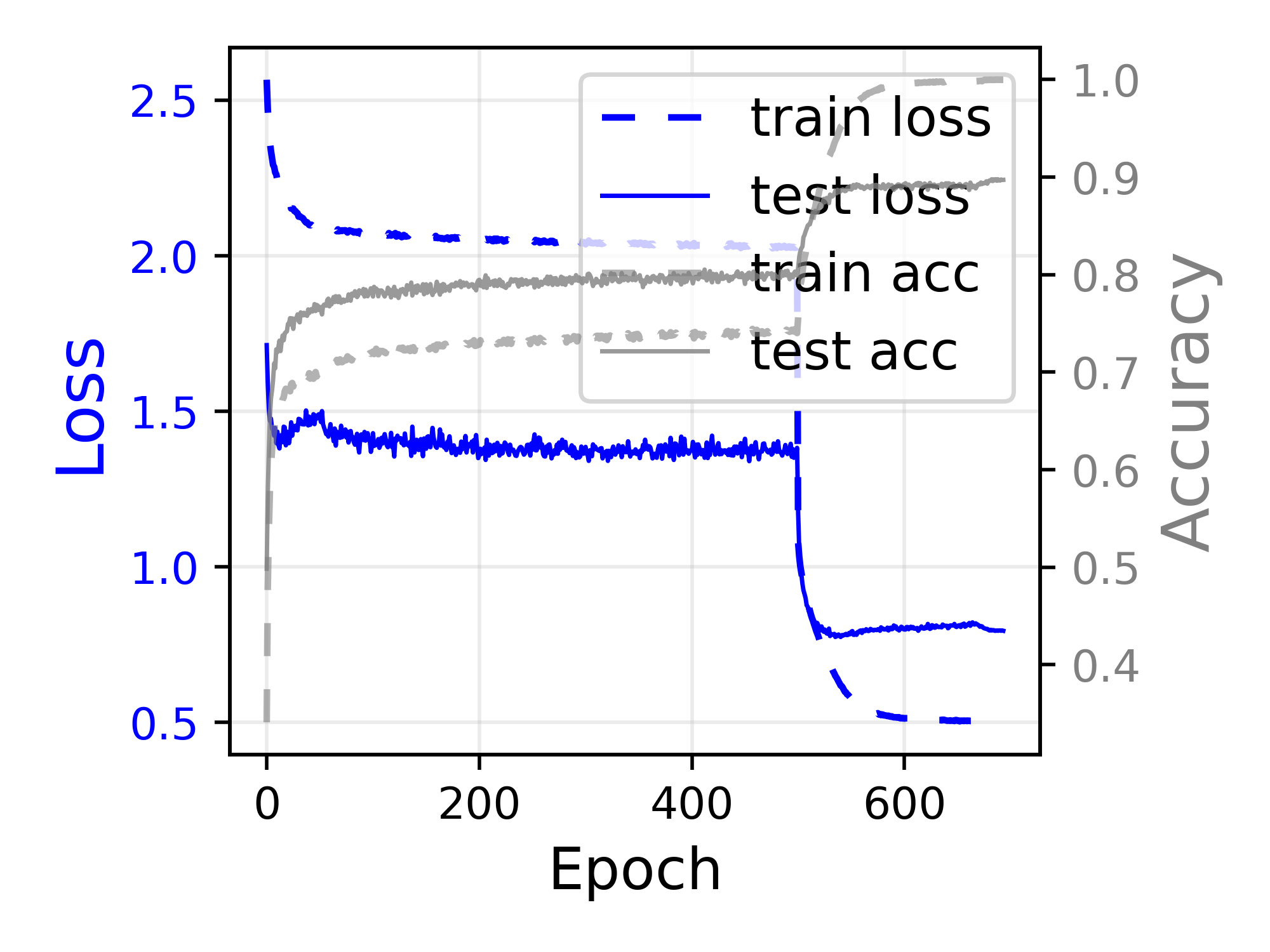}
 \caption{Training and testing process.}
 \end{subfigure}
 \caption{Training details of VGG13 on CIFAR10. The red star denotes the final $K$.}
 \label{fig:vgg13_CIFAR10_exp}
\end{figure}

% \begin{figure}[!ht]
% \centering
% \begin{subfigure}[b]{0.32\textwidth}
% \centering
% \includegraphics[width=\textwidth]{}
% \caption{mean($\sigmal$) of batch-norm layers.}
% \end{subfigure}
% \hfill
% \begin{subfigure}[b]{0.32\textwidth}
% \centering
% \includegraphics[width=\textwidth]{}
% \caption{mean($\sigmal$) of convolution layers.}
% \end{subfigure}
% \hfill
% \begin{subfigure}[b]{0.32\textwidth}
% \centering
% \includegraphics[width=\textwidth]{}
% \caption{mean($\sigmal$) in training.}
% \end{subfigure}
% \\
% \begin{subfigure}[b]{0.32\textwidth}
% \centering
% \includegraphics[width=\textwidth]{}
% \caption{Function $\tilde{K}(\bar \lambdal)$.}
% \end{subfigure}
% ~~
% \begin{subfigure}[b]{0.32\textwidth}
% \centering
% \includegraphics[width=\textwidth]{}
% \caption{Training and testing process.}
% \end{subfigure}
% \caption{Training details of VGG13 on CIFAR100. The red star denotes the final $K$.}
% \label{fig:vgg13_CIFAR100_exp}
% \end{figure}
\end{document}